\newcommand{\calE}{\ensuremath{\mathcal{E}}\xspace}
\newcommand{\calG}{\ensuremath{\mathcal{G}}\xspace}
\newcommand{\calS}{\ensuremath{\mathcal{S}}\xspace}
\newcommand{\calQ}{\ensuremath{\mathcal{Q}}\xspace}
\newcommand{\calT}{\ensuremath{\mathcal{T}}\xspace}
\newcommand{\calV}{\ensuremath{\mathcal{V}}\xspace}
\newcommand{\calP}{\ensuremath{\mathcal{P}}\xspace}
\newcommand{\R}{\mathbb{R}}
\newcommand{\Qfront}{\ensuremath{\calQ_{\rm frontier}}\xspace}
\newcommand{\Qupdate}{\ensuremath{\calQ_{\rm update}}\xspace}
\newcommand{\Qextend}{\ensuremath{\calQ_{\rm extend}}\xspace}
\newcommand{\Qrewire}{\ensuremath{\calQ_{\rm rewire}}\xspace}
\def\len#1{\ensuremath{w(#1)}}
\def\llen#1{\ensuremath{\hat{w} (#1)}}
\newcommand{\ab}{\algname{LRA*}}
\newcommand{\ib}{\ensuremath {\infty / \beta}- {\text{Lazy Dijkstra}}\xspace}
\newcommand{\vs}{\ensuremath v_{\rm source}}
\newcommand{\vg}{\ensuremath v_{\rm target}}
\newcommand{\SP}{\calS\calP}
\newcommand{\ie}{{i.e.}\xspace}
\def\naive{{na\"{\i}ve}\xspace}
\def\os#1{\textcolor{magenta}{#1}}
\newtheorem{lem}{Lemma}
\newtheorem{cor}{Corollary}
\newcommand{\ignore}[1]{}
\newcommand{\textVersion}[2]
{\ifthenelse{\boolean{ICRA} }{#1}{}\ifthenelse{\boolean{ARXIV}}{#2}{}}
\def\blind#1{
\ifthenelse{\boolean {ShowAuthors}}{#1}{}
}
\newcommand\algname[1]{\textsf{#1}\xspace}
\newcommand\lwastar{\algname{LWA*}}
\newcommand\astar{\algname{A*}}
\newcommand\lazySP{\algname{LazySP}}
\begin{document}
%
\title{Lazy Receding Horizon A* for \\Efficient Path Planning in Graphs with Expensive-to-Evaluate Edges}
\author{
Aditya Mandalika \\ University of Washington \\ \texttt{adityavk@cs.uw.edu}
\thanks{This work was (partially) funded by the National Science Foundation IIS (\#1409003), and the Office of Naval Research.}
\And
Oren Salzman \\ Carnegie Mellon University \\ \texttt{osalzman@andrew.cmu.edu}
\footnotemark[1]
\And
Siddhartha Srinivasa \\ University of Washington \\ \texttt{siddh@cs.uw.edu}
\footnotemark[1]}
\maketitle
\begin{abstract}
	Motion-planning problems, such as manipulation in cluttered environments, often require a collision-free shortest path to be computed quickly given a roadmap graph~$\calG$. 
	Typically, the computational cost of evaluating whether an edge of~$\calG$ is collision-free dominates the running time of search algorithms.	
	Algorithms such as Lazy Weighted A* (\lwastar) and \lazySP have been proposed to reduce the number of edge evaluations by employing a \emph{lazy lookahead} (one-step lookahead and infinite-step lookahead, respectively). However, this comes at the expense of additional graph operations: 
	the larger the lookahead, the more the graph operations that are typically required.
	We propose Lazy Receding-Horizon A* (\ab) to minimize the \emph{total planning time} by balancing edge evaluations and graph operations. Endowed with a lazy lookahead, \ab represents a family of lazy shortest-path graph-search algorithms that generalizes \lwastar and \lazySP.
	We analyze the theoretic properties of $\ab$ and demonstrate empirically that, in many cases, to minimize 
	the total planning time, the algorithm requires an intermediate lazy lookahead.
	Namely, using an intermediate lazy lookahead, our algorithm outperforms both \lwastar and \lazySP. 
	These experiments span simulated random worlds in $\mathbb{R}^2$ and $\mathbb{R}^4$, and manipulation problems using a 7-DOF manipulator.
\end{abstract}


\section{Introduction}
\label{sec:intro}
Robotic motion-planning has been widely studied in the last few decades. 
Since the problem is computationally hard~\cite{reif1979,S04}, 
a common approach is to apply sampling-based algorithms which typically construct a graph where vertices represent robot configurations and edges represent potential movements of the robot~\cite{CBHKKLT05,L06}. A shortest-path algorithm is then run to compute a path between two vertices on the graph. 

	There are numerous shortest-path algorithms, each suitable for a particular problem domain based on the computational efficiency of the algorithm.
	For example, \astar~\cite{HNR68} is optimal with respect to node expansions, and planning techniques such as partial expansions \cite{partialA*} and iterative deepening \cite{Korf85} are well-suited for problems with large graphs and large branching factors.

	However, in most robotic motion-planning problems, path validations and edge evaluations are the major source of computational cost~\cite{L06}. 
	Our work addresses these problems of quickly producing collision-free optimal paths, when the cost of evaluating an edge for collision is a computational bottleneck in the planning process.

	A common technique to reduce the computational cost of edge evaluation and consequently the planning time is to employ a \textit{lazy} approach.
	Two notable search-based planners that follow this paradigm are Lazy Weighted A* (\lwastar)~\cite{CPL14} and \lazySP \cite{DS16,HMPSS17}. 
	
	Both \lwastar and \lazySP assume there exists a lower bound on the weight of an edge that is efficient to compute. 
	This lower bound is used as a lookahead (formally defined in Section~\ref{sec:problem}) to guide the search without having to explicitly evaluate edges unless necessary. \lazySP uses an infinite-step lookahead which can be shown to minimize the number of edge evaluations but requires a large number of graph operations (node expansions, updating the shortest-path tree, etc.).
	On the other hand, \lwastar uses a one-step lookahead which may result in a larger number of edge evaluations compared to \lazySP but with much fewer graph operations. 
	
	Our key insight is that there should exist an optimal lookahead for a given environment, which balances the time for edge evaluations and graph operations, and minimizes the \textit{total planning time}. We make the following contributions:
\begin{enumerate}
\item We present Lazy Receding-Horizon A* (\ab), a family of lazy shortest-path algorithms parametrized 
by a \emph{lazy lookahead} (Sections~\ref{sec:problem} and~\ref{sec:alg}) which allows us to continuously
 interpolate between \lwastar and \lazySP, balancing edge evaluations and graph operations.

\item 	We analyze the theoretic properties of $\ab$  (Section~\ref{sec:proofs}).
	Part of our analysis proves that \lazySP is optimal with respect to minimizing edge evaluations thus closing a theoretic gap  left open in \lazySP~\cite{DS16}.

\item We demonstrate in Section \ref{sec:experiments}, the efficacy of our algorithm on a range of planning problems for simulated $\mathbb{R}^n$ worlds and robot
 manipulators. We show that $\ab$ outperforms both \lwastar and \lazySP by minimizing
 not just edge evaluations or graph operations but the \textit{total planning time}.
\end{enumerate}

\section{Related Work}
\label{sec:relatedWork}

A large number of motion-planning algorithms consist of 
(i)~constructing a graph, or a roadmap, embedded in the configuration space
and
(ii)~finding the shortest path in this graph.
The graph can be constructed in a preprocessing stage~\cite{kavraki96prm,KF11} or vertices and edges can be added in an incremental fashion~\cite{GSB15,SH15}.

In domains where edge evaluations are expensive and dominate the planning time, a \emph{lazy approach} is often employed~\cite{lazyPRM,hauser15lazy} wherein the graph is constructed \emph{without} testing if edges are collision-free. Instead, the search algorithm used on this graph is expected to evaluate only a subset of the edges in the roadmap and hence save computation time.
While standard search algorithms such as Dijkstra's~\cite{dijkstra} and \astar~\cite{HNR68} can be used,
specific search algorithms~\cite{CPL14,DS16} were designed for exactly such problems. They aim to further reduce the number of edge evaluations and thereby the planning time.

Alternative algorithms that reduce the number of edge evaluations have been studied.
One approach was by foregoing optimality and computing near-optimal paths~\cite{SH16,SPARS}.
Another approach was re-using information obtained from previous edge evaluations~\cite{BF16,CDS16,CSCS17}.


	In this paper, we propose an algorithm that makes use of a \emph{lazy lookahead} to guide the search and minimize the total planning time. 
	\ignore{and \emph{greediness}, which reflects the confidence in the quality of a lazily evaluated path.} 
	It is worth noting that the idea of a lookahead has previously been used in algorithms such as \textsf{RTA*}, \textsf{LRTA*}~\cite{KORF1990189} and \textsf{LSS-LRTA*}~\cite{koenig2009}.
	However, these algorithms use the lookahead in a different context by interleaving planning with execution before the shortest path to the goal has been completely computed.
	Using a lazy lookahead has also been considered in the control literature~\cite{kwon2006receding}.
Receding horizon optimization can be summarized as iteratively solving an optimal-control
problem over a fixed future interval.
Only the first step in the resulting optimal control
sequence is executed and the process is repeated after measuring the state that was reached.
Our lazy lookahead is analogous to the fixed horizon used by these algorithms. Additionally, the lazy lookahead can also be seen as a threshold that defines the extent to which (lazy) search is performed. This is similar to the Iterative Deepening version of \astar  (\textsf{IDA*}) \cite{korf1985} which performs a series of depth-first searches up to a (increasing) threshold over the solution cost.

\section{Algorithmic Background}
\label{sec:background}

\subsection{Single Source Shortest Path (SSSP) Problem}
\label{subsec:sssp}
Given a directed graph $\cal{G} = (\cal{V},\cal{E})$ with a cost function $w: \calE \rightarrow \R^+$ on its edges, the
Single Source Shortest Path (SSSP) problem is to find a path of minimum cost between two given vertices~$\vs$ and $\vg$.
Here, a path $P = (v_1, \ldots, v_k)$ on the graph is a sequence of vertices where
$\forall i, (v_i, v_{i+1}) \in \calE$. 
An edge $e = (u,v)$ belongs to a path if $\exists i \ s.t. \ u = v_i, \ v = v_{i+1}$. 
The \emph{cost} of a path is the sum of the weights of the edges along the path:
$$
\len{P} = \sum_{e \in P} w(e).
$$

\subsection{Solving the SSSP Problem}
\label{subsec:solve_sssp}
To solve the SSSP problem, algorithms such as Dijkstra~\cite{dijkstra} compute the shortest path by building a shortest-path tree $\calT$ rooted at $\vs$ and terminate once~$\vg$ is reached.
This is done by maintaining a minimal-cost priority queue~$\calQ$ of nodes called the OPEN list.
Each node~$\tau_u$ is associated with a vertex $u$ and a pointer to~$u$'s parent in $\calT$.
The nodes are ordered in~$\calQ$ according to their cost-to-come \ie, the cost to reach $u$ from $\vs$ in $\calT$.

The algorithm begins with $\tau_{\vs}$ (associated with $\vs$) in $\calQ$ with a cost-to-come of $0$. All other nodes are initialized with a cost-to-come of $\infty$. At each iteration, the node~$\tau_u$ with the minimal cost-to-come is removed from~$\calQ$ and \emph{expanded}, wherein the algorithm considers each of $u$'s neighbours~$v$, and evaluates if the path to reach $v$ through $u$ is cheaper than $v$'s current cost-to-come.
If so, then $\tau_v$'s parent is set to be $\tau_u$ (an operation we refer to as ``rewiring'') and is inserted into~$\calQ$.

The search, \ie, the growth of $\calT$, can be biased towards~$\vg$ using a heuristic function $h: \calV \rightarrow \R$ which estimates the cost-to-go, the cost to reach $\vg$ from vertex $v \in \calV$.
It can be shown that under mild conditions on $h$, if $\calQ$ is ordered according to the sum of the cost-to-come and the estimated cost-to-go, this algorithm, called \astar, \emph{expands} fewer nodes than any other search algorithm with the same heuristic~\cite{DP83,P84}.

A key observation in the described approach for Dijkstra or \astar is that for \emph{every} node in $\calQ$, the algorithm computed the cost $w(e)$ of the edge to reach this node from its current parent, a process we will refer to as \emph{evaluating} an edge.
Edge evaluation occurs for all edges leading to nodes in the OPEN list~$\calQ$ irrespective of whether there exists a better parent to the node or whether the node will subsequently be expanded for search.
In problem domains where computing the weight of an edge is an expensive operation, such as in robotic applications, this is highly inefficient. To alleviate this problem, we can apply \emph{lazy} approaches for edge evaluations that can dramatically reduce the number of edges evaluated.

\subsection{Computing SSSP via Lazy Computation}
\label{subsec:lazyAlgs}
In problem domains where computing $w(e)$ is expensive, we assume the existence of a function $\hat{w}: \calE \rightarrow \R^+$ which 
(i)~is efficient to compute and 
(ii)~provides a lower bound on the true cost of an edge \ie, $\forall e \in \calE, \hat{w}(e) \leq {w}(e)$.
We call $\hat{w}$ a \emph{lazy estimate} of~$w$.

Given such a function, \citeauthor{CPL14} proposed \lwastar which modifies \astar as follows: Each edge $(u,v)$ is evaluated, \ie, $w(u,v)$ is computed, only when the algorithm  believes that $\tau_v$ should be the next node to be expanded.
Specifically, each node in $\calQ$ is augmented with a flag stating whether the edge leading to this vertex has been evaluated or not. 
Initially, every edge is given the estimated value computed using $\hat{w}$ for its cost and this is used to order the nodes in $\calQ$.
Only when a node is selected for expansion, is the true cost of the edge leading to it evaluated. 
After the edge is evaluated, its cost may be found to be higher than the lazy estimate, or even $\infty$
(if the edge is untraversable---a notion we will refer to as ``in collision''). In such cases, we simply discard the node. 
If it is valid, we now know the true cost of the edge, as well as the true cost-to-come for this vertex from its current parent. The node is marked to have its true cost determined and is inserted into $\calQ$ again. Only when this node is chosen from~$\calQ$ the second time will it actually be expanded to generate paths to its neighbours. The algorithm terminates when $\vg$ is removed from $\calQ$ for the second time.

This approach increases the size of $\calQ$ as there can be multiple nodes associated with every vertex, one for each incoming edge. Since the true cost of an edge is unknown until evaluated, it is essential that all these nodes be stored. Although this causes an increase in computational complexity and in the memory footprint of the algorithm, the approach can lead to fewer edges evaluated and hence reduce the overall running time of the search.

\lwastar uses a one-step lookahead to reduce the number of edge evaluations.
Namely, every path in the shortest-path tree~$\calT$ may contain one edge (the last) which has only been evaluated lazily.
Taking this idea to the limit, \citeauthor{DS16} proposed the Lazy Shortest Path, or \lazySP algorithm which uses an infinite-step lookahead.
Specifically, it runs a series of shortest-path searches on the graph defined using $\hat{w}$ for all unevaluated edges.
At each iteration, it chooses the shortest path to the goal and evaluates edges along this path\footnote{In the original exposition of LazySP, the method for which edges are evaluated along the shortest path is determined using a procedure referred to as an \emph{edge selector}. In our work we consider the most natural edge selector, called forward edge selector. Here, the first unevaluated edge closest to the source is evaluated.}.
When an edge is evaluated, the algorithm considers the evaluated true cost of the edge for subsequent iterations of the search. Hence, \lazySP evaluates only those edges which potentially lie along the shortest path to the goal. The algorithm terminates when all the edges along the current shortest path have been evaluated to be valid (namely, not in collision).

A \naive implementation of \lazySP would require running a complete shortest-path search every iteration. However, the search tree computed in the previous iterations can be reused:
When an edge is found to be in collision, the search tree computed in previous iteration is locally updated using dynamic shortest-path algorithms such as \textsf{LPA*}~\cite{KLF04}. 


\subsection{Motivation}
\label{subsec:motivation}
As described in Section~\ref{subsec:lazyAlgs}, \lwastar and \lazySP attempt to reduce the number of edge evaluations by delaying evaluations until necessary. 
As we shall prove in Section~\ref{sec:proofs}, \lazySP (with a lookahead of infinity), minimizes the number of edge evaluations, at the expense of greater graph operations. When an edge is found to be in collision, the entire subtree emanating from that edge needs to be updated (a process we will refer to as \emph{rewiring}) to find the new shortest path to each node in the subtree. On the other hand, \lwastar which has a lookahead of one, evaluates a larger number of edges relative to \lazySP but does not perform any rewiring or repairing. When an edge to a node is found to be invalid, the node is simply discarded and the algorithm continues. 

Therefore, these two algorithms, \lwastar and \lazySP, with a one-step and an infinite-step lookahead respectively, form two extremals to an entire spectrum of potential lazy-search algorithms based on the lookahead chosen. We aim to leverage the advantage that the lazy lookahead can provide to interpolate between \lwastar and \lazySP, and strike a balance between edge evaluations and graph operations to minimize the total planning time.

\section{Problem Formulation}
\label{sec:problem}
	In this section we formally define our problem. To make this section self contained, we repeat definitions that were mentioned in passing in the previous sections.
	We consider the problem of finding the shortest path between source and target vertices $\vs$ and $\vg$ on a given graph $\cal{G} = (\cal{V},\cal{E})$. 
	Since we are motivated by robotic applications where edge evaluation, \ie, checking if the robot collides with its environment while moving along an edge, is expensive, we do not build the graph $\calG$ with just feasible edges.	
	Rather, as in the lazy motion-planning paradigm, the idea is to construct a graph with edges \textit{assumed} to be feasible and delay the evaluation to only when absolutely necessary.
	 
	For simplicity, we assume the lazy estimate $\hat{w}$ to tightly estimate the true cost $w$ for edges that are collision-free\footnote{We discuss relaxing the assumption that $\hat{w}(e)$ tightly estimates $w(e$) in Section~\ref{sec:futureWork}. 
	In the general case, we require only that it is a lower bound \ie, $\forall e \in \calE, \hat{w}(e) \leq {w}(e)$.}. 
	Therefore $\hat{w}$ is a \emph{lazy estimate} of~$w$ such that 
\begin{equation}
w(e) = 
     \begin{cases}
        \hat{w}(e) &\quad\text{if } e \text{ is not in collision,}\\
        \infty  &\quad\text{if } e \text{ is in collision.}         
     \end{cases}
     \label{wEst}
\end{equation} 

	We use the cost function $w$ and its lazy estimate $\hat{w}$ to define the cost of a path on the graph. The \emph{(true) cost} of a path~$P$ is the sum of the weights of the edges along~$P$:
$$
\len{P} = \sum_{e \in P} w(e).
$$
Similarly, the \emph{lazy cost} of a path is the sum of the lazy estimates of the edges along the path:
$$
\llen{P} = \sum_{e \in P} \hat{w}(e).
$$
Our algorithm will make use of paths which are only partially evaluated. 
Specifically, every path~$P$ will be a concatenation of two paths 
$P = P_{\text{head}} \cdot P_{\text{tail}}$ (here, ($\cdot$) denotes the concatenation operator).
Edges belonging to $P_{\text{head}}$ will have been evaluated and known to be collision-free while 
edges belonging to~$P_{\text{tail}}$ will only be lazily evaluated.
Notice that~$P_{\text{tail}}$  may be empty.
We also define the \textit{estimated total cost} of a path $P = P_{\text{head}} \cdot P_{\text{tail}}$ as:
$$
\bar{w}(P) = w(P_{\text{head}}) + \hat{w}(P_{\text{tail}}).
$$

	Although $\hat{w}$ helps guide the search of a \textit{lazy} algorithm, as in \lwastar or \lazySP, as noted in Section~\ref{subsec:motivation}, it can lead to additional computational overhead when the estimate is wrong, \ie, when the search algorithm encounters edges in collision. In this work we balance this computational overhead with the number of edge evaluations, by endowing our search algorithm with a \emph{lookahead}~$\alpha$.
In essence, the \emph{lookahead} controls the extent 
to which we use~$\hat{w}$ to guide our search. \ignore{
and~$\beta$ controls the amount we exploit the data computed using lookahead~$\alpha$.} 

As we will see later in Section~\ref{sec:futureWork}, the lookahead $\alpha$ can be interpreted in various ways. 
However, in this paper we interpret the lookahead as the  \textit{number of edges} over which we use $\hat{w}$ to guide our search. 

\section{Lazy Receding-Horizon A* (\ab)}
\label{sec:alg}


\subsection{Algorithmic details}

Our algorithm maintains a lazy shortest-path tree~$\calT$ over the graph~$\calG$.
Every node in~$\calT$ is associated with a vertex of~$\calG$ and the tree is rooted at the node $\tau_{\rm source}$ associated with the vertex~$\vs$. 
We define the node entry $\tau \in \calT$ as $\tau = (u, p, c, \ell, b)$,
were
$u[\tau] = u$ is the vertex associated with~$\tau$,
$p[\tau] = p$ is $\tau$'s parent in $\calT$ which can be backtracked to compute a path $P[\tau]$ from $\vs$ to $u$.
The node~$\tau$ also stores
$c[\tau] = c$ 
and
$\ell[\tau] = \ell$
which are the costs of the evaluated and lazily-evaluated portions of $P[\tau]$, respectively. 
Namely, 
$c[\tau] = w(P[\tau]_{\text {head}})$ and
$\ell[\tau] = \hat{w}(P[\tau]_{\text {tail}})$.
Finally, 
$b[\tau] = b$ is the budget of~$P[\tau]$ \ie, the number of edges that have been lazily evaluated in $P[\tau]$ or equivalently, the number of edges in $P[\tau]_{\text {tail}}$. 
Given a lookahead $\alpha$, our algorithm will maintain shortest paths to a set of nodes represented by the search tree~$\calT$, where $\forall \tau \in \calT$, $b[\tau] \leq \alpha$. The budget of any node in~$\calT$ never exceeds $\alpha$. 

Given a node $\tau \in \calT$, 
we call it a \emph{frontier} node if $b[\tau] =~\alpha$
($P[\tau]_{\rm {tail}}$ has exactly $\alpha$ edges).
Additionally, $\tau$ is said to belong to the \emph{$\alpha$-band} if $b[\tau] > 0 $.
We call $\tau$ a \emph{border} node if it does not belong to the $\alpha$-band but one of its children does. Finally,~$\tau$ is called a \textit{leaf} node if it has children in $\calG$ but not in $\calT$. Note that all frontier nodes are leaf nodes. See Fig.~\ref{fig:taxonomy} for reference.

\textVersion{The algorithm maintains a priority queue $\Qfront$ that stores the frontier nodes ordered according to the estimated cost-to-come $\bar{w}(P[\tau]) = c[\tau] + \ell[\tau]$. This queue is used to choose which path to evaluate at each iteration.}{
The algorithm maintains four priority queues to efficiently process the different kinds of nodes, each of which is ordered according to the estimated cost-to-come $\bar{w}(P[\tau]) = c[\tau] + \ell[\tau]$.
Specifically, we will make use of the following queues:
\begin{itemize}
	\item $\Qfront$ stores the frontier nodes.
This queue is used to choose which path to evaluate at each iteration.

	\item $\Qextend$ stores leaf nodes that have a budget smaller than~$\alpha$. \ignore{It is used to \emph{extend} the $\alpha$-band such that all leaf nodes are frontier nodes.}

	\item $\Qrewire$ stores nodes that require rewiring. It is used to update the structure of $\calT$ when an edge is evaluated to be in collision.

	\item $\Qupdate$ stores the nodes that have children in $\calT$ whose entries need to be updated. It is used to update the structure of $\calT$ after an edge is found to be collision-free.
\end{itemize}}
For ease of exposition, we present a high-level description of the algorithm (Alg.~\ref{alg:highLevel}) \textVersion{}{which uses only one of these queues}.
For detailed pseudo-code, see%
\textVersion{
supplementary material.}
{
Appendix~\ref{app:code}.
}

\subsection{Algorithm Description}
	Lazy Receding-Horizon A* (\ab) begins by initializing the node $\tau_{\rm source}$ associated with $\vs$ (line~2).
	Our algorithm maintains the invariant that at the beginning  of any iteration all leaf nodes are frontier nodes. 
	When the algorithm starts,~$\tau_{\vs}$ is a leaf node with $b[\tau_{\vs}] = 0 < \alpha$. 
	Therefore we extend the $\alpha$-band (lines~3-4) and consequently the search tree $\calT$, adding all the frontier nodes to $\Qfront$ (line~5).

%
%
\begin{algorithm}[tb]
\caption{$\texttt{LRA}^*(\calG,~\vs,~\vg,~\alpha$)}
\label{alg:highLevel}	
\begin{algorithmic}[1]
\State $\Qfront,~\calT := \emptyset$ \Comment{Initialization}
\State insert $\tau_{\vs} = (\vs,~\text{NIL},~0,~0,~0)$ into $\calT$
\vspace{2mm}
\For{each leaf node $\tau \in \calT$} \Comment{Extend $\alpha$-band} 
	\State add all nodes at distance $(\alpha - b[\tau])$ edges into $\calT$
\EndFor
\State insert all frontier nodes in $\calT$ into $\Qfront$ 
\vspace{2mm}
\While{$\Qfront$ is not empty} \Comment{Search}
	\State remove $\tau$ with minimal key $\bar{w}(\tau)$ from $\Qfront$
	\State evaluate first edge $(u,v)$ along $P[\tau]_{\text{tail}}$ \Comment{Expensive} 
	\If{$(u,v)$ is collision-free}
		\State update $\tau_v$
		\If{$v = \vg$} 
			\State \textbf{return} $P[\tau_{\vg}]$
		\EndIf  	
		\State update descendants $\tau$ of $\tau_v$ s.t $\tau \in \calT$
	\Else \Comment{Edge is in collision}
		\State remove edge $(u,v)$ from graph
	\For{each descendant $\tau$ of $\tau_v$ s.t $\tau \in \calT$}
		\State rewire $\tau$ to the best parent $\tau' \in~$\calT, $\tau' \neq \tau_{\vg}$
	\EndFor
	\EndIf   

		
	\State \textbf{repeat} steps 3-5 to extend the $\alpha$-band
\EndWhile
\State \textbf{return} failure
\end{algorithmic}
\end{algorithm}

\begin{figure*}[t]%
	\centering
	\begin{subfigure}[h]{0.24\textwidth}
	  	\centering
  		  \includegraphics[width=0.9\textwidth]{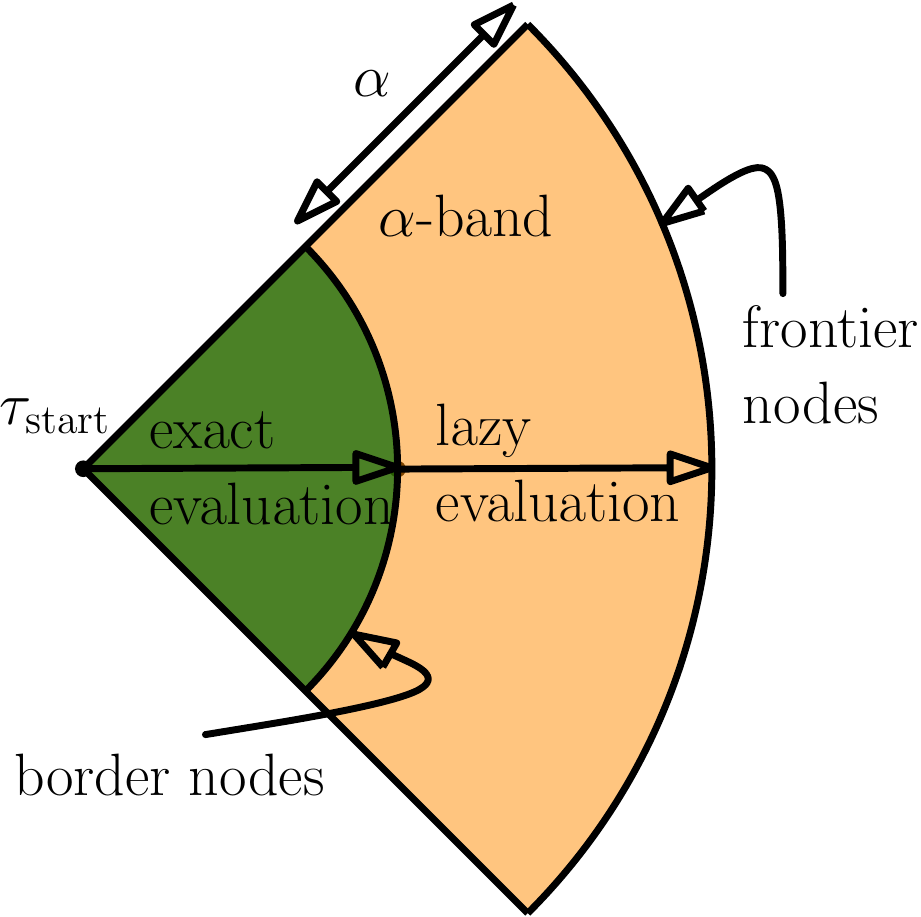}
  		  \caption{}
   	      \label{fig:taxonomy}
   	\end{subfigure}
	\begin{subfigure}[h]{0.24\textwidth}
		\centering
		  \includegraphics[width=0.9\textwidth]{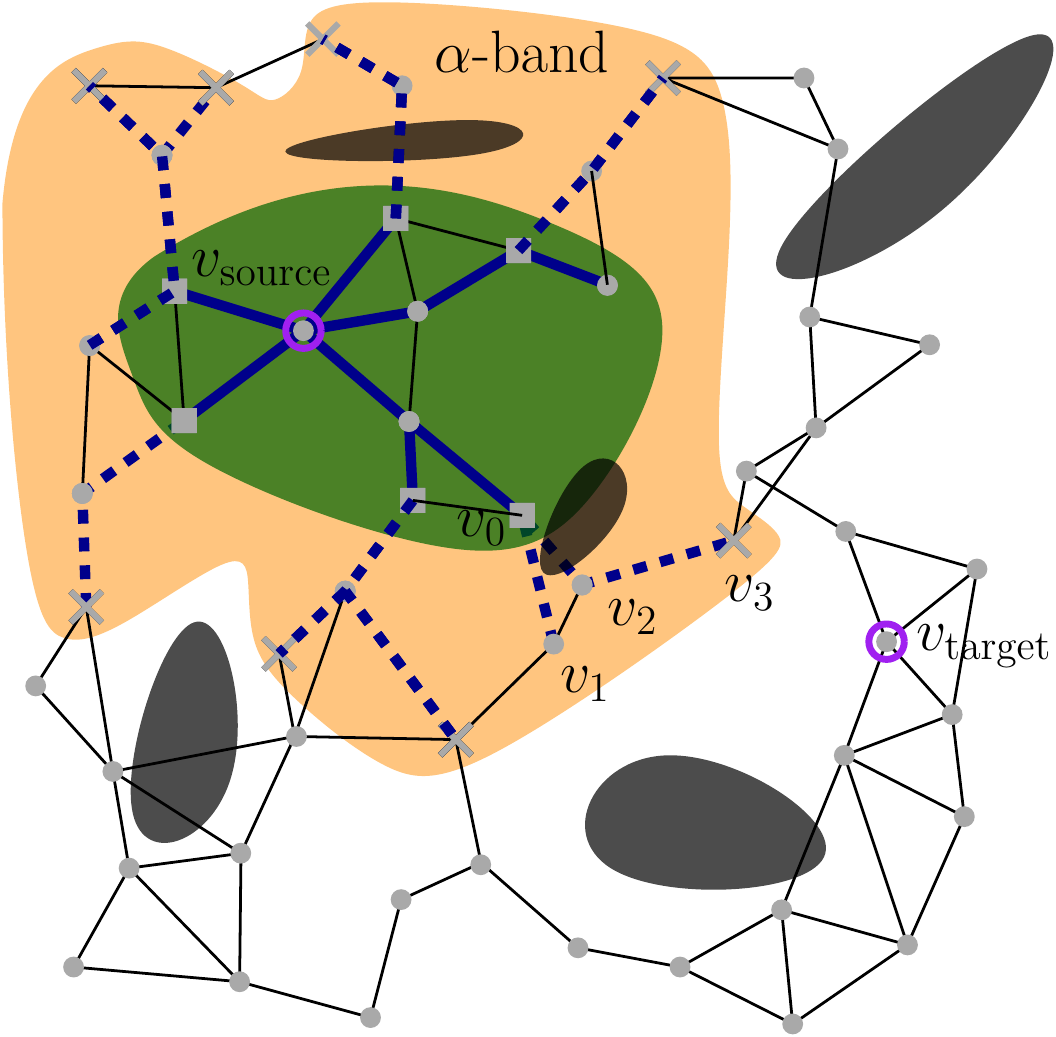}
		  \caption{}
		  \label{fig:alg1}		
	\end{subfigure}
	\begin{subfigure}[h]{0.24\textwidth}
		\centering
		  \includegraphics[width=0.9\textwidth]{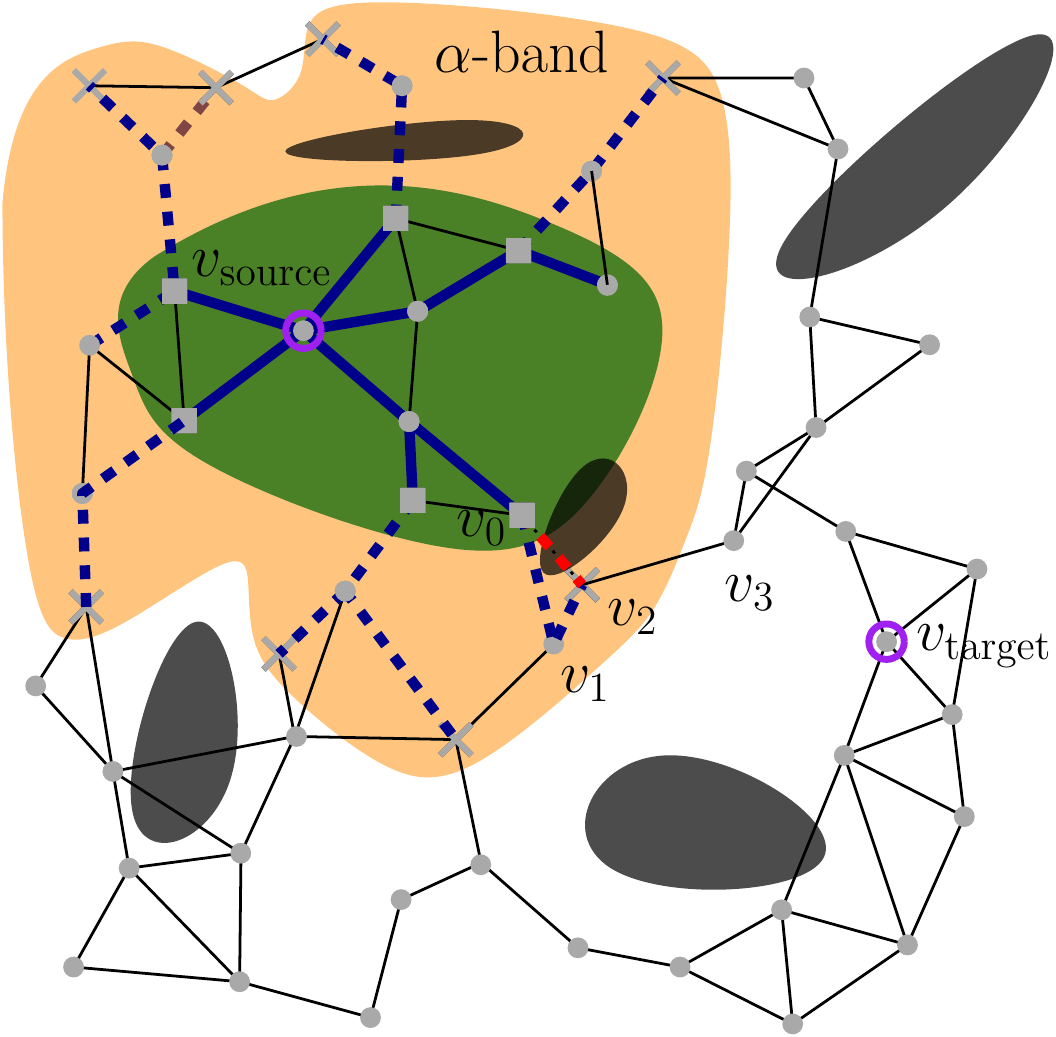}
		  \caption{}
		  \label{fig:alg2}		
	\end{subfigure}
	\begin{subfigure}[h]{0.24\textwidth}
		\centering
		  \includegraphics[width=0.9\textwidth]{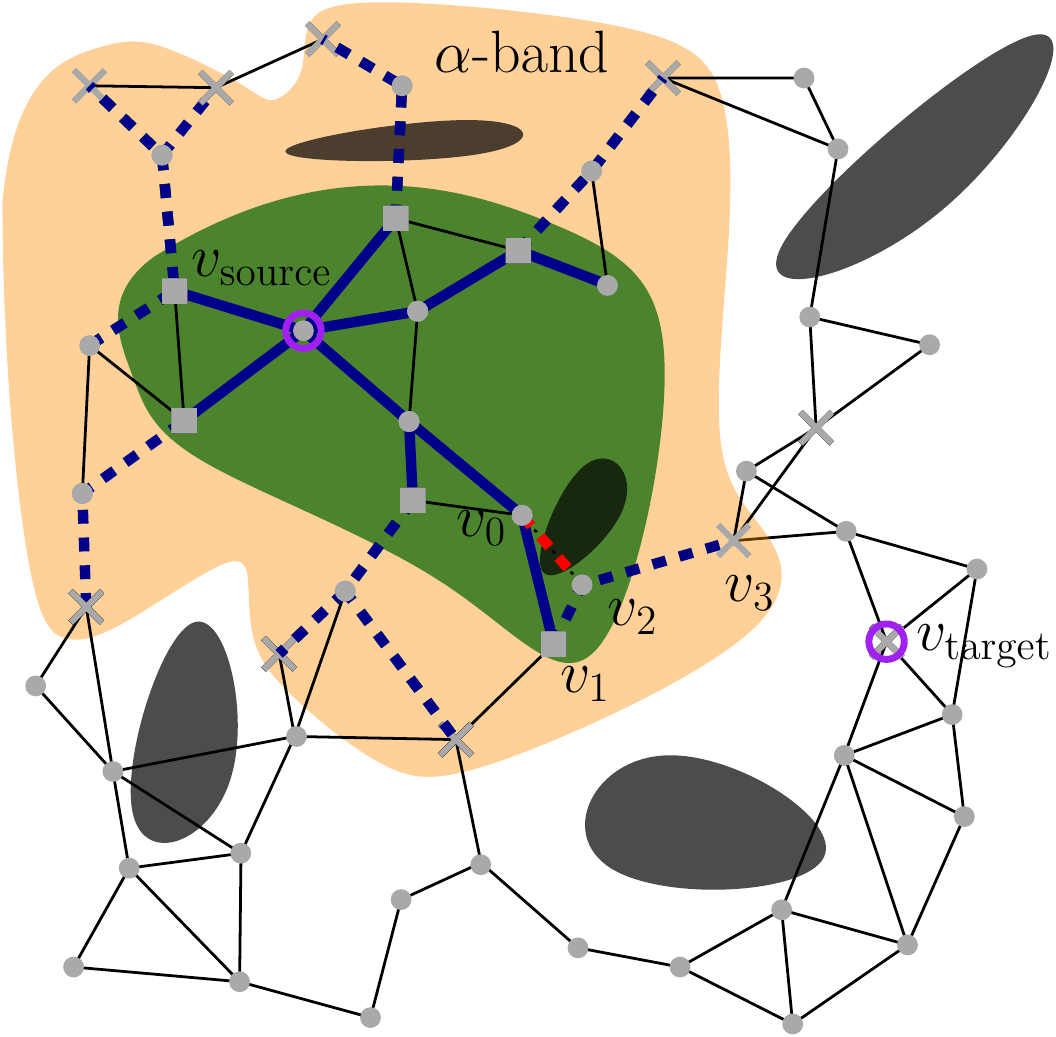}
		  \caption{}
		  \label{fig:alg3}		
	\end{subfigure}
  \caption{%
  	(\subref{fig:taxonomy}) Search space of \ab.
    Figures (\subref{fig:alg1}, \subref{fig:alg2}, \subref{fig:alg3}) visualize \ab running on $\calG$ embedded in a workspace cluttered with obstacles (dark grey) and~$\alpha=2$. The regions where edges are evaluated and lazily evaluated are depicted by green and orange regions, respectively. Shortest-path tree~$\calT$ in the two regions is depicted by solid and dashed blue edges, respectively. Finally, vertices associated with border and frontier nodes are depicted by squares and crosses, respectively. Figure is best viewed in color.
    (\subref{fig:alg1})~Node associated with $v_3$ has the minimal key and the path ending with nodes $v_0, v_2, v_3$ is evaluated.
    Edge $(v_0, v_2)$ is found to be in collision.
    (\subref{fig:alg2})~Node $\tau_2$ associated with $v_2$ is rewired and the $\alpha$-band is recomputed. Now $\tau_2$ has the minimal key and the path ending with nodes $v_0, v_1, v_2$ is evaluated and found to be collision free.
    (\subref{fig:alg3})~The $\alpha$-band is extended from $v_2$.
		}%
  \label{fig:filmstrip}%
%

\end{figure*}

	The algorithm iteratively finds the frontier node $\tau$ with minimal estimated cost (line~7) and evaluates the first edge along the lazy portion $P[\tau]_{\text{tail}}$ of the path $P[\tau]$ from $\tau_{\vs}$ to $\tau$ in $\calT$ (line~8).
	If a collision-free shortest path to $\vg$ is found during this evaluation (line~11-12), the algorithm terminates.
	Every evaluation of a collision-free edge $(u,v)$ causes the node $\tau_v$, that was previously in the $\alpha$-band, to be a border node. Consequently the node entry is updated and this update is cascaded to all the nodes in the $\alpha$-band belonging to the subtree rooted at $\tau_v$ (lines~10, 13). Specifically, the new cost, lazy cost and budget of $\tau_v$ is used to update the nodes in its subtree. 
	However, if the edge $(u,v)$ is found to be in collision, the edge is removed from the graph, and the entire subtree of~$\tau_v$ is rewired appropriately (lines~14-17). This can potentially lead to some of the nodes being removed from the $\alpha$-band.
	Both updating and rewiring subtrees can generate leaf nodes with budget less than $\alpha$. Therefore at the end of the iteration, the $\alpha$-band is again extended to ensure all leaf nodes have budget equal to the lookahead~$\alpha$ (line~18). See Fig. \ref{fig:filmstrip} for an illustration.
	
	As we will show in Section \ref{sec:proofs}, the algorithm described 	is guaranteed to terminate with the shortest path, if one exists, and is hence complete for all values of~$\alpha$. 


\subsection{Implementation Details---Lazy computation of the $\alpha$-band}
Every time an edge $(u,v)$ is evaluated, a series of updates is triggered (Alg.~1 lines 13 and 16-17)
Specifically, let $\tau$ be the node associated with $v$ and $\calT(\tau)$ be the subtree of $\calT$ rooted at $\tau$.
If the edge~$(u,v)$ is collision-free, then the budget of all the nodes $\calT(\tau)$ needs to be updated.
Alternatively, if the edge $(u,v)$ is in collision, then a new path to every node in $\calT(\tau)$ needs to be computed.
These updates may be time-consuming and we would like to minimize them.
To this end, we propose the following optimization which reduces the size of the $\alpha$-band and subsequently, potentially reduces the number of nodes in~$\calT(\tau)$.

We suggest that if we already know that a node $\tau'$ in the $\alpha$-band  will \emph{not} be part of a path that is chosen for evaluation in an iteration, then we defer expanding the $\alpha$-band through this node.
The key insight behind the optimization is that there is no need to expand a node $\tau'$ in the $\alpha$-band if its key, ~$\bar{w}(P[\tau'])$,  is larger than the key of the first node in~$\Qfront$.
Using this optimization may potentially reduce the size of~$\calT(\tau$) and save computations.

%
\textVersion{}
{This is implemented by changing the termination criteria in Alg.~\ref{alg:extend} (line 1) to test if the key of the first node in~$\Qextend$ is larger than the key of the head of the first node in~$\Qfront$.}

\subsection{Implementation Details---Heuristically guiding the search}
We described our algorithm as a lazy extension of Dijkstra's algorithm which orders its search according to cost-to-come.
In practice we will want to heuristically guide the search similar to \astar, which orders its search queues according to the sum of cost-to-come to a vertex from $\vs$ and an estimate of the cost-to-go to $\vg$ from the vertex, \ie, a heuristic.

We apply a similar approach by assuming that the algorithm is given a heuristic function that under estimates the cost to reach $\vg$. \textVersion{}{We add this value to the key of every node in $\Qfront$ and $\Qextend$.} In Section~\ref{sec:proofs} we state and prove that as the heuristic is strictly more informative, the number of edge evaluations and rewires further reduce, for a given lazy lookahead.

\subsection{Discussion---\ab as an approximation of optimal heuristic}
\label{sec:intuition}
In this section, we provide an intuition on the role that the lazy lookahead plays when guided by a heuristic.
Given a graph~$\calG$, we can define the optimal heuristic $h_\calG^*(v)$ as the length of the shortest path from $v$ to $\vg$ in~$\calG$.
Indeed, if all edges of $\calG$ are collision-free, an algorithm such as \textsf{A*} guided by $h_\calG^*$ will only evaluate edges along the shortest path to  $\vg$. To take advantage of this, \lazySP proceeds by computing $h_\calG^*$. 
If an edge is found to be in collision, it is removed from~$\calG$ and $h_\calG^*$  is recomputed.
This is why no other algorithm can perform fewer edge evaluations (see Section~\ref{sec:proofs}).

Using a finite \emph{lookahead} and a \emph{static} admissible heuristic, \ab can be seen as a method to \emph{approximate} the optimal heuristic.   
Every frontier node $\tau$ is associated with the key $c[\tau] + \ell[\tau] + h(u[\tau])$. The minimal of all such keys forms the approximation for the optimal heuristic $h_\calG^*(\vs)$ \ie, if $\tau_v$ associated with vertex $v$ has the minimal key, we have,
$$
h_\calG^*(\vs) \geq c[\tau_v] + \ell[\tau_v] + h(v) \geq h(\vs)
$$
and the algorithm chooses to evaluate an edge along the path from $\vs$ to $v$ in $\calT$.
This approximation improves as the $\alpha$-band approaches the target.
When the algorithm starts, this approximation may be crude (when a small lazy lookahead is used).
However, as the algorithm proceeds and $\alpha$-band is expanded, this approximation dynamically converges to the optimal heuristic. 
The approximation can also be improved by increasing the lookahead since a larger lookahead enables the algorithm to be more informed.
We formalize these ideas in Section~\ref{sec:proofs} (see Lemmas \ref{lem:heuristic}, \ref{lem:larger_lookahead_no_greediness}), Section~\ref{sec:dynamicResult} and show this phenomenon empirically in Section~\ref{sec:experiments}.

\subsection{Discussion---Is greediness beneficial?}
\label{sec:greediness}
A possible extension to \ab is to employ greediness in edge evaluation:
Given a path, we currently evaluate the first edge along this path (Alg.~\ref{alg:highLevel}, lines~7 and~8).
However, we can choose to evaluate more than one edge, hence performing an exploitative action.
This introduces a second parameter~$\beta \leq \alpha$ that indicates how many edges to evaluate along the path.
However, we can show that our current formulation using a minimal greediness value of $\beta = 1$ always outperforms any other greediness value.
This is only the case when we seek \emph{optimal} paths.
If we relax the algorithm to produce suboptimal paths, greediness may be of use in early termination.
While this relaxation is out of the scope of the paper, we provide proofs pertaining to the superiority of no greediness in 
\textVersion
{the supplementary material}
{the Appendix~\ref{app:proofs2}}
for the case that optimal paths are required.

\section{Correctness, Optimality and Complexity}
\label{sec:proofs}

In this section we provide theoretical properties regarding our family of algorithms \ab.
For brevity, we defer all proofs to
\textVersion
{the supplementary material.}
{Appendix~\ref{app:proofs}.}
We start in Section~\ref{subsec:correctness} with a correctness theorem stating that upon termination of the algorithm, the shortest path connecting~$\vs$  and~$\vg$ is found.
We continue in Section~\ref{subsec:opt} to detail how the lazy lookahead affects the performance of the algorithm with respect to edge evaluations.
Specifically, we show that for $\alpha = \infty$, the algorithm is edge optimal. That is, it tests the minimal number of edges possible (this notion is formally defined).
Furthermore, we examine how the lazy lookahead affects the number of edges evaluated by our algorithm.
Finally, in Section~\ref{subsec:complexity} we bound the running time of the algorithm as well as its space complexity as a function of the lazy lookahead $\alpha$.
Here, we show that the running time (governed, in this case, by graph operations) can  grow exponentially with the lazy lookahead  $\alpha$. This further backs our intuition that in order to minimize the running time in practice, an intermediate lookahead is required to balance edge evaluation and graph operations.

The following additional notation will be used throughout this section:
Let $P^*_v$ denote the shortest collision-free path from $\vs$ to a vertex $v$ and let $w^*(v) = w(P^*_v)$ be the minimal \emph{true} cost-to-come to reach~$v$ from $\vs$.
Finally, for the special case of $\vg$, we will use~$w^* = w^*(\vg)$.
That is,~$w^*$ denotes the minimal cost-to-come to reach $\vg$ from~$\vs$.
\subsection{Correctness}
\label{subsec:correctness}

\begin{restatable}{lem}{lemmaOne}
\label{lem:correctness}
Let $(v_0,v)$  be an edge evaluated by \ab and found to be collision free.
Then the shortest path to the node~$\tau_v$ associated with vertex~$v$ has been found and $
c[\tau_v] = w^*(v)
$.
\end{restatable}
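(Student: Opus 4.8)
The plan is to adapt the classical correctness proof of Dijkstra's algorithm to the lazy setting and argue by induction on the order in which vertices are \emph{settled}, i.e., become border nodes through a collision-free edge evaluation. Before starting the induction I would isolate three structural invariants maintained by $\ab$. First, the head $P[\tau]_{\text{head}}$ of every node consists of edges that have been evaluated and found collision-free, so $c[\tau]=\len{P[\tau]_{\text{head}}}$ is the true cost of an actual collision-free path; in particular, for a border node $\tau_v$ the value $c[\tau_v]$ is the cost of a genuine path from $\vs$ to $v$, whence $c[\tau_v]\ge w^*(v)$. Second, $\calT$ is a shortest-path tree with respect to the estimated total cost $\bar{w}$ among all paths whose lazy tail has at most $\alpha$ edges, so the node stored for each reached vertex realizes the minimal such $\bar{w}$. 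Third, because $\hat{w}(e)\le w(e)$ with equality on collision-free edges, the key $\bar{w}(\tau)=c[\tau]+\ell[\tau]$ is a lower bound on the true cost of $P[\tau]$, and the minimal frontier key $K$ is non-decreasing across iterations.

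For the induction, the base case is $\tau_{\vs}$ with $c[\tau_{\vs}]=0=w^*(\vs)$. For the step, let $\tau$ be the minimal-key frontier node chosen in the current iteration and let $(v_0,v)$ be the first edge of $P[\tau]_{\text{tail}}$, so $v_0$ is the (already settled) border node ending the head. By the inductive hypothesis $c[\tau_{v_0}]=w^*(v_0)$, and since the edge is collision-free, $\hat{w}(v_0,v)=w(v_0,v)$; hence after the update $c[\tau_v]=w^*(v_0)+w(v_0,v)$, which by the first invariant already gives $c[\tau_v]\ge w^*(v)$. It remains to prove $c[\tau_v]\le w^*(v)$. Note also that $c[\tau_v]\le\bar{w}(\tau)=K$, since the remaining $\alpha-1$ lazy edges of the tail contribute non-negatively.

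To establish the reverse inequality I would assume for contradiction that $w^*(v)<c[\tau_v]$ and examine an optimal collision-free path $P^*_v$. Let $y$ be the first vertex on $P^*_v$ that is not yet settled and let $x$ be its (settled) predecessor, so $c[\tau_x]=w^*(x)$ by the inductive hypothesis and $w^*(y)\le w^*(v)<c[\tau_v]\le K$. Now I would split on the length $d$ of the suffix of $P^*_v$ from $x$ to $v$. If $d\le\alpha$, then this suffix forms a legal lazy tail, so by the second invariant the node $\calT$ stores for $v$ has $\bar{w}\le w^*(x)+\hat{w}(x\rightsquigarrow v)=w^*(v)$; since that value equals $c[\tau_v]$ once $(v_0,v)$ is evaluated, we obtain $c[\tau_v]\le w^*(v)$, contradicting the assumption. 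If $d>\alpha$, then the vertex $\alpha$ edges past $x$ along $P^*_v$ is reached in $\calT$ with $\bar{w}$ at most its optimal cost, which is strictly less than $w^*(v)\le K$; tracing this down to a frontier descendant should exhibit a frontier node whose key is smaller than $K$, contradicting the minimality of $K$.

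The main obstacle is precisely this long-suffix case, because of the finite lookahead: a vertex whose $\calT$-cost is below $K$ need not itself be a frontier node---it may lie in the interior of the $\alpha$-band, and its frontier descendants can carry strictly larger keys, so minimality of $K$ is not immediately violated. Resolving this cleanly requires a secondary argument showing that when $v$ is settled the optimal path $P^*_v$ has already been uncovered up to within $\alpha$ edges of $v$; I expect to prove this by an inner induction on the budget, repeatedly using the second invariant (that $\calT$ reaches each vertex by a minimal-$\bar{w}$ lazy path) together with the band-extension invariant (that every leaf of $\calT$ is a frontier node), so that any unsettled vertex of cost below $K$ on $P^*_v$ can be pushed forward to a genuine frontier node of key below $K$. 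Combining the two directions then yields $c[\tau_v]=w^*(v)$.
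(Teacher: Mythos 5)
Your argument follows essentially the same route as the paper's proof: a contradiction with a hypothetical cheaper collision-free path to $v$, split into two cases according to whether the last fully-evaluated (border) vertex on that path lies within $\alpha$ edges of $v$ --- your short-suffix case is the paper's case C2 (resolved because $\calT$ keeps the minimal-$\bar{w}$ route to $v$, so $v$ would have been routed through the cheaper path, contradicting the choice of $(v_0,v)$) and your long-suffix case is the paper's case C1 (resolved by exhibiting a frontier node of key below that of the popped node $\tau$). The obstacle you flag in the long-suffix case is real but is closed exactly as you anticipate: after the $\alpha$-band is extended, every node with budget $<\alpha$ has all its out-neighbours in $\calT$ with $\bar{w}$ no larger than through that node, so walking along $P^*_v$ from the last border vertex either hits a frontier node whose key is at most $w(P^*_v) < \bar{w}(P[\tau])$, contradicting minimality of $\tau$'s key, or reaches $v$ itself inside the band via a cheaper tree path, again contradicting that $(v_0,v)$ was selected; the paper's proof simply asserts the existence of such a frontier node on the cheaper path without spelling out this inner induction. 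So your proposal is the paper's argument with the one hand-waved step honestly identified, and your planned resolution via the leaf-is-frontier and minimal-$\bar{w}$ invariants is the correct way to complete it.
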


Replacing $v$ with $\vg$, we have,
\begin{cor}
\ab is complete, \ie, if an edge $(v_0, \vg)$ is found to be collision-free, the shortest path to $\tau_{\vg}$ associated with $\vg$ has been found.
\end{cor}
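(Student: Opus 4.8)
The plan is to prove the statement by induction on the sequence of edges that \ab{} evaluates and finds collision-free; equivalently, on the order in which vertices are \emph{closed} (become border nodes). Write $v$ for the vertex closed in the current step, reached by evaluating the first tail edge $(v_0,v)$ of the minimal-key frontier node $\tau$ popped from \Qfront. The inductive hypothesis is that every previously closed vertex $u$ satisfies $c[\tau_u]=w^*(u)$ with its tree path a true shortest collision-free path. Since $v_0$ was closed earlier, $c[\tau_{v_0}]=w^*(v_0)$, and because the estimate is tight on collision-free edges (Eq.~\ref{wEst}), evaluating $(v_0,v)$ leaves the key unchanged and yields $c[\tau_v]=c[\tau_{v_0}]+w(v_0,v)=w^*(v_0)+w(v_0,v)$. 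The head path $P[\tau_v]_{\text{head}}$ is fully evaluated, hence a genuine collision-free path to $v$, so immediately $w^*(v)\le c[\tau_v]$. All the work is in the reverse inequality $c[\tau_v]\le w^*(v)$.

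For the reverse inequality I would run a Dijkstra-style cut argument, but carried out \emph{along} an optimal path rather than across a single cut, because the lazy budget forces the frontier to sit $\alpha$ edges ahead of the closed region. Let $Q$ be a true shortest collision-free path from $\vs$ to $v$; since $Q$ is collision-free, $\hat{w}(Q)=w(Q)=w^*(v)$, so $Q$ is also a lazy path of that cost, unaffected by any edges removed so far. Let $z$ be the last vertex of $Q$ that is already closed ($\vs$ qualifies), and write the suffix after $z$ as $z=q_0,q_1,\dots,q_k=v$. The structural facts I would use are: (i)~$\calT$ is a shortest-path tree for the lazy cost $\hat{w}$ \emph{subject to the budget constraint} that the lazy tail has at most $\alpha$ edges, so any vertex reachable from a closed vertex within $\alpha$ lazy edges has stored distance $c[\cdot]+\ell[\cdot]$ at most the cost of any such budget-respecting route; and (ii)~$\tau$ was popped as the frontier node of \emph{minimum} key $\bar{w}(\tau)=c[\tau]+\ell[\tau]$, and $\bar{w}(\tau)=c[\tau_v]+\hat{w}\big(P[\tau]\text{ beyond }v\big)\ge c[\tau_v]$.

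I would then prove by induction on $j$ that the tree distance to $q_j$ is at most $w^*(q_j)$. The base case $q_0=z$ holds by the inductive hypothesis on closed vertices. For the step, suppose the bound holds at $q_j$ and that $\calT$ routes $q_j$ at budget $b_j$. If $b_j<\alpha$, appending the collision-free edge $(q_j,q_{j+1})$ gives a budget-respecting route, so fact~(i) yields distance to $q_{j+1}$ at most $w^*(q_j)+w(q_j,q_{j+1})=w^*(q_{j+1})$. The case $b_j=\alpha$ is exactly the subtlety: here $q_j$ is itself a frontier node, and its key is at most $w^*(q_j)\le w^*(v)\le c[\tau_v]\le\bar{w}(\tau)$; since $\tau$ has minimum frontier key this forces equality throughout, already pinning $c[\tau_v]=w^*(v)$. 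Thus the induction either terminates early at a frontier vertex (closing the argument) or runs to $j=k$, giving tree distance to $v$ at most $w^*(v)$; as that distance equals $c[\tau_v]$ on the budget-$1$ path just evaluated, we obtain $c[\tau_v]\le w^*(v)$, completing the step and the induction over closings.

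The main obstacle, as signalled above, is precisely the budget bookkeeping in the case $b_j=\alpha$: a cheaper route to $v$ may be longer than $\alpha$ lazy edges, so it is \emph{not} captured directly by $\calT$, and one cannot simply compare $\tau$'s key to a single competing frontier node, since competitors live at depth $\alpha$ and their extension cost is uncontrolled. The walk-along-$Q$ induction is what tames this: any such cheap long route must surface, at the first vertex the tree routes at full budget, as a frontier node whose key undercuts $\bar{w}(\tau)$, contradicting minimality; otherwise the cheapness propagates all the way to $v$. Finally, the Corollary is immediate: taking $v=\vg$, when an edge $(v_0,\vg)$ is evaluated collision-free the Lemma gives $c[\tau_{\vg}]=w^*(\vg)=w^*$, so the path returned in lines~11--12 is a shortest path, and termination (hence completeness) follows since each edge is evaluated at most once.
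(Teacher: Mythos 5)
Your argument is correct and, at its core, turns on the same mechanism as the paper's proof of Lemma~\ref{lem:correctness} (from which the corollary follows by setting $v=\vg$): a hypothetical cheaper collision-free path to $v$ must surface as a frontier node whose key undercuts $\bar{w}(P[\tau])$, contradicting the minimality of the popped node. Where you differ is the packaging. The paper argues by contradiction, fixing the last border node $\tau_j'$ on the competing path $P'$ and splitting into two cases by whether $v_j'$ lies more than $\alpha$ edges before $v$ (case C1, a frontier node on $P'$ has smaller key) or at most $\alpha$ edges before $v$ (case C2, $\calT$ would have routed $v$ through $P'$ instead of through $v_0$). Your nested induction --- outer over the closing order, inner along the suffix of the optimal path $Q$ past its last closed vertex --- reproduces exactly this dichotomy (your full-budget case $b_j=\alpha$ is C1; reaching $q_k=v$ within budget is C2), but it is more explicit on a point the paper passes over: the tree need not route the intermediate vertices $q_1,\dots,q_{k-1}$ along $Q$ itself, and your fact~(i) (the stored key is at most the cost of \emph{any} budget-respecting route from a closed vertex) is precisely what lets the bound $c[\cdot]+\ell[\cdot]\le w^*(q_j)$ propagate step by step regardless of how $\calT$ actually routes each $q_j$. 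The price is that fact~(i) is itself a nontrivial invariant of the extend/update/rewire machinery which you assert rather than prove; the paper's proof silently relies on the same invariant (e.g., that a frontier node $\alpha$ edges past $\tau_j'$ exists with key at most the lazy cost along $P'$), so this is not a gap relative to the paper, but in a self-contained write-up it would deserve its own lemma. Your closing remark that termination follows because each edge is evaluated at most once is a sensible addition that the paper's one-line derivation of the corollary omits.
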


\subsection{Edge Optimality}
\label{subsec:opt}

We analyze how the lazy lookahead allows to balance between the number of edge evaluations and rewiring operations.
We start by looking at the extreme case where there is an infinite lookahead ($\alpha = \infty$).
We define a general family of algorithms $\SP$ that solve the shortest-path problem and show in Lemma~\ref{lem:infinte_lookahead} that when  $\alpha = \infty$, no other algorithm in $\SP$ can perform fewer edge evaluations. 
We then show in Lemma~\ref{lem:larger_lookahead_no_greediness} that larger the lookahead, fewer the edge evaluations \ab will perform.

Recall that a shortest-path problem consists of 
a graph $\calG = (\calV,\calE)$, 
a lazy estimate of the weights $\hat{w}$,
a weight function~$w$ and 
start ($\vs$) and goal ($\vg$) vertices.
Given a shortest-path problem, let~$\SP$ be the family of shortest-path algorithms that build a shortest-path tree $\calT$ rooted at $\vs$.
Assume that for every shortest-path problem, there are no two paths in $\calG$ that have the same weight\footnote{To avoid handling tie-breaking in proofs. \ab does not require this assumption.}.

An algorithm \texttt{ALG} $\in \SP$ can only call the weight function~$w$ for an edge $e = (u,v)$ if $u \in \calT$.
When terminating, it must report the shortest path from $P^*_{\vg}$ and validate that no shorter path exists.
Thus for any other path $P$ from $\vs$ to $\vg$ with
$\hat{w}(P) < w(P^*_{\vg})$, \texttt{ALG} must explicitly test an edge $e \in P$ with $w(e)= \infty$.
Since \texttt{ALG} constructs a shortest-path tree, this will be the first edge on $P$ that is in collision.
 
Finally, an algorithm \texttt{ALG} $\in \SP$ is said to be \emph{edge-optimal} if for any other algorithm \texttt{ALG'} $\in \SP$, and any shortest-path problem, \texttt{ALG} will test no more edges than \texttt{ALG'}.

\begin{restatable}{lem}{lemmaTwo}
\label{lem:infinte_lookahead}
\ab with $\alpha = \infty$ 
is edge-optimal.
\end{restatable}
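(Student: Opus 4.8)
The plan is to prove the stronger statement that the set of edges evaluated by \ab with $\alpha=\infty$ is contained in the edge set tested by \emph{every} algorithm \texttt{ALG'}$\in\SP$; edge-optimality then follows immediately by comparing cardinalities. First I would record the behavioural specialization of \ab at $\alpha=\infty$: since the budget is unbounded, every root-to-leaf branch of $\calT$ is fully lazy, the evaluated head of each path stays empty until an edge is actually tested, and the algorithm reduces to \lazySP with the forward edge selector. Concretely, each expensive evaluation (Alg.~\ref{alg:highLevel}, line~8) is the first still-unevaluated edge of a current minimum-lazy-cost path $P$ from $\vs$ to $\vg$, and an evaluated collision-free edge is immediately known to be tight, $w(e)=\hat{w}(e)$, by~\eqref{wEst}. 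By Lemma~\ref{lem:correctness}, whenever a node $\tau_v$ is settled through an evaluated collision-free edge we have $c[\tau_v]=w^*(v)$, so the branch stored for $v$ is a genuine shortest collision-free path to $v$, which by the no-ties assumption is unique.

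Next I would classify every edge $e=(u,v)$ that \ab evaluates into two types. Type~(i): $e$ is found in collision. At the moment $e$ is evaluated it is the first unevaluated edge of the current minimum-lazy-cost path $P$ to $\vg$; since $P$ contains a colliding edge it is not $P^*_{\vg}$, and because $P^*_{\vg}$ is still a live candidate we get $\hat{w}(P)\le\hat{w}(P^*_{\vg})=w^*$, hence $\hat{w}(P)<w^*$ by no-ties. Thus $P$ is one of the ``short'' paths singled out in the problem setup, and $e$ is exactly its first colliding edge. Type~(ii): $e$ is found collision-free. Then $e$ lies on the prefix of some minimum-lazy-cost path $P$ to $\vg$, before the first edge of $P$ that is either colliding or incident to $\vg$; writing $u^\ast$ for the tail of that distinguished later edge, subpath optimality makes $P[\vs\to u^\ast]$ the unique shortest collision-free path to $u^\ast$, and $e$ lies on it. Here $u^\ast$ is either $\vg$ (in the terminal round, where $P=P^*_{\vg}$) or the tail of the first colliding edge of a short path.

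Then I would show each type is forced for an arbitrary \texttt{ALG'}$\in\SP$. A type-(i) edge is the first colliding edge of a short path, so the necessity observation stated just before the lemma forces \texttt{ALG'} to test it. For a type-(ii) edge, \texttt{ALG'} must settle $u^\ast$: if $u^\ast=\vg$ this is needed to certify $P^*_{\vg}$ collision-free, and otherwise \texttt{ALG'} must legally test the first colliding edge leaving $u^\ast$, which by the rules of $\SP$ requires $u^\ast\in\calT$. Because \texttt{ALG'} builds a shortest-path tree rooted at $\vs$ and the shortest collision-free path to $u^\ast$ is unique, the branch realizing $u^\ast$ must be precisely $P[\vs\to u^\ast]$, so \texttt{ALG'} is forced to evaluate all of its edges, including $e$. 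Hence every edge evaluated by \ab lies in the edge set tested by \texttt{ALG'}, which yields that \ab tests no more edges than \texttt{ALG'}, for every \texttt{ALG'}$\in\SP$, i.e., edge-optimality.

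The main obstacle is the type-(ii) collision-free edges: in isolation such an edge need not be ``necessary,'' since a generic search could in principle reach the relevant vertex by a different collision-free route. The argument that closes this gap is the shortest-path-tree semantics together with the no-ties assumption: any \texttt{ALG'}$\in\SP$ that must place $u^\ast$ in its tree is obliged to do so along $u^\ast$'s \emph{unique} shortest collision-free path, which coincides edge-for-edge with \ab's branch. Making precise that ``$u^\ast\in\calT$'' entails having evaluated exactly this branch---rather than merely some cheaper-looking shortcut that a non--shortest-path search could exploit---is the crux, and it is exactly where the defining restriction of the class $\SP$ (maintaining a genuine shortest-path tree and evaluating edges only out of settled vertices) does the work.
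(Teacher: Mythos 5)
Your proof is correct and follows essentially the same route as the paper's: characterize \ab with $\alpha=\infty$ as \lazySP with a forward selector, observe that every evaluated edge lies on a path $P$ with $\hat{w}(P)\leq w^*$, and split into the case $P=P^*_{\vg}$ (all edges must be validated) versus $\hat{w}(P)<w^*$ (the first colliding edge must be tested, forcing the preceding tree branch). Your type-(ii) analysis merely makes explicit the step the paper compresses into ``this implies that \texttt{ALG} must test $e$''---namely that settling the tail $u^*$ of the first colliding edge in a shortest-path tree forces evaluation of the unique shortest collision-free branch to $u^*$---which is a faithful elaboration rather than a different argument.
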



\begin{cor}
\lazySP is edge-optimal.
\end{cor}

\textVersion{}{
Considering \ab as an approximation of the optimal heuristic can provide a different perspective on how \lazySP is edge optimal. Consider consistent heuristics $h_1$ and $h_2$, such that~$h_1$ strictly dominates $h_2$ \ie, 
$$
h^*_{\calG}(v) \geq h_1(v) > h_2(v)~\forall v \in \calV,~v\neq \vg,
$$
where $h^*_{\calG}(v)$ is the optimal heuristic for a given graph $\calG$.

\begin{restatable}{lem}{lemmaHeuristic}
\label{lem:heuristic}
For every graph $\calG$ and lookahead $\alpha$, we have that $E_1 \subseteq E_2$, where $E_i$ denotes the set of edges evaluated by \ab with heuristic $h_i$, $i \in \{1,2\}$.
\end{restatable}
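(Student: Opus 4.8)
The plan is to give an intrinsic, execution-independent characterization of \emph{which} edges \ab evaluates, and then to exploit the fact that this characterization is monotone in the heuristic. Write $f^{*} = w^{*} = h^{*}_{\calG}(\vs)$ for the optimal solution cost; note that $f^{*}$ depends only on $\calG$ and not on the heuristic, and that by Lemma~\ref{lem:correctness} (and its corollary) both runs terminate reporting a path of cost $f^{*}$. Since \ab always pops the frontier node of minimal key $c[\tau]+\ell[\tau]+h(u[\tau])$ and evaluates the first edge of its lazy tail, edges are evaluated in non-decreasing order of key, and the standing assumption that no two paths share a weight means the only path attaining key $f^{*}$ is the optimal one. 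Hence, for a fixed heuristic $h$, \ab evaluates exactly those edges whose key at the moment of evaluation is strictly below $f^{*}$, together with the final edge of the optimal path.

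For an edge $e=(x,y)$ I would define its \emph{evaluation key}
\[
\kappa_{h}(e) \;=\; w^{*}(x) \;+\; \min_{\rho}\Big(\hat{w}(\rho) + h\big(\mathrm{end}(\rho)\big)\Big),
\]
where $\rho$ ranges over the $\alpha$-edge paths that start with $e$ in the current graph. The first summand is forced by Lemma~\ref{lem:correctness}: when $e$ is evaluated, $x$ is a border node reached by a collision-free head, so its evaluated cost is exactly $w^{*}(x)$; the second summand is the cheapest lazy $\alpha$-lookahead tail through $e$, which is precisely the key of the cheapest frontier node whose selection would test $e$. With this definition the statement ``$e$ is evaluated if and only if $\kappa_{h}(e)\le f^{*}$'' follows from the ordering argument above, the head to $x$ being established earlier by induction since its edges carry smaller keys. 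The two facts that drive the lemma are then immediate: first, $w^{*}(x)$ and $f^{*}$ are independent of the heuristic; second, replacing $h_{2}$ by the larger $h_{1}$ increases every term $\hat{w}(\rho)+h(\mathrm{end}(\rho))$ and hence their minimum, so $\kappa_{h_{2}}(e)\le\kappa_{h_{1}}(e)$ for every edge $e$. Consequently $\kappa_{h_{1}}(e)\le f^{*}$ implies $\kappa_{h_{2}}(e)\le f^{*}$, \ie, $E_{1}\subseteq E_{2}$; the direction is correct precisely because the more informative $h_{1}$ yields the \emph{larger} keys.

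The main obstacle is that $\kappa_{h}(e)$ is phrased over the ``current'' graph, whereas \ab edits the graph during the run---deleting collision edges and rewiring the affected subtree---so a priori the set of admissible tails $\rho$, and thus the minimum, could differ between the two executions. I would discharge this by an induction on the evaluation order of the run using $h_{1}$: processing edges $e_{1},e_{2},\dots$ in non-decreasing $\kappa_{h_{1}}$-order, I maintain the invariant that every edge already evaluated by the $h_{1}$-run has been evaluated by the $h_{2}$-run, and in particular that every collision edge discovered and deleted by the $h_{1}$-run up to step $k$ has also been discovered by the $h_{2}$-run. Because deleting edges only \emph{restricts} the admissible tails and hence can only \emph{raise} the minimum in $\kappa_{h}$, this invariant guarantees that the tail realizing $\kappa_{h_{1}}(e)$ is still present in the $h_{2}$-run when $e$ is considered there, so the inequality $\kappa_{h_{2}}(e)\le\kappa_{h_{1}}(e)\le f^{*}$ survives the graph edits and the induction closes. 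Verifying that collision-discovery is monotone in this sense---that the $h_{2}$-run never bypasses a collision edge on which the $h_{1}$-run relies---is the delicate bookkeeping step, and it is where consistency of the heuristics (so that keys behave monotonically along any fixed path) is used.
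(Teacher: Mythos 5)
Your high-level monotonicity idea is the same one driving the paper's proof: the threshold $w^*$ is heuristic-independent, a strictly larger heuristic only inflates frontier keys, so fewer keys clear the threshold. The gap is in the device you build this on. The characterization ``$e$ is evaluated iff $\kappa_h(e)\le f^*$'' fails in the ``if'' direction, and that is exactly the direction your chain $\kappa_{h_1}(e)\le f^*\Rightarrow\kappa_{h_2}(e)\le f^*\Rightarrow e\in E_2$ needs. The minimizer $\rho$ in $\kappa_h(e)$ ranges over arbitrary $\alpha$-edge tails starting with $e$, but \ab keeps only the single best path to each vertex in $\calT$; if the endpoint of $\rho$ is reached more cheaply by a tree path avoiding $e$, the frontier node at that endpoint does not have $e$ as the first edge of its lazy tail, and popping it never evaluates $e$. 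So $\kappa_h(e)$ is \emph{not} ``the key of the cheapest frontier node whose selection would test $e$.'' Concretely: $\alpha=1$, $h\equiv 0$, vertices $\vs,x,y,\vg$, collision-free edges $(\vs,x)$ of weight $0.9$ and $(\vs,y),(x,y),(y,\vg)$ of weight $1$. Then $\kappa\left((x,y)\right)=0.9+1=1.9<2=w^*$, yet the tree always routes $y$ through $(\vs,y)$ and the edge $(x,y)$ is never evaluated. Hence the evaluated set is not a sublevel set of an execution-independent key, and the lemma cannot be reduced to comparing two such sublevel sets.

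The closing induction also does not deliver what you need even granting the characterization: your invariant only ensures the $h_2$-run has discovered \emph{at least} the collision edges the $h_1$-run has, but since $E_2$ may be strictly larger it can have discovered more, and one of those extra deletions can destroy the very tail realizing $\kappa_{h_1}(e)$; the observation that ``deletions only raise the minimum'' then works against you, not for you. The paper sidesteps both issues by arguing locally rather than globally: take $(v_0,v_1)\in E_1\setminus E_2$ with $w^*(v_0)$ minimal, so its predecessor $(u,v_0)$ on the shortest path to $v_0$ lies in $E_1\cap E_2$; when the $h_1$-run evaluates $(v_0,v_1)$ it pops a frontier node at some $v_\alpha$ with key $c+\ell+h_1(v_\alpha)\le w^*$, and immediately after the $h_2$-run evaluates $(u,v_0)$ it holds a frontier node at the same $v_\alpha$ with the same $c,\ell$ but key $c+\ell+h_2(v_\alpha)<c+\ell+h_1(v_\alpha)\le w^*$, strictly below the goal's key, so it is popped before termination and $(v_0,v_1)$ is evaluated---contradiction. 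To salvage your route you would have to restrict $\kappa_h(e)$ to tails actually realized as tree paths with $e$ as first lazy edge, at which point the quantity is execution-dependent and you are back to a run-synchronization argument of the paper's kind.
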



\begin{cor}
\lazySP is edge-optimal.
\end{cor}
}

%

%
\begin{restatable}{lem}{lemmaThree}
\label{lem:larger_lookahead_no_greediness}
For every graph~$\calG$ and every  $\alpha_1 > \alpha_2$,
we have that $E_1 \subseteq E_2$.
Here, $E_i$ denotes the set of edges evaluated by \ab with $\alpha = i$.
\end{restatable}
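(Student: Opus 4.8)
The plan is to convert the comparison between two lookaheads into a comparison between two \emph{heuristics}, and then invoke Lemma~\ref{lem:heuristic}. The starting point is the observation (made informally in the discussion of \ab as an approximation of the optimal heuristic) that the lazy lookahead acts exactly like a bounded Bellman back-up of the heuristic. Concretely, let $T_{\hat{w}}$ be the one-step lazy Bellman operator $(T_{\hat{w}} g)(x) = \min_{(x,y)\in\calE}\bigl[\hat{w}(x,y) + g(y)\bigr]$ (with $g(\vg)=0$), taken over the current graph, and set $h_\alpha := T_{\hat{w}}^{\alpha} h$. When \ab selects a frontier node for evaluation, its path head ends at a \emph{border} vertex $x$ reached only through evaluated, collision-free edges, so by Lemma~\ref{lem:correctness} its head cost is exactly $w^*(x)$; minimizing the frontier key $c[\tau]+\ell[\tau]+h(u[\tau])$ over all $\alpha$-edge lazy tails leaving $x$ therefore equals $w^*(x)+h_\alpha(x)$, and the edge actually evaluated is the first edge of the minimizing tail. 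Thus \ab with lookahead $\alpha$ and heuristic $h$ is driven, at the level of which border is expanded and which edge is tested, precisely by the effective heuristic $h_\alpha$.

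First I would establish monotonicity of this effective heuristic in $\alpha$. Assuming $h$ is consistent with respect to the lazy weights (i.e. $h(x)\le \hat{w}(x,y)+h(y)$ for every edge, the natural notion of consistency on the graph as built), we have $h\le T_{\hat{w}}h$; since $T_{\hat{w}}$ is order-preserving, iterating gives the chain
\[
h = h_0 \;\le\; h_1 \;\le\; \cdots \;\le\; h_\alpha \;\le\; \cdots \;\le\; h^*_{\calG},
\]
with every $h_\alpha$ admissible and consistent. Hence for $\alpha_1>\alpha_2$ the deeper lookahead yields the pointwise-larger, more informed heuristic $h_{\alpha_1}\ge h_{\alpha_2}$.

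The reduction itself is the next step: I would show that \ab run with lookahead $\alpha_1$ and heuristic $h$ evaluates exactly the same set of edges as \ab run with the smaller lookahead $\alpha_2$ and the augmented heuristic $h' := T_{\hat{w}}^{\,\alpha_1-\alpha_2} h$. This holds because the two runs have identical border keys, $w^*(x)+h_{\alpha_1}(x) = w^*(x)+ (T_{\hat{w}}^{\alpha_2} h')(x)$, and identical first-edge selection (the first edge of the optimal $\alpha_1$-step lazy continuation from $x$ is the first edge of the optimal $\alpha_2$-step continuation scored by $h'$, since $h'$ merely unrolls the remaining $\alpha_1-\alpha_2$ lazy steps). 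Once this equivalence is in hand, $E_1 = E(\alpha_2,h')$ and, because $h'\ge h$ by the chain above, Lemma~\ref{lem:heuristic} gives $E(\alpha_2,h')\subseteq E(\alpha_2,h)=E_2$, i.e. $E_1\subseteq E_2$.

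The main obstacle is making this equivalence rigorous through the algorithm's dynamic updates. I would set up a coupling between the two runs and prove, by induction on the evaluation steps, the invariant that after the same number of evaluations both runs have removed the same in-collision edges and grown the same evaluated subtree, so that their border keys continue to coincide; the work is in showing that rewiring and band re-extension preserve this. A secondary subtlety is that $h'$ is not static---it rises as in-collision edges are deleted---whereas Lemma~\ref{lem:heuristic} is phrased for fixed heuristics; I would verify that its proof uses only the pointwise dominance $h'\ge h$ holding at every stage, which the chain guarantees since edge deletions can only increase $h'$. Finally I would dispose of the boundary cases where a tail reaches $\vg$ in fewer than $\alpha$ edges (truncating the back-up at the goal) and confirm that the band-pruning optimization, which withholds only nodes that are never selected, leaves the evaluated set unchanged.
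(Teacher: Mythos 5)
Your reduction is elegant, but it breaks exactly at the hand-off to Lemma~\ref{lem:heuristic}: that lemma requires \emph{strict} dominance, $h_1(v) > h_2(v)$ for all $v \neq \vg$, and the paper explicitly remarks after its proof that the claim does \emph{not} hold under weak dominance. Your Bellman chain only delivers $h_{\alpha_1} \geq h_{\alpha_2}$, and the inequality is genuinely non-strict: since the back-up is truncated at the goal, $h_{\alpha_1}(v) = h_{\alpha_2}(v)$ for every vertex whose optimal lazy continuation reaches $\vg$ in at most $\alpha_2$ edges, so equality is forced on a whole neighborhood of $\vg$ --- precisely where the decisive pops happen. Strictness is not a technicality there: the proof of Lemma~\ref{lem:heuristic} passes from $c+\ell+h_1 \leq w^*$ to $c+\ell+h_2 < w^*$ using $h_2 < h_1$, and only the strict inequality guarantees the node is popped from $\Qfront$ before any node associated with $\vg$; with $h_2 = h_1$ you get only $\leq w^*$ and the argument stalls on a tie with the goal. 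The paper's own proof gets the needed strictness from a different source entirely: the $\alpha_2$-frontier key is the $\alpha_1$-frontier key minus a nonempty suffix of positive edge weights, $w^*(u) + \sum_{i=1}^{\alpha_2}\hat{w}(v_{i-1},v_i) < w^*(u) + \sum_{i=1}^{\alpha_1}\hat{w}(v_{i-1},v_i) \leq w^*$, and that strictness is exactly what is lost once you fold the tail into an effective heuristic and compare heuristics pointwise.

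A secondary (patchable, but real) concern is the claimed equivalence between \ab with $(\alpha_1, h)$ and \ab with $(\alpha_2, T_{\hat{w}}^{\alpha_1-\alpha_2}h)$: the $\alpha$-band is a \emph{tree} that keeps a single best parent per vertex and does not re-enter the evaluated region, whereas $T_{\hat{w}}^{k}$ minimizes over all $k$-edge walks in the current graph; reconciling these, together with the goal truncation and the band-pruning optimization, is most of the work, and even then you would still need a weak-dominance variant of Lemma~\ref{lem:heuristic} that does not exist as stated. The direct route --- take the edge $(v_0,v_1) \in E_1 \setminus E_2$ whose source has minimal cost-to-come and compare the two frontier keys along the same $\alpha_1$-edge tail --- sidesteps both problems.
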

%

%
\subsection{Complexity}
\label{subsec:complexity}
In this section we analyse \ab with respect to the space (Lemma~\ref{lem:space}) and running time (Lemma~\ref{lem:time}) complexity.

\begin{figure*}[tb!]	
	\hspace*{-0.5em}
	\begin{subfigure}[h]{0.2\textwidth}
		\centering	
		\includegraphics[height=3cm]{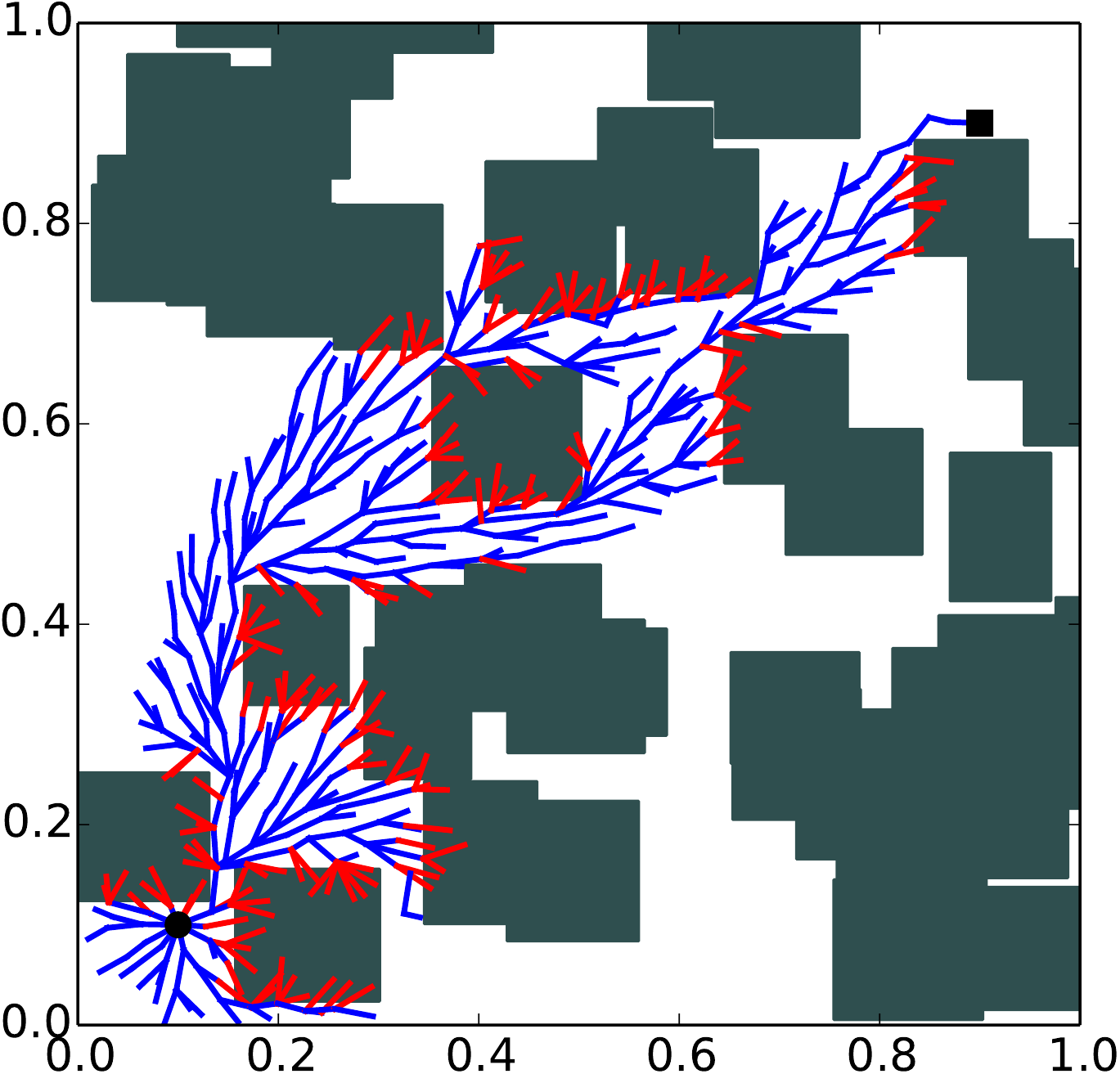} 
		\caption{\lwastar}\label{fig:lwaR}		
	\end{subfigure} \hspace*{-1.5em}
	\begin{subfigure}[h]{0.2\textwidth}
		\centering
		\includegraphics[height=3cm]{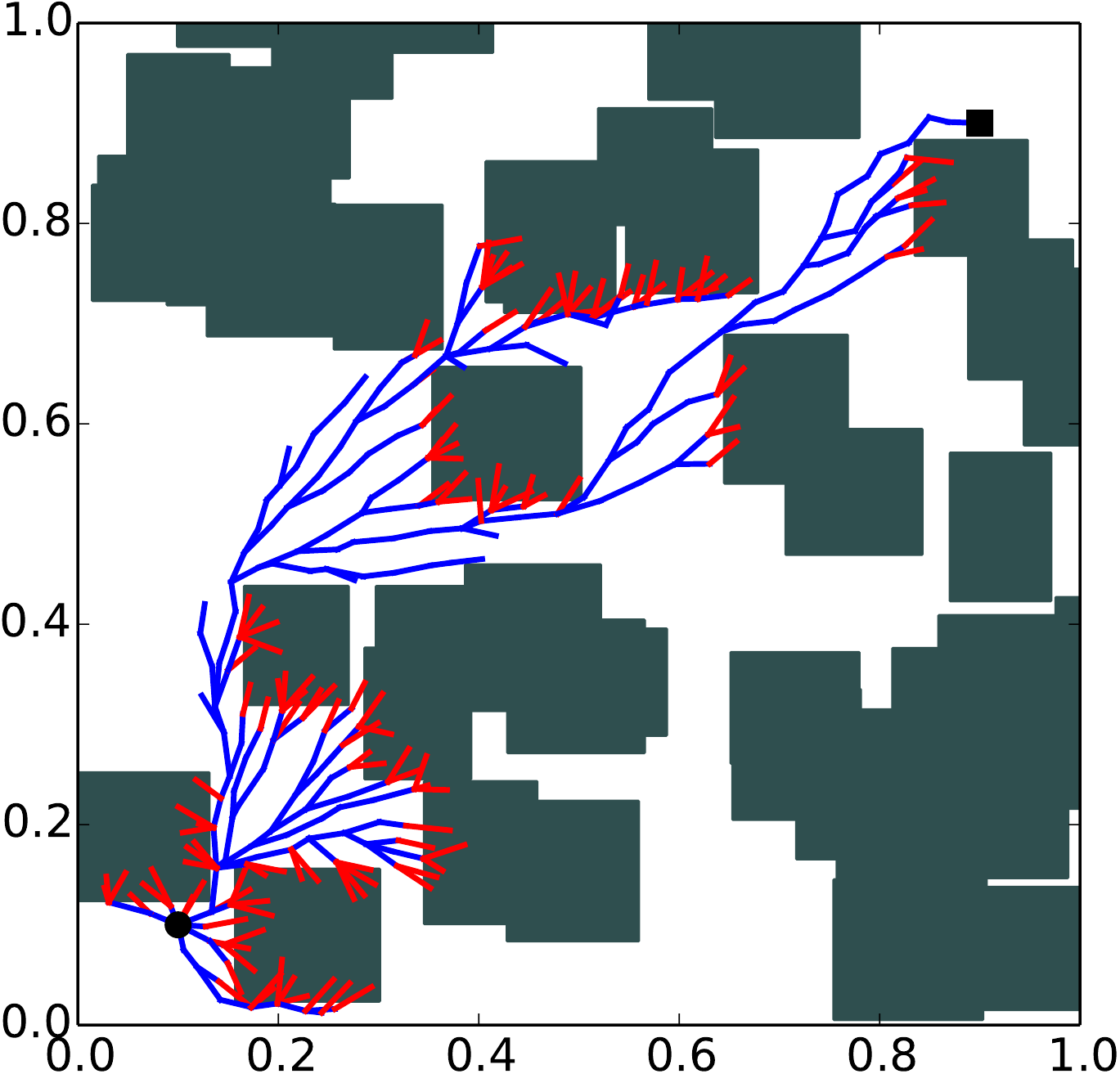} 
		\caption{\ab ($\alpha^*$)}\label{fig:optimalR}
	\end{subfigure} \hspace*{-1.5em}
	\begin{subfigure}[h]{0.2\textwidth}
		\centering
		\includegraphics[height=3cm]{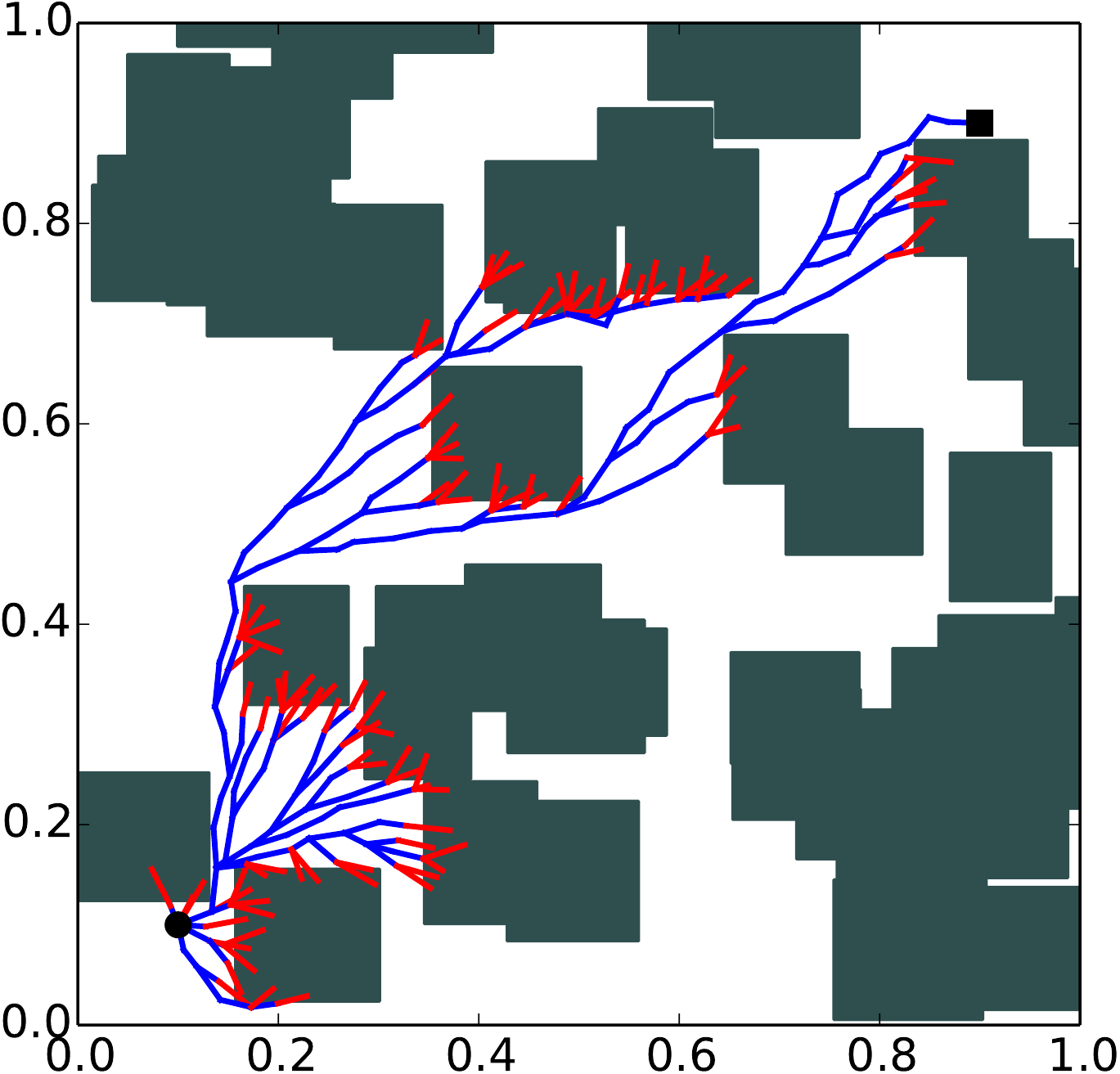} 
		\caption{\lazySP}\label{fig:lazySPR}
	\end{subfigure} \hspace*{-1.1em}
	\begin{subfigure}[h]{0.233\textwidth}
		\centering
		\includegraphics[height=3cm]{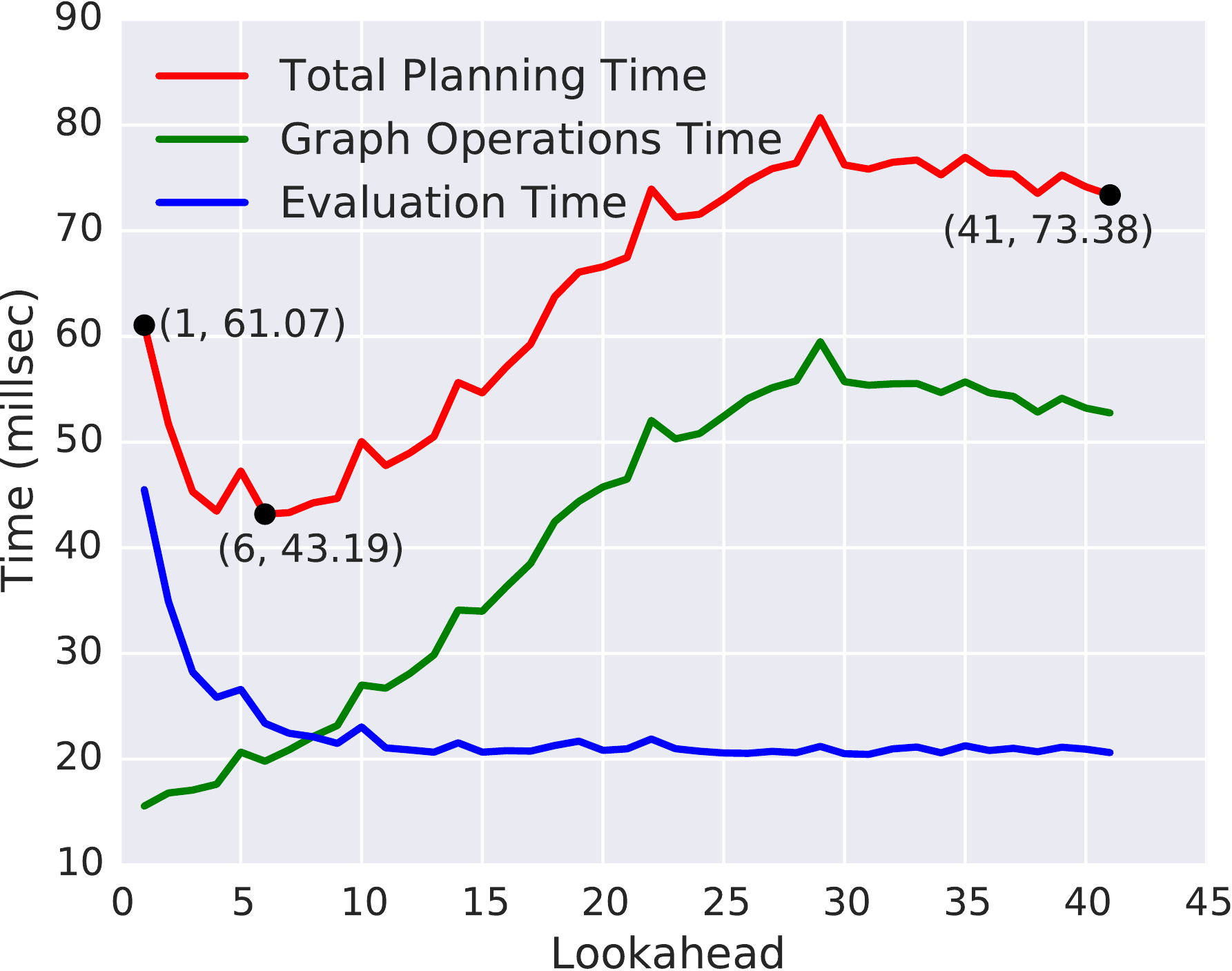} 
		\caption{}\label{fig:2DCompare}
	\end{subfigure} \hspace*{-1em}
	\begin{subfigure}[h]{0.233\textwidth}
		\centering
		\includegraphics[height=3cm]{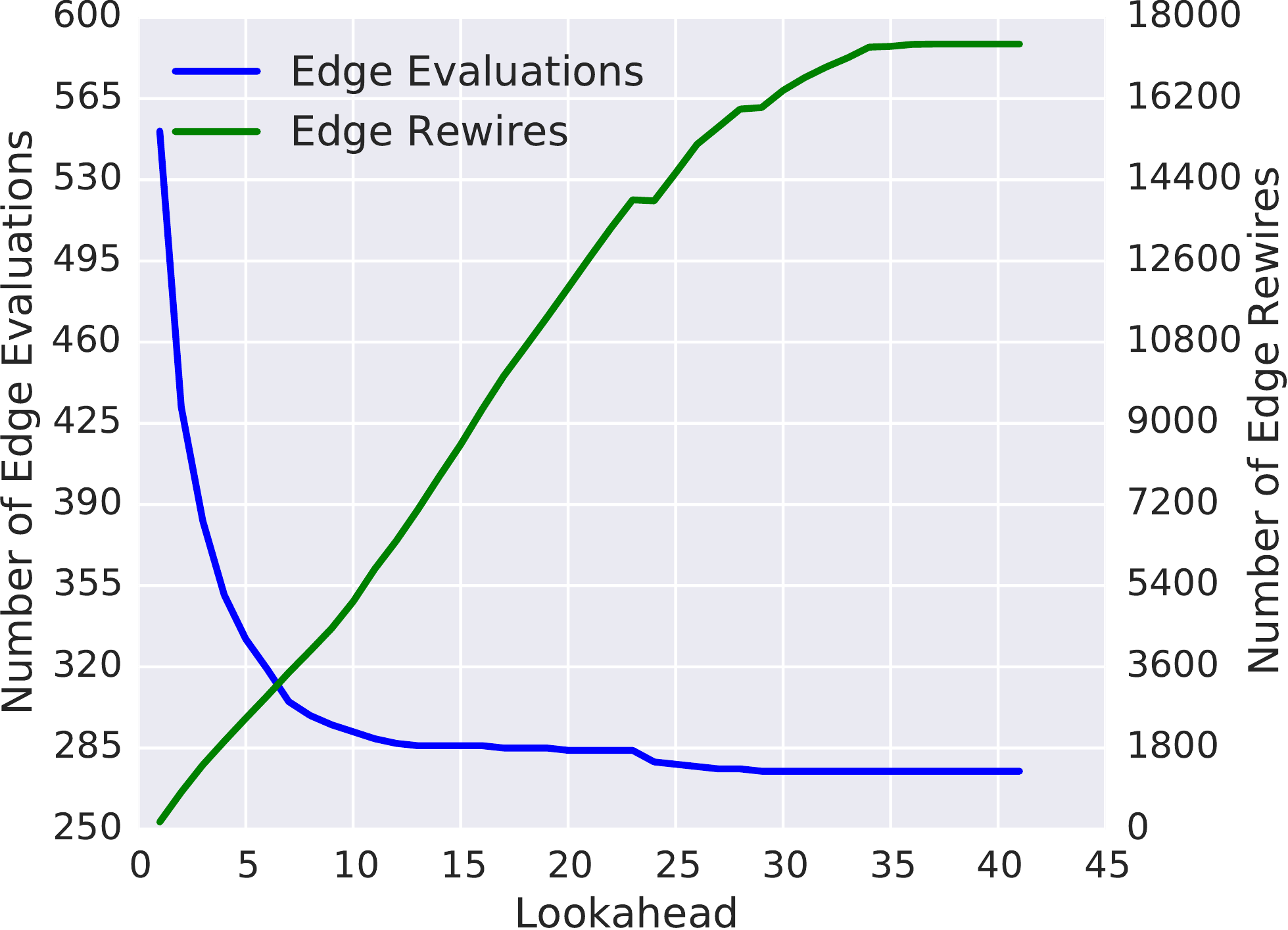} 
		\caption{}\label{fig:2DCompare2}
	\end{subfigure}
\caption{Visualization of edge evaluations by (\subref{fig:lwaR}) \lwastar, (\subref{fig:optimalR}) \ab with an optimal lookahead~$\alpha^*$, and (\subref{fig:lazySPR}) \lazySP. Source and target are $(0.1,0.1)$ and $(0.9,0.9)$, respectively. Edges evaluated to be in collision and free are marked red and blue, respectively. Computation times (\subref{fig:2DCompare}) and number of operations~(\subref{fig:2DCompare2}) of \ab as a function of the lookahead~$\alpha$. }
\label{fig:roadmapStrip}
\vspace{-3mm}
\end{figure*}
%
%

\begin{restatable}{lem}{lemmaSeven}
\label{lem:space}
The total space complexity of our algorithm is bounded by $O(n + m)$, where $n$ and $m$ are the number of vertices and edges in $\calG$, respectively.
\end{restatable}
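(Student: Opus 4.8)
The plan is to charge the algorithm's memory to its three persistent data structures—the graph $\calG$, the lazy shortest-path tree $\calT$, and the four priority queues $\Qfront$, $\Qextend$, $\Qrewire$, $\Qupdate$—and to show that each of them is bounded by $O(n+m)$. Storing $\calG = (\calV,\calE)$ in adjacency-list form requires $\Theta(n+m)$ space, and since edges are only ever \emph{removed} from the graph when found to be in collision (Alg.~\ref{alg:highLevel}, line~15), this term never grows. It already accounts for the $m$ in the bound, so the entire task reduces to showing that $\calT$ together with the four queues occupies only $O(n)$ space.

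For the tree, each node entry $\tau = (u,p,c,\ell,b)$ stores a constant number of fields (a vertex, a parent pointer, the two costs $c[\tau]$ and $\ell[\tau]$, and the budget $b[\tau]$), so it suffices to prove that $\calT$ contains $O(n)$ nodes. The crux of the proof—and the step I expect to be the main obstacle—is to argue that the map from the nodes of $\calT$ to the vertices of $\calG$ is injective, so that $|\calT| \le n$. This is precisely the structural point that separates \ab from \lwastar, which deliberately keeps one node per incoming edge and can therefore hold up to $\Theta(m)$ nodes. In contrast, \ab maintains $\calT$ as a single lazy shortest-path tree: within the evaluated region it is an ordinary shortest-path tree with one node per vertex, and within the $\alpha$-band each reachable vertex is assigned the unique lazily-shortest path from its border ancestor. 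I would establish this injectivity invariant by induction over the iterations of the main loop, verifying that none of the three tree-modifying operations breaks it: band extension adds each newly reachable vertex at most once (retaining only its minimal-lazy-cost path, in Dijkstra fashion), the descendant update after a collision-free evaluation merely revises the fields $c,\ell,b$ of nodes already present, and the rewiring after a collision \emph{re-parents} existing descendants rather than duplicating them. Given injectivity, $|\calT| \le n$ and the tree occupies $O(n)$ space.

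Finally, each of the four priority queues holds a subset of the nodes of $\calT$—the frontier nodes, the leaf nodes with budget below $\alpha$, the nodes awaiting rewiring, and the nodes awaiting update, respectively—with only $O(1)$ bookkeeping per entry. Their combined size is therefore $O(|\calT|) = O(n)$. Summing the three contributions yields a total space complexity of $O(n+m)$, as claimed. It is worth noting that this bound is robust to the precise form of the injectivity argument: even in the weaker regime where one keeps at most one node per \emph{incoming edge}, the tree would be bounded by $O(m)$, which is still subsumed by $O(n+m)$. Once the injectivity invariant is in hand, the remaining accounting is routine, so that invariant is the only delicate point.
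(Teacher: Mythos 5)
Your proof is correct and follows essentially the same route as the paper's: one node of \calT per vertex, a constant number of queue entries per node, plus $O(n+m)$ for storing \calG itself. The only difference is that you identify and sketch a justification for the invariant the paper simply asserts (that the node-to-vertex map is injective), which is a reasonable elaboration rather than a different approach.
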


%

\begin{restatable}{lem}{lemmaEight}
\label{lem:time}
The total running time of the algorithm is bounded by $O(n d^\alpha \cdot \log(n) + m)$, where $n$ and $m$ are the number of vertices and edges, $d$ is the maximal degree of a vertex and $\alpha$ is the lookahead.
\end{restatable}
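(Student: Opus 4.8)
\section*{Proof proposal for Lemma~\ref{lem:time}}

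The plan is to split the total running time into two pieces: a one-time per-edge cost that I will bound by $O(m)$, and the cost of the graph operations that build and maintain $\calT$ (extending, updating and rewiring the $\alpha$-band, together with the associated priority-queue work), which I will bound by $O(n d^\alpha \log n)$. Adding the two gives the claim.

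First I would establish the central structural estimate. By the band invariant, every frontier node has budget exactly $\alpha$ and all leaf nodes are frontier nodes, so the subtree of $\calT$ consisting of the $\alpha$-band descendants of any single border node has depth at most $\alpha$; since each vertex has out-degree at most $d$, this subtree contains at most $1 + d + d^2 + \cdots + d^\alpha = O(d^\alpha)$ nodes. Consequently a single band extension (Alg.~\ref{alg:highLevel}, lines~3--5), a single update cascade after a collision-free evaluation (line~13), and a single subtree rewiring after a collision (lines~16--17) each touch only $O(d^\alpha)$ nodes and their incident edges.

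Next I would bound how often these routines are invoked. By Lemma~\ref{lem:correctness}, as soon as an edge into $v$ is evaluated and found collision-free the value $c[\tau_v] = w^*(v)$ is settled permanently, and since frontier nodes are processed in nondecreasing key order this settling is monotone and irreversible. Hence each vertex is promoted from an $\alpha$-band node to a settled border node at most once, so the band is extended at most $n$ times, contributing $O(n d^\alpha)$ node operations. Every edge is evaluated at most once --- once known to be free its weight is fixed, and once known to be in collision it is deleted from $\calG$ --- which yields the additive $O(m)$ term. The delicate point is the rewiring forced by collisions: here I would give a charging argument showing that the $O(d^\alpha)$ nodes recomputed after a collision can be charged against the monotone advance of the border guaranteed by Lemma~\ref{lem:correctness}, so that the total number of $\alpha$-band node (re)creations over the entire execution stays $O(n d^\alpha)$.

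Finally I would account for the priority-queue cost. By Lemma~\ref{lem:space} the algorithm stores only $O(n+m)$ nodes at any instant, so each of \Qfront, \Qextend, \Qrewire and \Qupdate holds $O(n+m)$ elements and every insertion or extraction costs $O(\log(n+m)) = O(\log n)$. Multiplying the $O(n d^\alpha)$ node operations by this $O(\log n)$ per-operation cost and adding the $O(m)$ edge term gives $O(n d^\alpha \log n + m)$. I expect the main obstacle to be exactly the amortized analysis of the rewiring step: proving that repeated collisions never force the same region of the $\alpha$-band to be rebuilt more than $O(n d^\alpha)$ times in total, which is where the monotonicity supplied by the correctness lemma must be invoked with care.
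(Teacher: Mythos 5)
Your decomposition---a one-time $O(m)$ charge for edge evaluations plus $O(\log n)$ priority-queue work per band-node touch, with the $\alpha$-band subtree below any border node containing $O(d^\alpha)$ nodes---matches the paper's, and your treatment of the collision-free case (at most $n$ settling events, each triggering an $O(d^\alpha)$ cascade) is sound. The gap is exactly where you flag it: the rewiring triggered by collisions. Counted per event, there can be up to $m$ collisions, each touching $O(d^\alpha)$ band nodes, which only yields $O(m\, d^\alpha \log n) = O(n\, d^{\alpha+1}\log n)$, a factor of $d$ worse than claimed. You defer closing this to a charging argument ``against the monotone advance of the border,'' but that is not the mechanism that works: the border does not sweep past a vertex in a way that limits how often it is rewired, since a vertex $v$ can sit in the $\alpha$-band and be rewired once for each distinct upstream edge that is discovered to be in collision.

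The paper closes the gap with a per-vertex charge that needs no monotonicity at all: every update or rewire of $\tau_v$ is triggered by the evaluation of some edge lying within $\alpha$ hops upstream of $v$ on a path from $\vs$ to $v$. There are only $O(d^\alpha)$ such edges, and each edge is evaluated at most once, so $\tau_v$ is touched $O(d^\alpha)$ times over the entire execution---uniformly covering extensions, updates, and rewires. Summing over the $n$ vertices and multiplying by the $O(\log n)$ queue cost gives $O(n\, d^\alpha \log n)$, with the additive $O(m)$ accounting for the evaluations themselves. This is the same double count you set up, but grouped by the touched vertex rather than by the triggering event; grouping by event is precisely what loses the factor of $d$. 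Replace your proposed border-monotonicity charge with this per-vertex charge and your proof goes through.
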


\newpage
\section{Results}
\label{sec:experiments}
\begin{figure*}	
	\centering
	\begin{subfigure}[h]{0.22\textwidth}
		\centering
		\includegraphics[height=3.3cm]{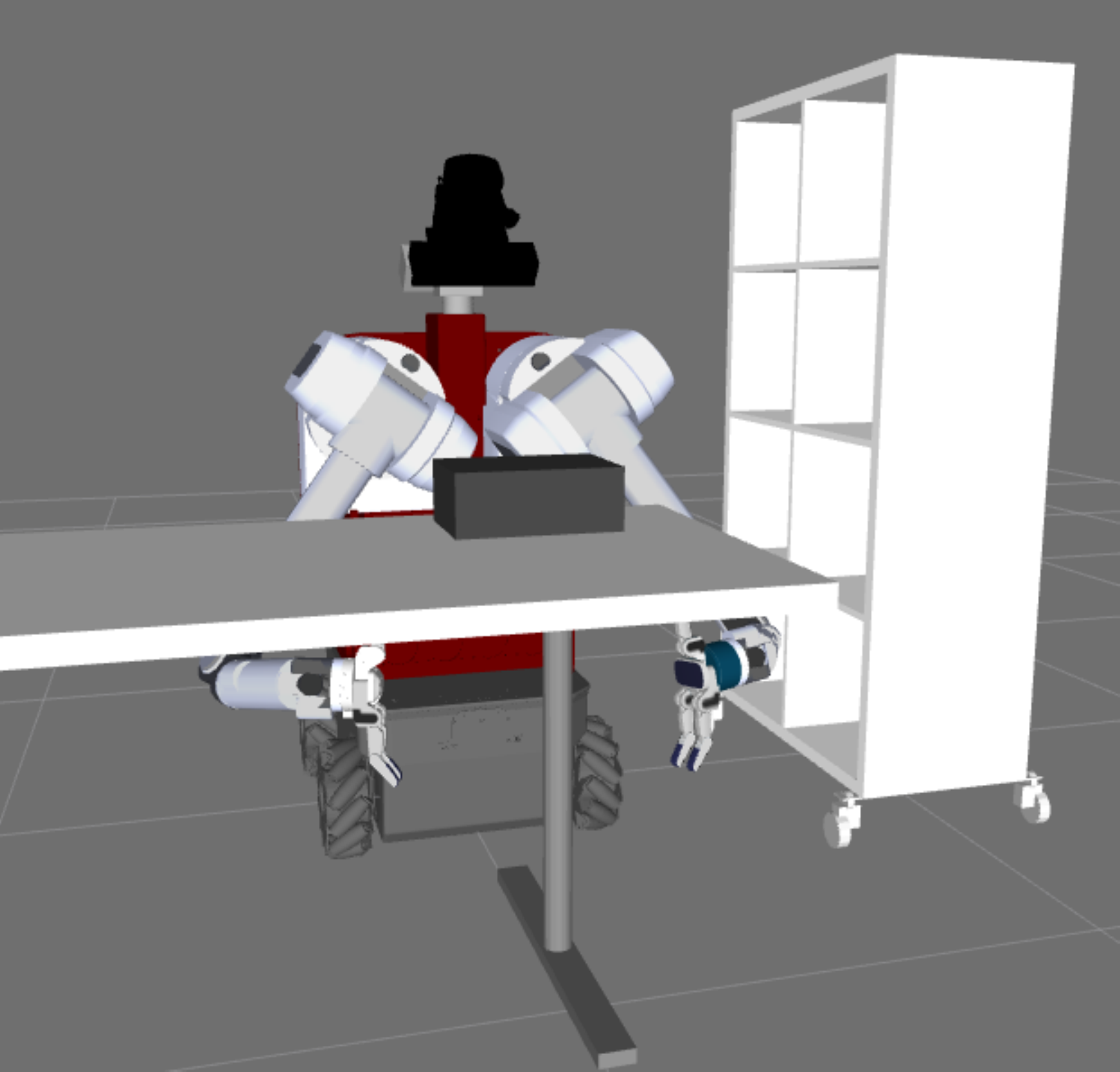}  
		\caption{}\label{fig:herba}		
	\end{subfigure}
	\begin{subfigure}[h]{0.22\textwidth}
		\centering
		\includegraphics[height=3.3cm]{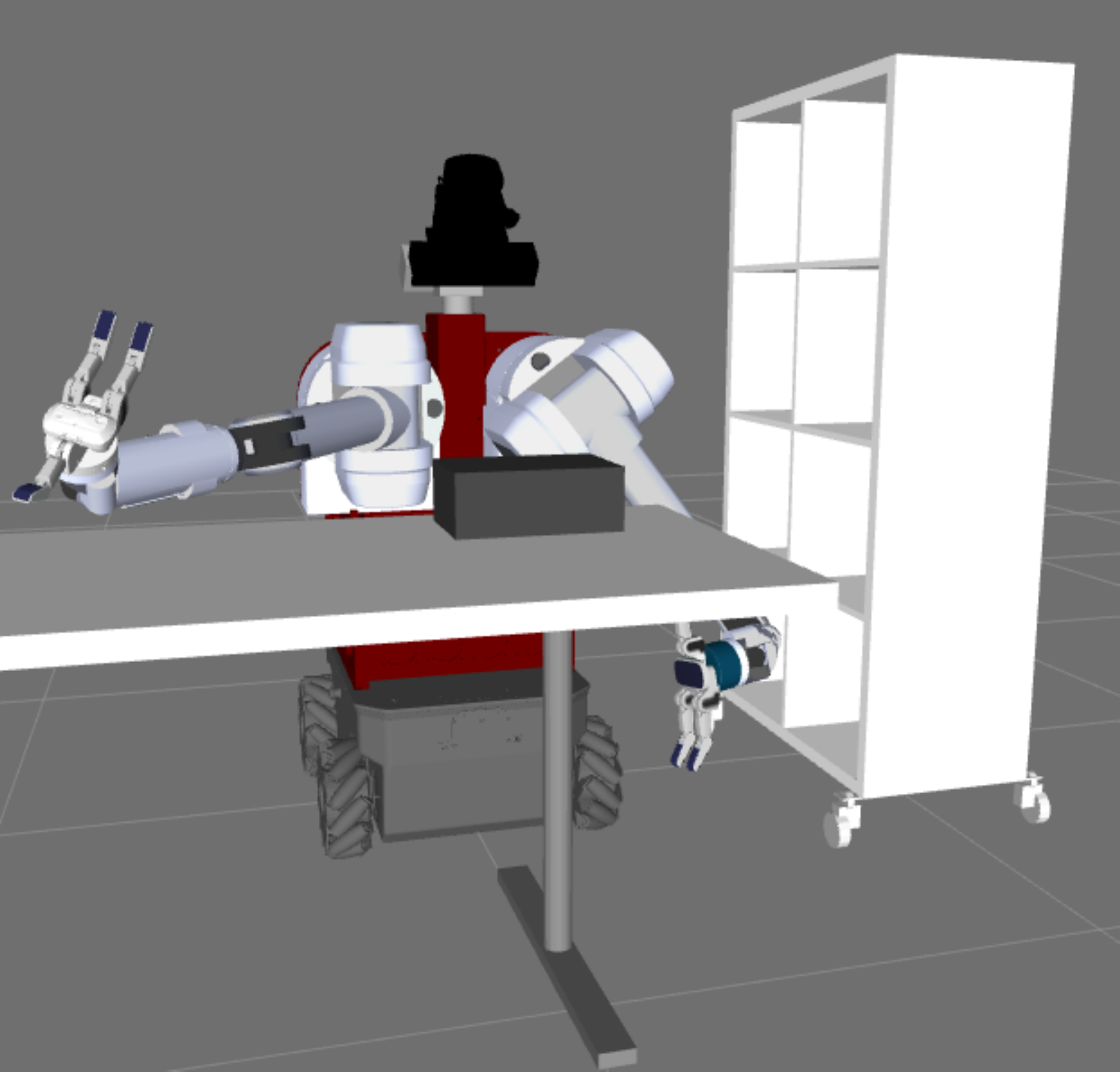}  
		\caption{}\label{fig:herbb}
	\end{subfigure}
	\begin{subfigure}[h]{0.22\textwidth}
		\centering
		\includegraphics[height=3.3cm]{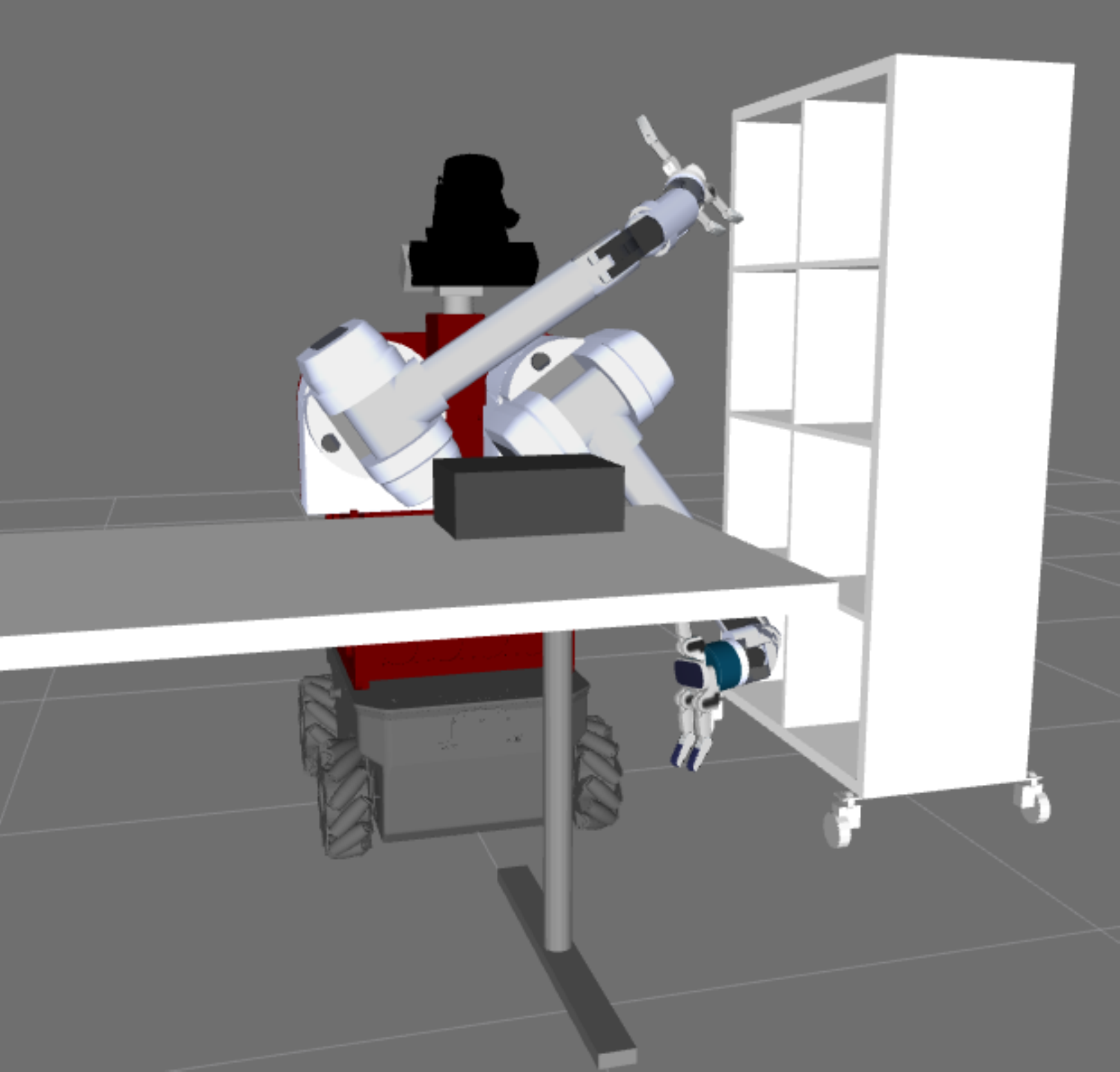}  
		\caption{}\label{fig:herbc}
	\end{subfigure}
	\begin{subfigure}[h]{0.27\textwidth}
		\centering
		  \includegraphics[height=3.4cm]{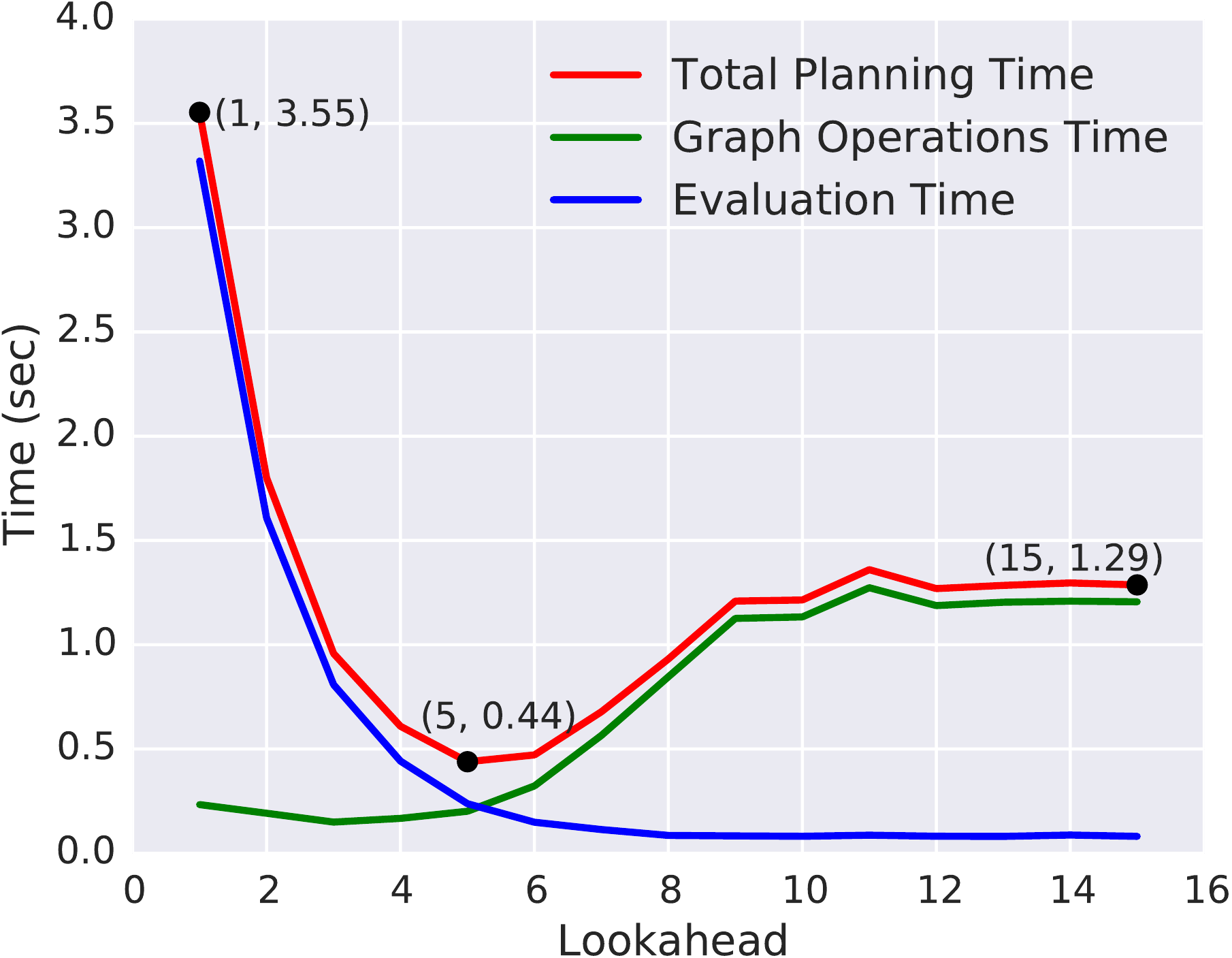}
		\caption{}\label{fig:herb5}
	\end{subfigure}
	\vspace{-3.5mm}
\caption{Manipulation experiments. (\subref{fig:herba}-\subref{fig:herbc}) HERB is required to reach into the bookshelf while avoiding collision with the table.
(\subref{fig:herb5}) Edge evaluation, rewiring and total planning time as a function of the lookahead. }
\label{fig:herbstrip}
	\vspace{-3.5mm}
\end{figure*}	
\begin{figure*}	
	\centering
	\begin{subfigure}[h]{0.33\textwidth}
		\centering
		\includegraphics[height=3.75cm]{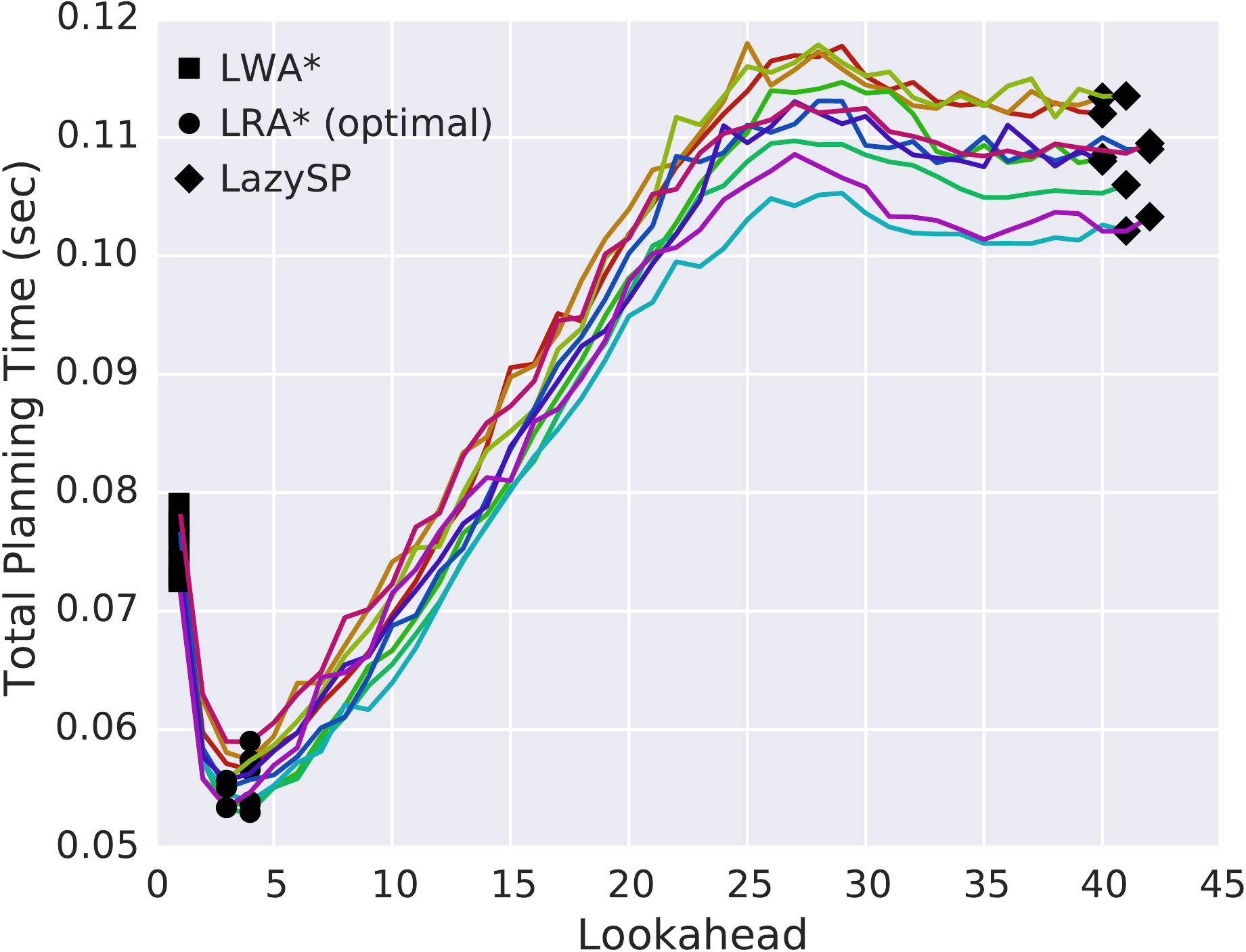} 
		\caption{$\mathbb{R}^2$ environments}\label{fig:instance2D}		
	\end{subfigure}
	\begin{subfigure}[h]{0.33\textwidth}
		\centering
		\includegraphics[height = 3.75cm]{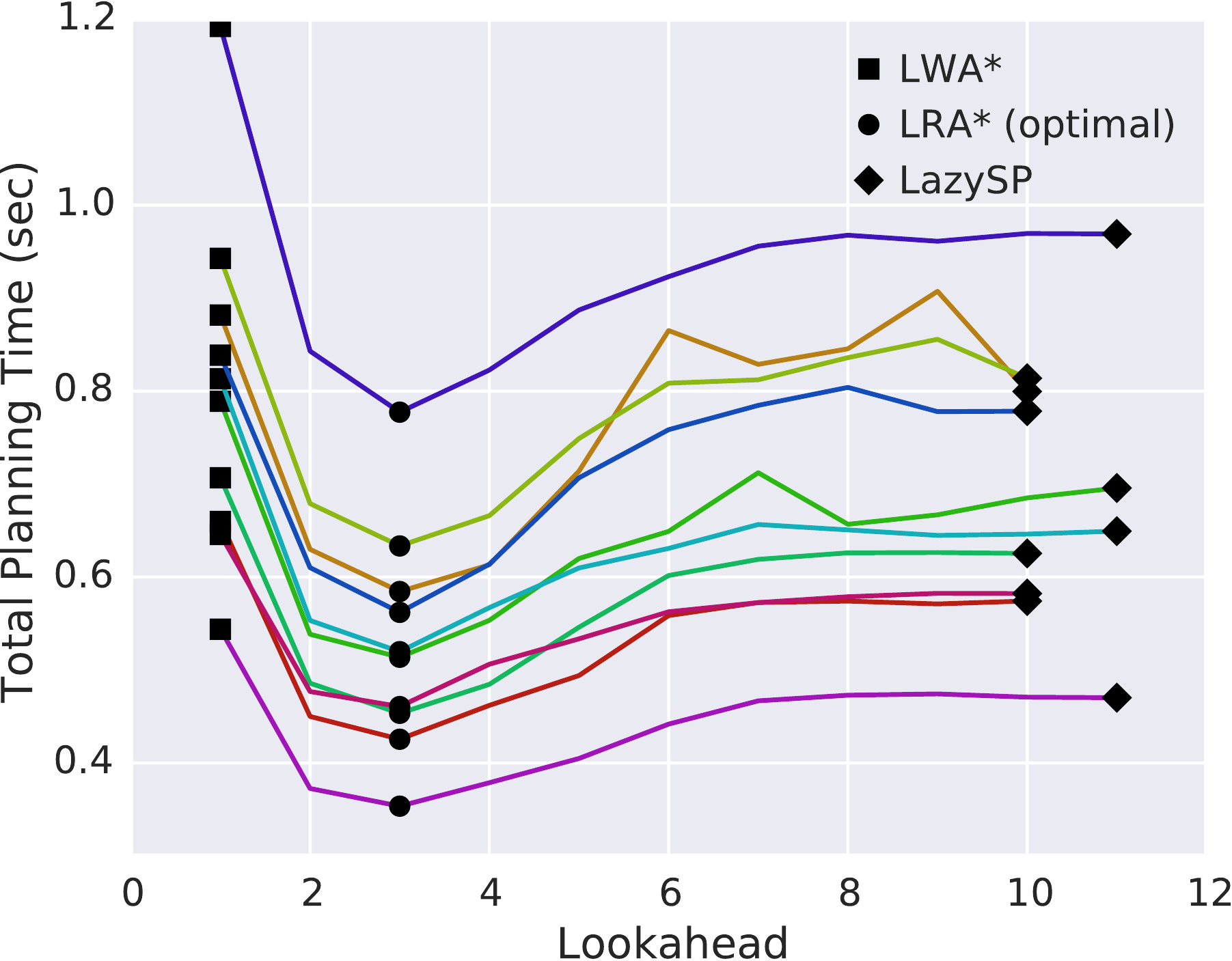} 
		\caption{$\mathbb{R}^4$ environments}\label{fig:instance4D}
	\end{subfigure}
	\begin{subfigure}[h]{0.33\textwidth}
		\centering
		\includegraphics[height=3.75cm]{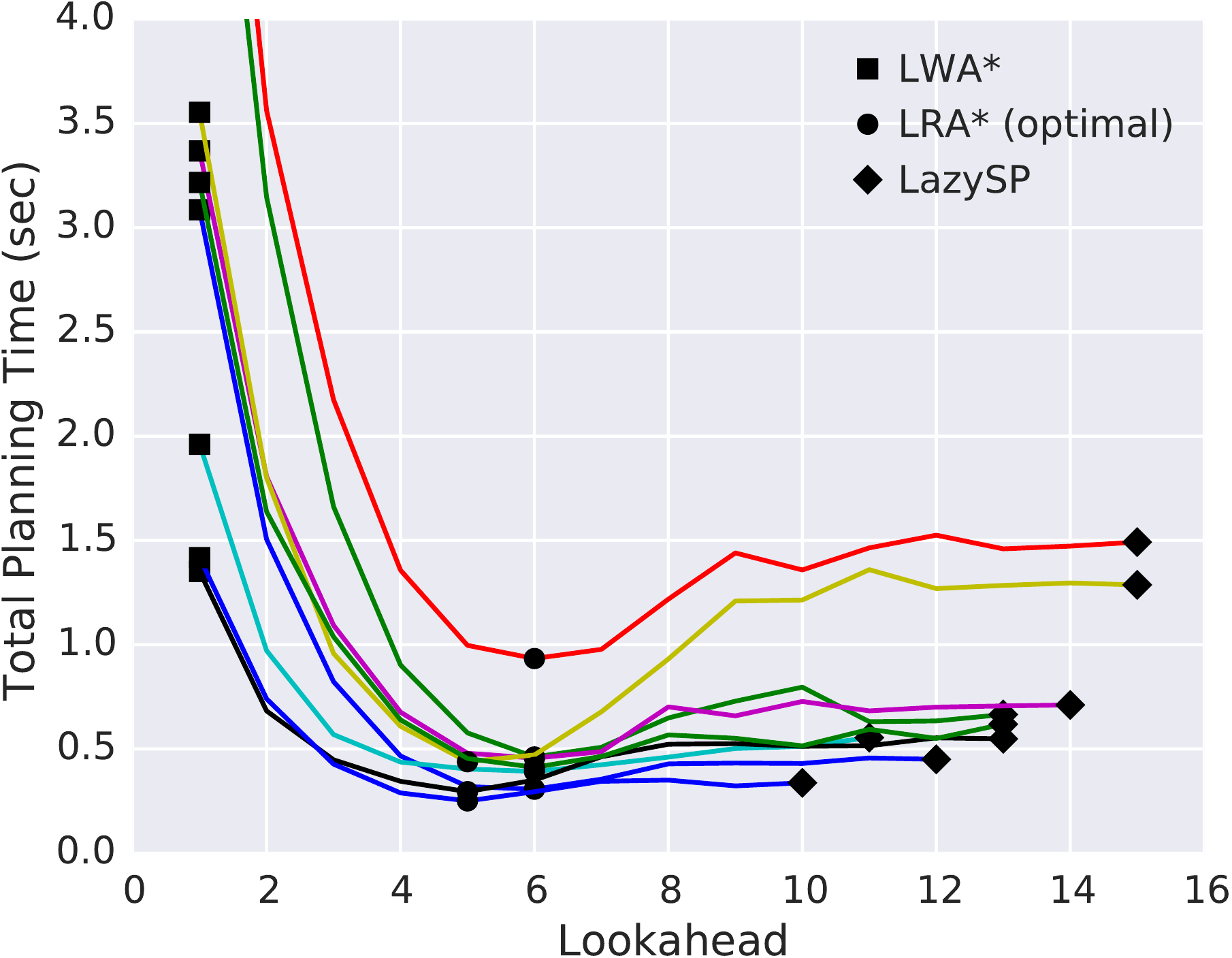} 
		\caption{Manipulation environments}\label{fig:allHerb}
	\end{subfigure}
	\caption{Planning time vs. lookahead for similar problems on different environments.}\label{fig:all}
	\vspace{-3.5mm}
\end{figure*}

In this section we empirically evaluate \ab.
We start by demonstrating the different properties of \ab as a \emph{family} of algorithms parameterized by $\alpha$.
Specifically, we show that to minimize the total planning time, an optimal lookahead~$\alpha^*$ exists (where $1 < \alpha^* < \infty$) that allows to balance between edge evaluation and graph operations.

We then continue to evaluate properties of the optimal lookahead $\alpha^*$. While choosing the exact lookahead value is out of the scope of the paper (see Sec.~\ref{sec:futureWork}), we provide general guidelines regarding this choice.

\subsection{Experimental setup}
We evaluated \ab on a range of planning problems in simulated random $\mathbb{R}^2$ and $\mathbb{R}^4$ environments as well as real-world manipulation problems on HERB \cite{Srinivasa2009}, a mobile manipulator with 7-DOF arms. 
	We implemented the algorithm using the Open Motion Planning Library (OMPL) \cite{ompl}\footnote{Simulations were run on a desktop machine with 16GB RAM and an Intel i5-6600K processor running a 64-bit Ubuntu 14.04.}. Our source code is publicly available and can be accessed at \url{https://github.com/personalrobotics/LRA-star}.
	
\subsubsection{Random environments}
We generated 10 different random environments  for $\mathbb{R}^2$ and $\mathbb{R}^4$. 	
For a given environment, we consider 10 distinct random roadmaps for a total of 100 trials for each dimension. 
Each roadmap was constructed as follows:
The set of vertices were generated in a unit hypercube using Halton sequences \cite{Halton64}, which are characterized by low dispersion. 
The vertex positions were then offset by uniform random values to generate distinct roadmaps.
An edge existed in the graph between every pair of vertices whose Euclidean distance is less than a predefined threshold~$r$.
The value~$r$ was chosen to ensure that, asymptotically, the graph can capture the shortest path connecting the start to the goal~\cite{JSCP15}.
The number of vertices was chosen such that the roadmap contained a solution.
Specifically, it was 2000 for  $\mathbb{R}^2$ and 3000 for  $\mathbb{R}^4$.

The source and target were set to $(0.1, 0.1, \ldots, 0.1)^d$ and $(0.9, 0.9, \ldots, 0.9)^d$, respectively, with $d \in \{ 2, 4\}$.
For the 2D environments, the obstacles were a set of axis-aligned hypercubes that occupy 70\% of an environment to simulate a cluttered space. 
One such randomly-generated environment is shown in Fig. \ref{fig:roadmapStrip} along with the edges evaluated by \lwastar, \lazySP and \ab with an optimal lookahead. 
For the 4D environments, we chose a maze generated similar to the recursive mazes defined by \citeauthor{JSCP15}. 
The choice of such a maze in $\mathbb{R}^4$ is motivated by the fact that it is inherently a \emph{hard} problem to solve, since many lazy shortest paths need to be invalidated before a true shortest path is determined by the planner. 
A detailed discussion about the complexity of the recursive maze problem is found in \cite{JSCP15}.

\subsubsection{Manipulation}
Our manipulation problems simulate the task of reaching into a bookshelf while avoiding obstacles such as a table.
We consider 10 different roadmaps, each with 30,000 vertices constructed by applying a random offset to the 7D Halton sequence. 
Two vertices are connected if their Euclidean distance is less than $r =1.3$ radians.
These choices are similar to the simulated $\mathbb{R}^n$ worlds, where we choose $r$ using the bounds provided by \citeauthor{JSCP15} and enough vertices such that we are ensured a solution exists on the roadmap.
Fig. \ref{fig:herbstrip} illustrates the environment and the planning problem considered.

\subsection{Properties of \ab}
Figures \ref{fig:roadmapStrip} and \ref{fig:herbstrip} visualize the search space for our simulated~$\mathbb{R}^2$ environments as well as our manipulation environment.
For both settings, we ran \ab with a range of lookahead values.

Notice that the number of  edge evaluations as a function of the lookahead is a monotonically  decreasing function (Fig.~\ref{fig:2DCompare2} and Lemma~\ref{lem:larger_lookahead_no_greediness}).
However, the time spent on edge evaluations (Fig.~\ref{fig:2DCompare}) is not monotonic. This is because the time for evaluating an edge depends on the edge length and if it is in collision. Having said that, the overall trend of this plot decreases as the lookahead increases.
In addition, the time spent on rewiring (Fig.~\ref{fig:2DCompare} and~\ref{fig:herb5} roughly increases with the lookahead.
Following these two trends we find that, in both experiments, an intermediate lookahead does indeed balance edge evaluations and graph operations.
This, in turn reduces the overall planning time.
\textVersion{For additional experiments, see supplementary material.}{}

%
%
%

\subsection{Lazy Lookahead and Dynamic Heuristic}
\label{sec:dynamicResult}
We consider every border node in the $\alpha$-band to be associated with a dynamic heuristic that extracts information about the graph structure up to $\alpha$ edges away, and the static heuristic associated with a frontier node exactly $\alpha$ edges away. 

As the lookahead increases from one to infinity, \ab lazily obtains an increasing amount of information about the underlying graph structure. This information, encoded in the dynamic heuristic, allows \ab with a larger lookahead to search a smaller region of the graph and evaluate at most as many edges as \ab with a smaller lookahead (Lemma \ref{lem:larger_lookahead_no_greediness}).

\begin{figure}[tb]
  \centering
  	\includegraphics[height=4cm]{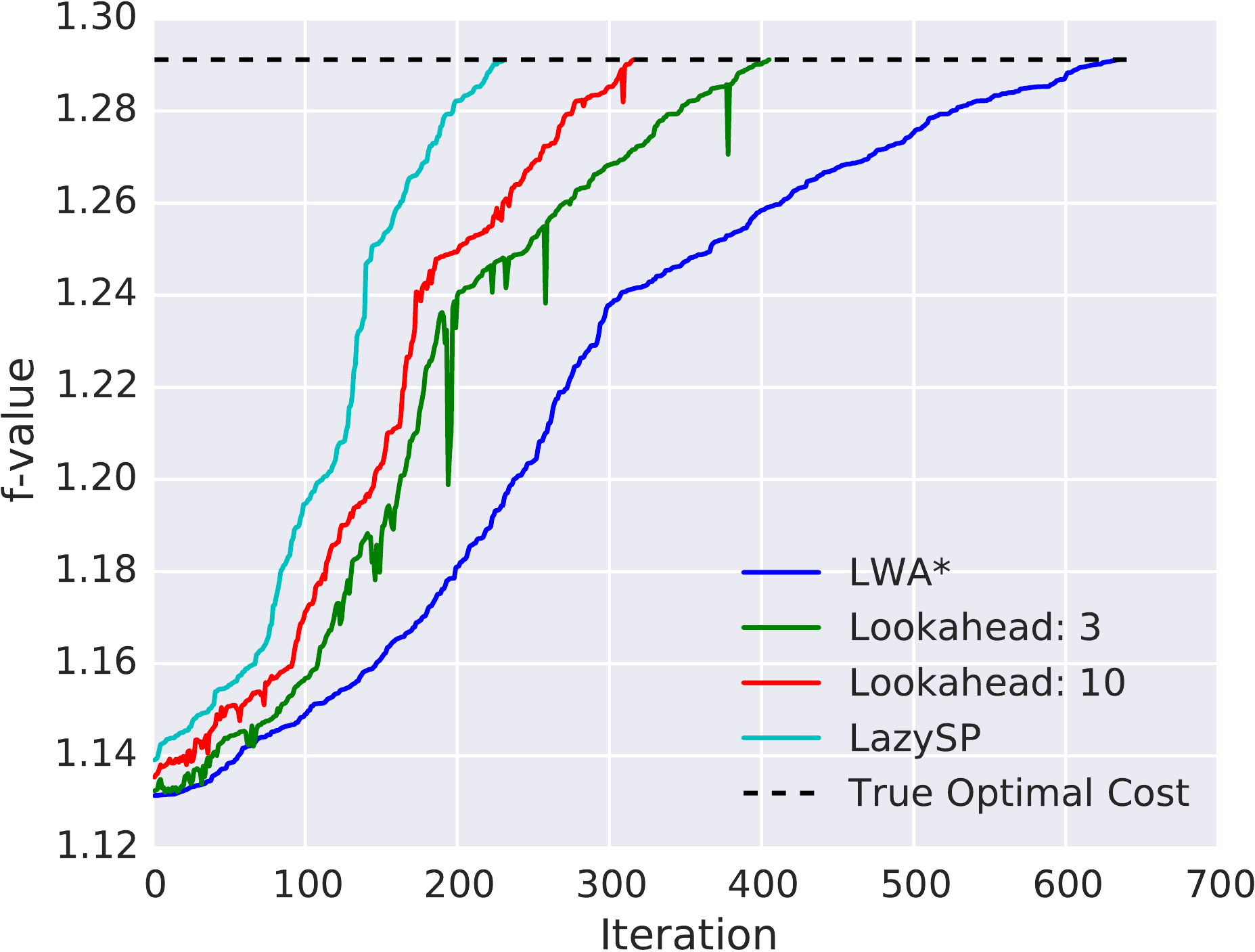}
  \caption{
	The f-value of the top node popped from $\Qfront$ every iteration of LRA* for various lookaheads.
  	}
   	\label{fig:fValue}
\end{figure}

In Fig. \ref{fig:fValue} we plot, for a 2D problem, the f-value\footnote{Recall f-value is the key used to order nodes in $\Qfront$. For a given node, it is the sum of estimated cost-to-come and cost-to-go.} of the top node in $\Qfront$ for each iteration of the algorithm. Note that each iteration of the algorithm corresponds to an edge evaluation.
\lazySP converges to the optimal f-value in the fewest number of iterations while
\lwastar evaluates the most number of edges since it is least informed amongst the family of \ab algorithms.

We observe in Fig. \ref{fig:fValue} that for \ab with an intermediate  lookahead $1 < \alpha < \infty$, the f-values of the nodes popped from the $\Qfront$ do not monotonically increase. This is attributed to the fact that the dynamic heuristic does not necessarily capture the true underlying graph structure when there are edges in collision. When an edge is found to be in collision, the $\alpha$-band can potentially shrink closer to the source based on the budget available. This generates new leaf nodes closer to the source which can possibly be characterized by a lower f-value compared to the f-value of the node popped in the earlier iteration. 
This does not occur in \lwastar since it has a dynamic heuristic over one step.
In case of a collision, the heuristic is trivially updated after removing the edge from the graph. 
\lazySP does not exhibit this behavior either since it always pops the goal node and the f-value of a \emph{particular} node in the graph can only monotonically increase.

\subsection{Properties of optimal lookahead $\alpha^*$}
While determining how to choose the lookahead value for a specific problem instance is beyond the scope of this paper (see Sec. \ref{sec:futureWork}), we provide some insight on some properties of optimal lookahead $\alpha^*$.
In Fig.~\ref{fig:all} we plotted the planning time as a function of the lookahead for different random instances.
We observe two phenomena:
(i)~the value of the optimal lookahead $\alpha^*$ has a very small variance when considering similar environments.
Thus, if we will face multiple problems on a specific type of environment, it may be beneficial to run a preprocessing phase to estimate $\alpha^*$.
(ii)~As the dimension increases, the relative speedup, when compared to \lazySP diminishes.
We conjecture that this is because the cost of edge evaluation increases with the complexity of the robot (namely, with the dimension).



\section{Future Work}
\label{sec:futureWork}

\subsubsection*{Setting the lazy lookahead}
Our formulation assumed that the lazy lookahead~$\alpha$ is fixed and provided by the user.
In practice, we would like to automatically find the value of~$\alpha$ and, possibly, change its value through the running time of the algorithm.
This is especially useful when the search algorithm is interleaved with graph construction---namely, when vertices and edges are incrementally added to~$\calG$ (see, e.g.,~\cite{GSB15}).

\subsubsection*{Non-tight estimates of edge weights}
In this paper we assumed that $\hat{w}$ tightly estimates the true cost $w$ (see Eq.~\ref{wEst}), however it can be easily extended to take into account non-tight estimates.
Once an edge $(u,v)$ is evaluated, if its true cost is larger than the estimated cost, the entire subtree rooted at~$v$ may need to be rewired to potentially better parents.
Our immediate goal is to run our algorithm on such settings.
 
\subsubsection*{Alternative budget definitions and optimization criteria}
In this paper, we defined the budget and the optimization criteria in terms of number of unevaluated edges and path length, respectively.
However, the same approach can be used for alternative definitions.
For example, we can define the budget in terms of the \emph{length} of the unevaluated path.
This definition is somewhat more realistic since the computational cost of evaluating an edge is typically proportional to its length.
A different optimization criteria that we wish to consider is minimizing the \emph{expected}  number of edges checked given some belief over the probability that edges are collision-free.
This can be further extended to balance between path length (which is a proxy for the execution time) and number of edge evaluations (which is a proxy for the planning time). Here,  we need to consider some combination of path length and probability of being collision-free as the optimization criteria.

\textVersion{}{
\subsubsection*{Implementing \ab using advanced priority queues}
Recall that \ab maintains only the best path to reach each frontier node in the search tree $\calT$ at every point in time.
This is done to avoid an exponential increase in the memory footprint of \ab with respect to the lazyiness value~$\alpha$.
Consequently, all the priority queues that we use require the ability to update the key of elements in the queue.
In contrast, \lwastar, which may maintain several paths to the same node, only requires a priority queue that supports inserting elements and popping the element with the minimal key.
Interestingly, such priority queues allow for a significant speedup in \textsf{A*}-like algorithms~\cite{chen2007priority}.
\ab can easily be modified to maintain all unevaluated paths to frontier nodes.
If the lazyiness value~$\alpha$ is relatively small, then the use of fast priority queues that do not support key updates may significantly speed up the algorithm's running time. 
}

\ignore{
\section{Extensions}
\label{sec:extensions}
\subsection{Non-tight lazy-estimation function}
\os{Describe additional required book-keeping}

\subsection{Alternative budget definition}
\os{describe setting, motivation, required changes, implications, etc.}

\os{Are we optimal with respect to the number of collision detections which is proportional to edge length?}

\subsubsection{Budget defined in terms of cost}
Note that when the budget is defined in terms of cost, for every path $P$, its lazy budget equals 
$\hat{w}(P_{\rm tail})$.

\begin{lem}
Let $P$ be a path connecting $\vs$ to vertex $v \in \calV$ through the border vertex $u$.
If $P \in \calP_v$,
then there is no other path $P' \in\calP_v$ for which $u$ is a border vertex.
\end{lem}
\begin{proof}
Assume that there exists some path $P' \in \calP_v$ for which $u$ is a border vertex.
If $w(P_{\rm head}) \neq w(P_{\rm head}')$ then either 
the path $P_{\rm head}' \cdot P_{\rm tail}$ dominates $P$ which contradicts $P \in \calP_v$
or 
the path $P_{\rm head} \cdot P_{\rm tail'}$ dominates $P'$ which contradicts $P' \in \calP_v$.
Furthermore, we have that 
$b(P) = \hat{w}(P_{\rm tail})$ and that 
$b(P') = \hat{w}(P_{\rm tail}')$.
Thus,
$$
\bar{w}(P) = w(P_{\rm head}) + \hat{w}(P_{\rm tail}) =  w(P_{\rm head}') + b(P),
$$
and
$$
\bar{w}(P') = w(P_{\rm head}') + \hat{w}(P_{\rm tail}') =  w(P_{\rm head}') + b(P').
$$

If 
$\bar{w}(P) \leq \bar{w}(P')$ then $b(P) \leq b(P')$ and $P$ dominates $P'$.
Similarly, if 
$\bar{w}(P') \leq \bar{w}(P)$ then $b(P') \leq b(P)$ and $P'$ dominates $P$.
Both cases contradict that $P,P' \in \calP_v$ which concludes our proof.
\end{proof}

Thus, the number of nodes in the $\alpha$-band is at most $n_B \cdot n$, where $n_B$ is the maximal number of border points used throughout the search.

\subsection{Alternative optimization criteria}

Assume that we have a belief function $\Omega: \calE \rightarrow [0,1]$ which gives  each edge a probability of being collision free. 
Instead of finding the shortest path, we are now interested in finding the most probable cost. Formally, given a path $P$, the probability that $P$ is collision free is
$$
\Omega(P) = \prod_{e \in P} \Omega(e) 
$$
Thus, the cost $c[\tau]$ of an entry~$\tau$ will be the probability that the path $P[\tau]$ is collision free. Specifically, $c[\tau] = c[p[\tau]] \cdot \Omega(u[p[\tau]], u[\tau])$.
We order entries in the main priority queue~$\Qfront $ according to their cost (probability), from high to low and proceed as usual.

In this setting \ab attempts to minimize the \emph{expected} number of edges checked in order to find a path that connects $\vs$ and $\vg$ (regardless of its length).
\os{can we prove that \ib is edge optimal under expectation for this cost function?}

Recall that when the cost is path length, \ab computes the shortest path while balancing graph operations and edge evaluation as a function of $\alpha$. When $\alpha = \infty$ it attempts to minimize the number of edge evaluations, but only as a secondary objective function.
On the other hand, when the cost is probability of being collision free, minimizing the (expected) number of edge evaluations is the primary objective function.
If we want to balance between path length (which is a proxy for the execution time) and number of edge evaluations (which is a proxy for the planning time), we need to consider some combination of path length and probability of being collision free.
Consider the following cost function for a path~$P$,
$$
\mu(P) = \theta \cdot w(P) + (1 - \theta) /  \Omega(P),
$$
for some $\theta \in [0,1]$.
In this case we simply order entries $\tau$ in~$\Qfront$ according to $\theta \cdot \hat{w}(P[\tau]) + (1-\theta) / \Omega(P[\tau])$.
Note that we will need to add fields to an entry to account for both length and probability.
\os{
(i)~can we give a bound on the length of the path? I suspect not since there may be a VERY long but probable path and a very short but improbable path. The same example can be used for any fixed $\theta$
(ii)~this means that if we want to minimize planning+execution, we are looking at a two-dimensional space of parameters: $\alpha$ and $\theta$ for a given \emph{fixed} $\Omega$
(iii)~how does the algorithm / analysis change when $\Omega$ changes with time}

}
\section{Acknowledgements}
{
The authors would like to thank Shushman Choudhury, previously at Personal Robotics Lab, now at Stanford University, for his valuable insights and discussions in the development of this work.
}
\textVersion{}
{
\begin{appendices}
\section{Algorithm Description}
\label{app:code}
In this section we provide detailed pseudo-code of \ab.
We start in Alg.~\ref{alg:main} which details the main loop used by \ab.

\subsection{Main Algorithm}
\algrenewcommand\algorithmicindent{.8em}
\begin{algorithm}[tb]
\caption{$\texttt{LRA}^*$ {($\calG,~\vs,~\vg,~\alpha$)}}
\label{alg:main}	
\begin{algorithmic}[1]
\State $\tau_{\vs} = (\vs,~\text{NIL},~0,~0,~0)$; \hspace{2mm} $\calT.\texttt{insert}(\tau_{\vs})$
\ForAll{$v \in \calV,~v \neq \vs $}
	\State $\tau_v = (v, \text{NIL}, \infty, \infty, \infty)$ \Comment{Initialization}
\EndFor
\State $\Qupdate, \Qfront, \Qrewire \gets \emptyset$ 
\State $\Qextend.\texttt{push}(\tau_{\vs})$
\State \texttt{extend\_$\alpha$\_band}$()$ 
\Comment{populate $\Qfront$}

\vspace{2mm}

\While{$\Qfront \neq \emptyset$} 
	\State $\tau \gets \Qfront$.\texttt{pop}()
	\State $P \gets P[\tau]_{\rm tail} $
	\Comment{extract path from border node to $\tau$}
	\State \texttt{evaluate\_path}$(P)$ \Comment{populate $\Qupdate, \Qrewire$}
	\If{$\tau_{\vg} \in \Qupdate$}
		\State \textbf{return} $P[\tau_{\vg}]$
		\Comment{return path from $\vs$ to $\vg$}
	\EndIf 
	\State \texttt{update\_$\alpha$\_band}$()$ \Comment{populate $\Qextend$}
	\State \texttt{rewire\_$\alpha$\_band}$()$ \Comment{populate $\Qextend$}
	\State \texttt{extend\_$\alpha$\_band}$()$ \Comment{populate $\Qfront$}

\EndWhile

\State \textbf{return} failure
\end{algorithmic}
\end{algorithm}
We start (lines~1-3) by adding a node corresponding to the source into the search tree and initializing all other nodes.
We continue  (lines~4-5) by initializing all the priority queues used by the algorithm.
The algorithm then extends the $\alpha$-band (line~6 and Alg.~\ref{alg:extend}) which computes the frontier nodes stored in \Qfront.

From this point, the algorithm iterates between choosing a path  (defined by the frontier node with the minimal cost-to-come) and evaluating the first edge along this path (lines~8-10).
If the target was reached, the algorithm terminates (lines~11-12).
This evaluation also adds nodes to either~\Qupdate and \Qrewire, depending if the edge evaluated was collision free or not.
A node will be added to \Qupdate because the budget of all nodes in its subtree needs to be updated.
Similarly, a node will be added to \Qrewire because its current parent is in collision and all nodes in its subtree need to be rewire.
Thus, if the target was not reached the algorithm 
updates vertices in the $\alpha$-band (line 13 and Alg.~\ref{alg:update}), 
rewires vertices if needed (line 14 and Alg.~\ref{alg:rewire}),
and re-extends the $\alpha$-band
(line 15 and Alg.~\ref{alg:extend}).

\subsection{Path Evaluation}

\begin{algorithm}[tb]
\caption{\texttt{evaluate\_path} \small{($P = \left(\tau_0,\tau_1,\ldots,\tau_\alpha \right)$)}}
\label{alg:evaluatePath}
\begin{algorithmic}[1]

	\State $e \gets (u[\tau_0],u[\tau_1])$
	\If{$w(e) = \hat{w}(e)$} \Comment{Expensive check}
		\State $\tau_1 \gets (u_1, u_0, c[\tau_0] + w(e), 0, 0)$ \Comment{Update node}
		\State $\Qupdate.\texttt{push}(\tau_{1})$
		\If{$u_1 = \vg$}
			\State \textbf{return}
		\EndIf
	\Else \Comment{Invalid edge: rewiring required}
		\State $\calE.\texttt{remove}(e)$ \Comment{Remove edge from graph}
		\State $\calT_{\rm rewire} \gets \calT_{\rm sub}(\tau_{1})$ 
		\State \textbf{return}
	\EndIf	  

\State \textbf{return}
\vspace{2mm}
\end{algorithmic}
\end{algorithm}
When a path is chosen, \ab evaluates the first edge along the tail of the path (Alg.~\ref{alg:evaluatePath}, line~2).
If the edge is collision free (lines~3-6), its entries are updated and its target is pushed into \Qupdate. 
This will later be used in Alg.~\ref{alg:update}) to update all nodes in its subtree in a systematic manner.
If the edge is in collision (lines~8-9), the corresponding edge is removed from the graph and the subtree rooted at the target vertex of the edge is set for rewiring.
Similar to the previous case this will later be used in Alg.~\ref{alg:rewire}) to rewire  all nodes in the subtree in a systematic manner.

\subsection{Updating $\alpha$-band}
\setlength{\textfloatsep}{15pt}
\begin{algorithm}[tb]
\caption{\texttt{update\_$\alpha$\_band} $()$}
\label{alg:update}
\begin{algorithmic}[1]
\While{$\Qupdate \neq \emptyset$}

	\State $\tau \gets \Qupdate.\texttt{pop}()$
	
	\State $T_\mathrm{succ} \gets \{ \tau' \in \calT \vert p[\tau'] = \tau \}$	
	\If{$T_\mathrm{succ} = \emptyset$} \Comment{Leaf node with budget}
		\State $\Qextend.\texttt{push}(\tau)$
		\State \textbf{continue}
	\EndIf

	\ForAll{$\tau' \in T_\mathrm{succ}$}
		\If{$b[\tau'] = \alpha$} \Comment{Cleanup queue}
			\State $\Qfront.\texttt{remove}(\tau')$
		\EndIf
		\State $\tau' \gets (u[\tau'], u[\tau], c[\tau], \ell[\tau] + \hat{w}(u[\tau], u[\tau']), b[\tau]+1 )$			
		\State $\Qupdate.\texttt{push}(\tau')$
	\EndFor
\EndWhile
\State \textbf{return}
\end{algorithmic}
\end{algorithm}
In Alg.~\ref{alg:evaluatePath}, when an edge $(u,v)$ has been evaluated to be collision-free, node $\tau_v$ associated with vertex $v$ is updated and \Qupdate is populated with the updated node. 
This update to $\tau_v$ needs to be cascaded to all the vertices in its subtree in a breadth-first search manner such that the parent node is updated before the child node. 
In every iteration of Alg.~\ref{alg:update}, the node with minimal key is popped from $\Qupdate$ (line 2) and its successors in the subtree are obtained (line~3). 
If the set of successors is empty, this implies that the node is a leaf node and is hence pushed into \Qextend (lines~4-6). 
Otherwise, each of the successor nodes is updated using the parent node entries and pushed into \Qupdate (lines~7-11). 
Note that leaf nodes belonging to this subtree are removed from \Qfront as their budget is updated to less than $\alpha$ (lines~8-10).

\subsection{Rewiring $\alpha$-band}

\setlength{\textfloatsep}{15pt}
\begin{algorithm}[tb]
\caption{\texttt{rewire\_$\alpha$\_band} ($\Qrewire$)}
\label{alg:rewire}
\begin{algorithmic}[1]
\small
\ForAll{$\tau \in \calT_{\rm rewire}$}	\Comment{Assign keys to nodes in subtree}
	\State $\calT.\texttt{remove}(\tau)$ 
	\State $\tau = (u[\tau], \text{NIL}, \infty, \infty, \infty)$ 
	\If{$\tau \in \Qfront$}
		\State $\Qfront.\texttt{remove}(\tau)$
	\EndIf
	\vspace{2mm}

	\State $\calS_{\rm parents} \gets \{\tau' \in \calT$  s.t. $\ (u[\tau'],~u[\tau]) \in \calE,~b[\tau'] < \alpha\}	$
	\vspace{1mm}
	\ForAll{$\tau' \in \calS_{\rm parents} - \{\calT_{\rm rewire} \cup \Qextend \cup \tau_{\vg}$\}}		
		\If{$c[\tau] + \ell[\tau] > c[\tau'] + \ell[\tau'] + \hat{w}(u[\tau'],u[\tau])$}
			\vspace{1mm}
			\State $\tau \gets (u[\tau], u[\tau'], c[\tau'], \ell[\tau'] + \hat{w}(u[\tau'], u[\tau]), b[\tau'] + 1)$ 
		\EndIf
	\EndFor
	\vspace{1mm}
	\State $\Qrewire$.\texttt{push}($\tau$)
\EndFor

\vspace{2mm}

\While{$\Qrewire \neq \emptyset$}	\Comment{Rewire}
	\State $\tau \gets \Qrewire.\texttt{pop}()$
	\If{$p[\tau] = \text{NIL}$}
		\State \textbf{continue}
	\EndIf
	\State $\calT.\texttt{insert}(\tau)$
	\If{$b[\tau] = \alpha~\text{\textbf{or}}~u[\tau] = \vg$}
		\State $\Qfront$.\texttt{push}($\tau$)
		\State \textbf{continue}
	\EndIf
	\If{$b[\tau] < \alpha$} \Comment{Note $u[\tau] \neq \vg$}
		\State $\Qextend$.\texttt{push}($\tau$)
	\EndIf
	
	\ForAll{$v \in \calV \ s.t. \ (u[\tau],v) \in \calE,  \ \tau_v \in \Qrewire$}
		\If{$c[\tau] + \ell[\tau] + \hat{w}(u[\tau],v) < c[\tau_v] + \ell[\tau_v]$}
			\State $\tau_v \gets (v, u[\tau], c[\tau], \ell[\tau] + \hat{w}(u[\tau],v), b[\tau] + 1)$ 
			\State $\Qrewire.\texttt{update\_node}(\tau_v)$
		\EndIf
	\EndFor	
\EndWhile
\State $\calT_{\rm rewire}.\texttt{clear}()$
\State \textbf{return}
\end{algorithmic}
\end{algorithm}
In Alg.~\ref{alg:evaluatePath}, when an edge $(u,v)$ is found to be in collision, the subtree rooted at $\tau_v$ is to be rewired as in Alg.~\ref{alg:rewire}. 
Initially every node in the subtree is updated to have an infinite key, and removed from the search tree (lines~1-3). 
Since these nodes have their entries re-initialized, they are removed from any priority queue they might exist in, namely, \Qfront (line~5). 
For each of these nodes, the best \emph{valid} parent in the graph is determined. The node is updated using the new parent's node entries and pushed into \Qrewire. 
Note that valid parents do not appear in the subtree being rewired since their node entries are still unknown (lines~8-9). 
Nodes belonging to \Qextend are subsequently extended in line 15 of Alg.\ref{alg:main} and hence are considered as invalid parents (lines~10-11).

Once \Qrewire is populated with all the nodes in the subtree, the algorithm iteratively pops the node with the minimal key from \Qrewire (lines~17-18). 
If a valid parent has been determined for the node, it is inserted into the search tree, and priority queues \Qextend and \Qfront depending on its budget (lines~21-25). 
Otherwise the node is left as initialized in line 3. 
Essentially, this implies that nodes can be inserted and also removed from the search tree during rewiring. 

If a node has been successfully rewired and has budget less than $\alpha$, it is now a potential valid best parent to nodes associated with its successor vertices in the graph.
Lines~27-31 verify if the node is indeed a better parent for each of its successors and updates them accordingly.

\subsection{Extending $\alpha$-band}
\begin{algorithm}[tb]
\caption{\texttt{extend\_$\alpha$\_band} $()$}
\label{alg:extend}
\begin{algorithmic}[1]

\While{$\Qextend \neq \emptyset$}
	\State $\tau \gets \Qextend.\texttt{pop}()$
	\If{$u[\tau] = \vg$} \Comment{Goal node needn't be extended}
		\State $\Qfront.\texttt{push}(\tau)$
	\Else
		\ForAll{$v \in \calV \ s.t. \ (u[\tau],v) \in \calE$ }
			\State $\tau_v' \gets (v, u[\tau], c[\tau], \ell[\tau]+\hat{w}(u[\tau],v), b[\tau] + 1)$
			\If{$c[\tau_v'] + \ell[\tau_v'] > c[\tau_v] + l[\tau_v]$}
				\State \textbf{continue}
			\EndIf
			\If{$\exists \tau_v \in \calT$} \Comment{Cleanup queues before update}
				\ForAll{$\tau' \in T_\mathrm{subtree}(\tau_v)$}
					\State $\calT.\texttt{remove}(\tau')$
					\If{$\tau' \in \Qfront$}
						\State $\Qfront.\texttt{remove}(\tau')$
					\EndIf
					\If{$\tau' \in \Qextend$}
						\State $\Qextend.\texttt{remove}(\tau')$
					\EndIf
				\EndFor
			\EndIf

			\State $\tau_v \gets \tau_v'$ \Comment{Update}
			\State $\calT.\texttt{insert}(\tau_v)$
			\If{$b[\tau_v] = \alpha$}
				\State $\Qfront.\texttt{push}(\tau_v)$
			\Else
				\State $\Qextend.\texttt{push}(\tau_v)$
			\EndIf
		\EndFor 
	\EndIf
\EndWhile
\State \textbf{return}
\end{algorithmic}
\end{algorithm}
The queue \Qextend contains leaf nodes in the search tree~$\calT$ that have a budget less than $\alpha$. 
In Alg.~\ref{alg:extend}, the top node~$\tau$ in~\Qextend with minimal key is popped (lines~1-2) and extended unless it is the node associated with $\vg$ in which case it is pushed into \Qfront (lines~3-4). 
For each of $\tau$'s successors $\tau_v$ in $\calG$, if the cost to reach $\tau_v$ through~$\tau$ is cheaper than the current cost, the node entry for $\tau_v$ is updated using $\tau$ (lines~7-17) and inserted into the search tree. 
Note that if $\tau_v$ already belongs to the search tree, it needs to be removed from any of the priority queues it previously belongs to. 
This is done in lines~10-16. 
Finally, if the successor node $\tau_v$ has been updated to have $\tau$ as the parent in $\calT$, we push $\tau_v$ into \Qextend or \Qfront depending on its budget.

\section{Algorithmic Properties: Proofs to Section~\ref{sec:proofs}}
\label{app:proofs}
In this section we provide accompanying proofs to the lemmas presented in Sec.~\ref{sec:proofs}.
For clarity we repeat the statements of the proofs throughout this section.

\lemmaOne*
\begin{proof}
Since the algorithm has evaluated the edge~$(v_0,v)$ to be collision free, there exists a path $P = (v_{\rm{start}},\ldots,v_1,v_0,v)$ for which all edges were found to be collision free by  $\ab$. Assume there exists another collision-free path $P' = (v_{\rm{start}},\ldots,v_1',v_0',v)$ from $v_{\rm{start}}$ to $v$ such that $w(P') < w(P)$. 

Consider the iteration before $\ab$ evaluates $(v_0,v)$. 
Since the edge $(v_0,v)$ is evaluated,
there exists a border node~$\tau_0$ associated with $v_0$ where and 
a frontier node $\tau$ that is $\alpha$ edges away from $\tau_0$ for which $\bar{w}(P[\tau])$ is minimal.

Let $\tau_j'$ be the last border node on $P'$ (associated with vertex~$v_j'$) and 
let~$\tau'$ be the frontier node that is $\alpha$ edges away from $\tau_j'$ for which~$\bar{w}(P[\tau'])$ is minimal.
Note that since $(v_0,v)$ was evaluated, we have that $\bar{w}(P[\tau])$ is minimal which implies that $\bar{w}(P[\tau]) < \bar{w}(P[\tau'])$.

Consider the following cases:
\subsubsection*{C1. We have that $\bm{j > \alpha}$.}
			Namely, vertex $v_j'$ lies more than~$\alpha$ edges before~$v$. 
			Thus, the vertex $v'$ associated with frontier node $\tau'$ lies on $P'$ before $v$.
			By the assumption that $w(P')<w(P)$, we have that $\bar{w}(P[\tau']) < \bar{w}(P[\tau])$ which gives a contradiction to the fact that $\bar{w}(P[\tau]) < \bar{w}(P[\tau'])$.
				See Fig.~\ref{fig:C1}.
\subsubsection*{C2. We have that $\bm{\alpha \geq j}$.}
				Namely,  vertex $v_j'$ lies at most~$\alpha$ edges before~$v$. 
			Thus, the vertex $u'$ associated with frontier node $\tau'$ is $v$ or a descendant of $v$.
			Consider the node~$\tau_v$ associated with vertex $v$.
			Clearly, $\tau_v$ is at most $\alpha$ edges from both border nodes $\tau_i$ and $\tau_j'$.
			By the assumption that $w(P')<w(P)$, we have that $\tau_v$ will have $\tau_i$ as it's ancestor in $\calT$ and \emph{not} $\tau_j'$ which contradicts the fact that $(v_0,v)$ is chosen for evaluation.
				See Fig.~\ref{fig:C2}.
\end{proof}
\begin{figure*}	
	\centering
	\begin{subfigure}[t]{0.35\textwidth}
		\centering
		\includegraphics[height = 4cm]{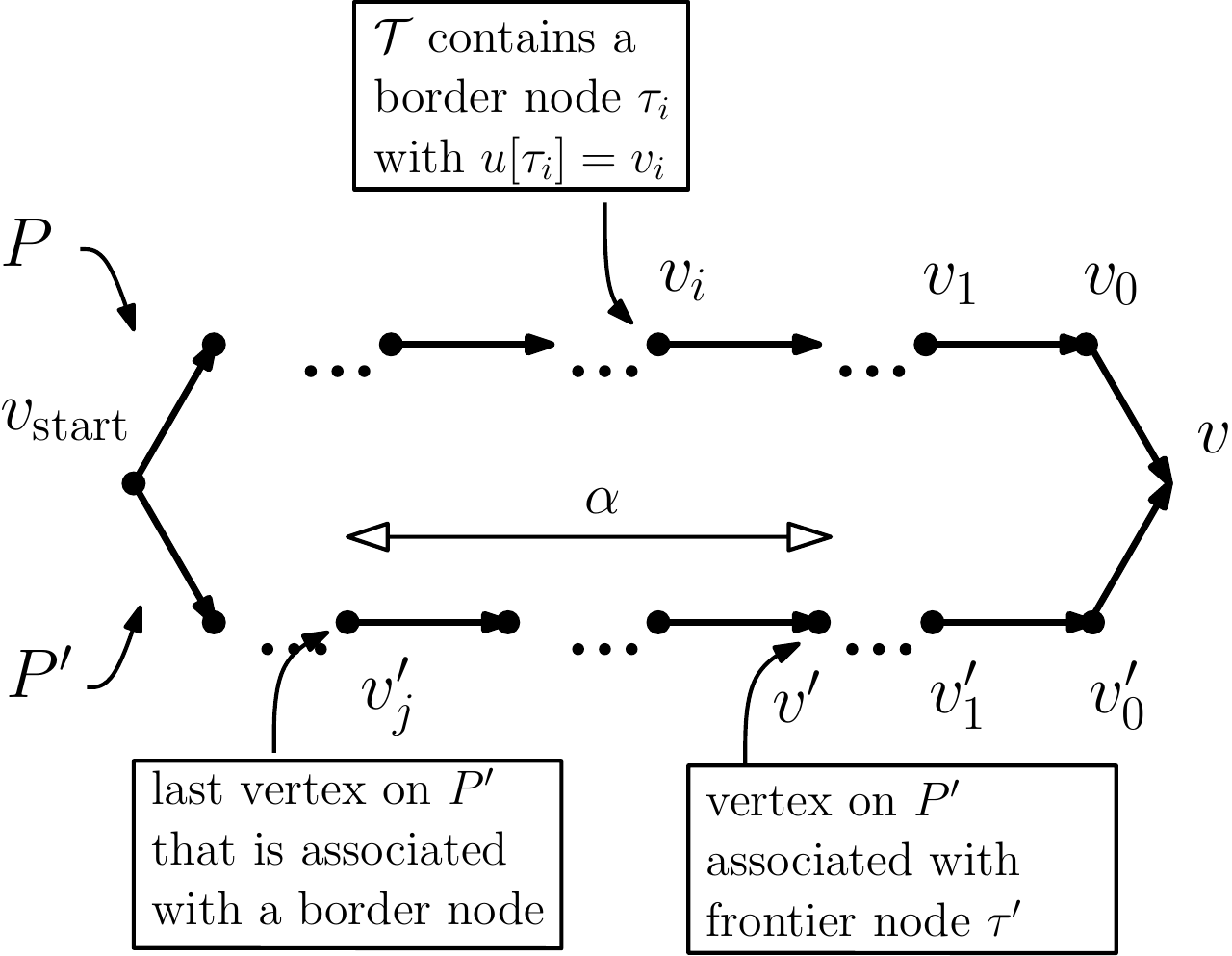}
		\caption{Case C1.}\label{fig:C1}		
	\end{subfigure}
	\quad
	\quad
	\begin{subfigure}[t]{0.35\textwidth}
		\centering
		\includegraphics[height = 4cm]{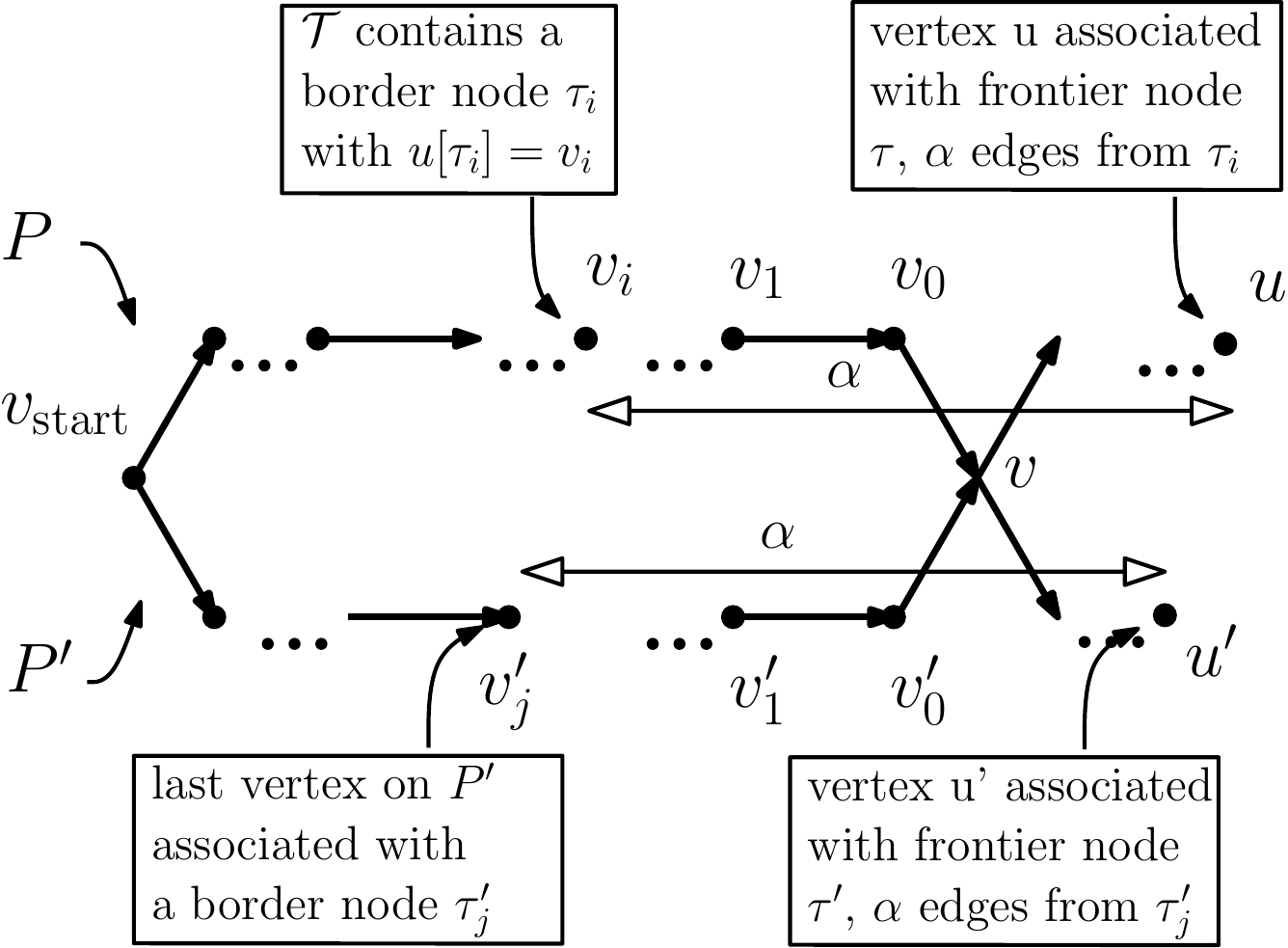}
		\caption{Case C2.}\label{fig:C2}
	\end{subfigure}
	\caption{Different cases considered in the proof of Lemma~\ref{lem:correctness}.}\label{fig:1}
\vspace{-3.5mm}
\end{figure*}

\lemmaTwo*
\begin{proof}
Notice that  \ab with $\alpha = \infty$ has an infinite lookahead. 
Thus, at each iteration, it will take the shortest path connecting $\vs$ to $\vg$ according to the lazy estimate $\hat{w}$ excluding edges that were already found to be in collision. It will then test edges on this path until one is found to be in collision or until an entire path was found to be collision-free.
Moreover, any edge $e$ tested by \ab lies on a path $P$ with $\hat{w}(P) \leq w(P^*)$.

Assume that there exists some shorttest-path problem and some algorithm \texttt{ALG} where \ab tests more edges than \texttt{ALG}. 
Let $e=(u,v)$ be an edge tested by \ab and not by \texttt{ALG}.
Since \ab tested $e$, there exists a collision-free path connecting $\vs$ to~$u$.
Furthermore, $e$ lies on a path $P$ with $\hat{w}(P) \leq w(P^*)$.

If $\hat{w}(P) = w(P^*)$ then $P = P^*$ and \texttt{ALG} must validate all edges of $P$ including $e$.
If $\hat{w}(P) < w(P^*)$ then there exists an edge on $P$ after $u$ that is in collision and \texttt{ALG} must indeed test the first edge on $P$ that is in collision. This implies that \texttt{ALG} must test  $e$.
\end{proof}

\lemmaHeuristic*
\begin{proof}
Assume that $E_1 \setminus E_2 \neq \emptyset$ and let $(v_0,v_1) \in E_1 \setminus E_2 $ be the edge such that $v_0$'s cost-to-come is minimal.
Let $u$ be the parent of $v_0$ on the shortest path from $\vs$ to $v_0$ and note that $(u, v_0) \in E_1 \cap E_2$.
Furthermore, let $\calT_i$ denote the search tree of 
\ab with heuristic $h_i$, $i \in \{1,2\}$.
Finally, recall that $w^*$ denotes the (true) weight of the shortest path from~$\vs$ to vertex $\vg$
and that
$\forall v \in \calV$ with $v\neq \vg$
we have that
$h^*_{\calG}(v) \geq h_1(v) > h_2(v)$
.

The edge $(v_0,v_1)$ was evaluated by 
\ab with heuristic $h_1$.
Thus, a frontier node~$\tau_{v_{\alpha}}^1 \in \calT_1$ associated with some vertex~$v_{\alpha}$ was at the head of~$\Qfront$ (with~$v_{\alpha}$ exactly $\alpha$ edges from~$v_0$). 
Since node~$\tau_{v_{\alpha}}^1$ was popped from~$\Qfront$, it's key is minimal.
Specifically, 
$$
c[\tau_{v_{\alpha}}^1] + \ell[\tau_{v_{\alpha}}^1] + h_1[\tau_{v_{\alpha}}^1] \leq w^*.$$

The edge $(u, v_0)$ was evaluated by 
\ab with heuristic~$h_2$,
thus a border node~$\tau_{v_0}^2 \in \calT_2$ associated with the vertex $v_0$ exists.
This, in turn, implies that there is a frontier node associated with the vertex $v_{\alpha}$.
Let $\tau_{v_{\alpha}}^2 \in \calT_2$ be this node which was created immediately after edge $(u, v_0)$ was evaluated. 
Note that
$c[\tau_{v_{\alpha}}^1] = c[\tau_{v_{\alpha}}^2]$,
$\ell[\tau_{v_{\alpha}}^1] = \ell[\tau_{v_{\alpha}}^2]$
and that
$h_1[\tau_{v_{\alpha}}^1] = h_1[\tau_{v_{\alpha}}^2]$
.
Since $h_1$ strictly dominates $h_2$, we have that 
$$
c[\tau_{v_{\alpha}}^2] + \ell[\tau_{v_{\alpha}}^2] + h_2[\tau_{v_{\alpha}}^2]
<
c[\tau_{v_{\alpha}}^2] + \ell[\tau_{v_{\alpha}}^2] + h_1[\tau_{v_{\alpha}}^2].
$$

From the above, it follows that $c[\tau_{v_{\alpha}}^2] + \ell[\tau_{v_{\alpha}}^2] + h_2[\tau_{v_{\alpha}}^2] < w^*$. Therefore, $\tau_{v_{\alpha}}^2$ will be popped from $\Qfront$ by \ab with heuristic $h_2$ before any node associated with $\vg$ which, in turn, implies that the edge $(v_0, v_1)$ will be evaluated by \ab with heuristic $h_2$. 
\end{proof}

Note that the proof of Lemma~\ref{lem:heuristic} assumes that $h_1$ strictly dominates~$h_2$. It will \emph{not} hold if $h_1$ weakly dominates~$h_2$ (namely, if $\forall v, h_1(v) \geq h_2(v)$.
Interestingly, there may be cases were both \astar and \textsf{IDA*} will \emph{not} expand fewer nodes if they use $h_2$ than if they use $h_1$~\cite{H10}.

\lemmaThree*
\begin{proof}
Assume that $E_1 \setminus E_2 \neq \emptyset$ and let $(v_0,v_1) \in E_1 \setminus E_2 $ be the edge such that $v_0$'s cost-to-come is minimal. 
Let $u$ be the parent of $v_0$ on the shortest path from $\vs$ to $v_0$ and note that $(u, v_0) \in E_1 \cap E_2$.
Let $\calT_i$ denote the search tree of 
\ab with laziness $\alpha_i$.
Finally, recall that $w^*(v)$ denotes the (true) weight of the shortest path from~$\vs$ to vertex $v$.

The edge $(v_0,v_1)$ was evaluated by 
\ab with laziness $\alpha_1$,
thus a frontier node~$\tau_{v_{\alpha_1}}^1 \in \calT_1$ associated with some vertex~$v_{\alpha_1}$ was at the head of $\Qfront$. 
Note that~$v_{\alpha_1}$ is exactly~${\alpha_1}$ edges from $v_0$ and set $v_1, \ldots, v_{\alpha_1-1}$ to be the intermediate vertices along this path.
Since node~$\tau_{v_{\alpha_1}}^1$ was popped from $\Qfront$, it's key is minimal.
Specifically, 
$$
c[\tau_{v_{\alpha_1}}^1] + \ell[\tau_{v_{\alpha_1}}^1] = w^*(u) + \sum_{i=1}^{\alpha_1}\hat{w}(v_{i-1}, v_i) \leq w^*(\vg).$$

The edge $(u, v_0)$ was evaluated by 
\ab with laziness $\alpha_2$,
thus a border node~$\tau_{v_0}^2 \in \calT_2$ associated with the vertex $v_0$ exists.
This, in turn, implies that there is a node associated with every vertex that is at most $\alpha_2$ edges from $v_0$, including with the vertex $v_{\alpha_2}$. 
Let $\tau_{v_{\alpha_2}}^2 \in \calT_2$ be this node which was created immediately after edge $(u, v_0)$ was evaluated.
We have that 
$$
c[\tau_{v_{\alpha_2}}^2] + \ell[\tau_{v_{\alpha_2}}^2] 
\leq
w^*(u) + \sum_{i=1}^{\alpha_2}\hat{w}(v_{i-1}, v_i).
$$
Since $\alpha_2 < \alpha_1$, we have that~$c[\tau_{v_{\alpha_2}}^2] + \ell[\tau_{v_{\alpha_2}}^2] < w^*(\vg)$ which implies that $\tau_{v_{\alpha_2}}^2$ will be popped from $\Qfront$ before any node associated with $\vg$. This implies edge $(v_0, v_1)$ will be evaluated by \ab with laziness $\alpha_2$.
See Fig.~\ref{fig:larger_lookahead_no_greediness}.
\end{proof}

\begin{figure}[tb]
  \centering
  	\includegraphics[width=0.45\textwidth]{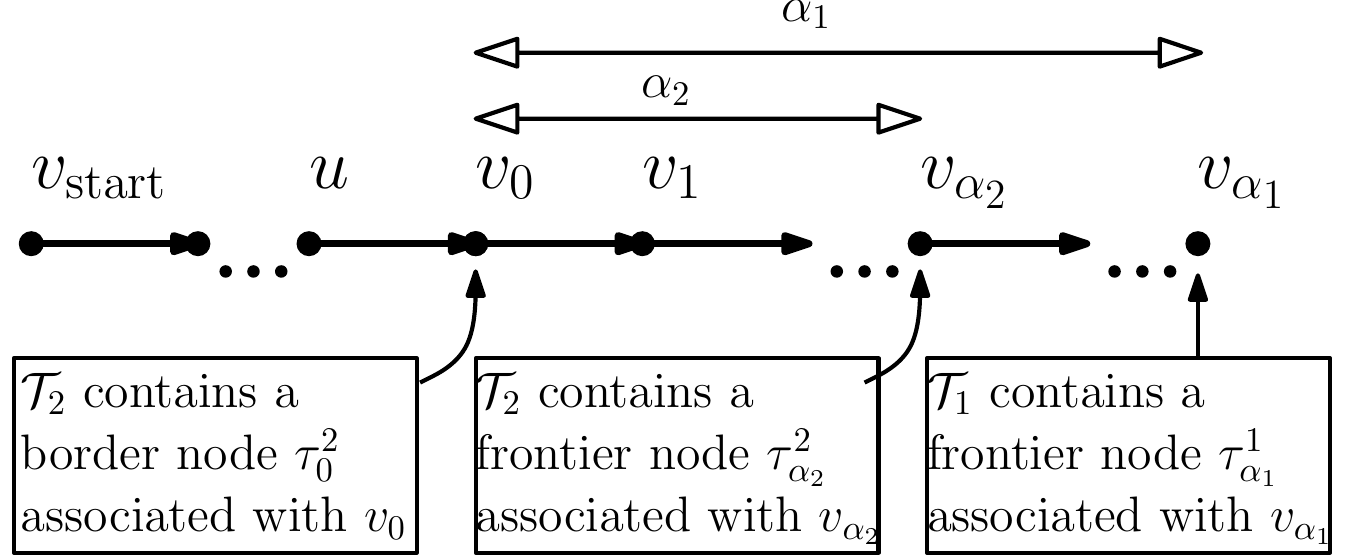}
  \caption{
  	Construction used in Lemma~\ref{lem:larger_lookahead_no_greediness}.
  	}
   	\label{fig:larger_lookahead_no_greediness}
\vspace{-3.5mm}
\end{figure}
%

%
%

\lemmaSeven*
\begin{proof}
Each vertex $v$ has exactly one node $\tau \in \calT$ associated with it.
It appears in a constant number of queues.
In addition, the algorithm needs to store the graph $G$.
Thus,
the total space complexity of our algorithm is bounded by $O(n + m)$.
\end{proof}

\lemmaEight*
\begin{proof}
Let Anc$(v)$ be the set of all vertices that are $\alpha$ edges from $v$ and lie on a path between $\vs$ and $v$ and note that $|\text{Anc}(v)|  = O(d^\alpha)$. 
Furthermore, the number of edges connecting vertices in Anc$(v)$ to $v$ is bounded by $O(d^\alpha)$.

We wish to bound the number of times $\tau_v$ associated with $v$ will be updated through the algorithm's execution.
We charge each update to the event that the algorithm evaluates one of the edges connecting vertices in Anc$(v)$ to $v$.

Each such update involves updating queues of nodes. The total number of nodes is bounded by $n$ and the cost of updating the queue is logarithmic in its size.
Finally, note that each edge is evaluated at most once. 

Thus, the algorithm's running time  can be bounded by
$$ 
\underbrace{O(n)}_{
\#\text{ of nodes}} 
\cdot 
\underbrace{O(d^\alpha)}_{
\#\text{ of node updates}} 
\cdot 
\underbrace{O(\log n)}_{
\text{cost of node update}}
+
\underbrace{O(m)}_{
\#\text{ of edges}},
$$
which concludes the proof. 
\end{proof}

\begin{figure*}	
	\centering
	\begin{subfigure}[t]{0.5\columnwidth}
		\centering
		\includegraphics[width=\columnwidth]{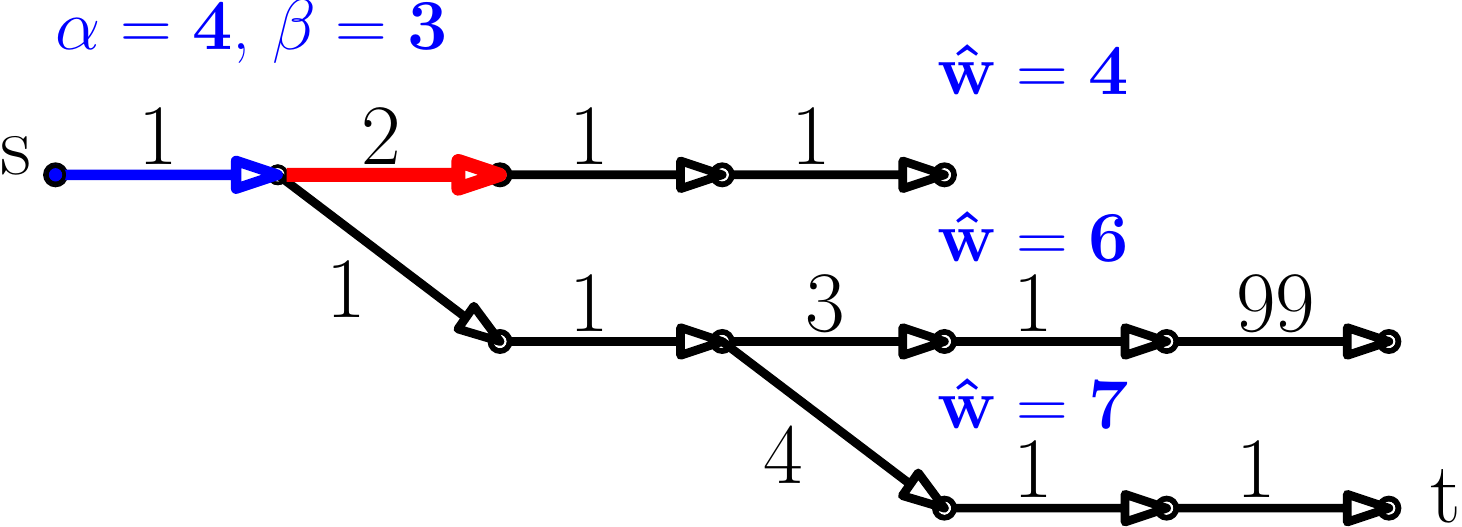}
		\caption{}\label{fig:1a}		
	\end{subfigure}
	\begin{subfigure}[t]{0.5\columnwidth}
		\centering
		\includegraphics[width=\columnwidth]{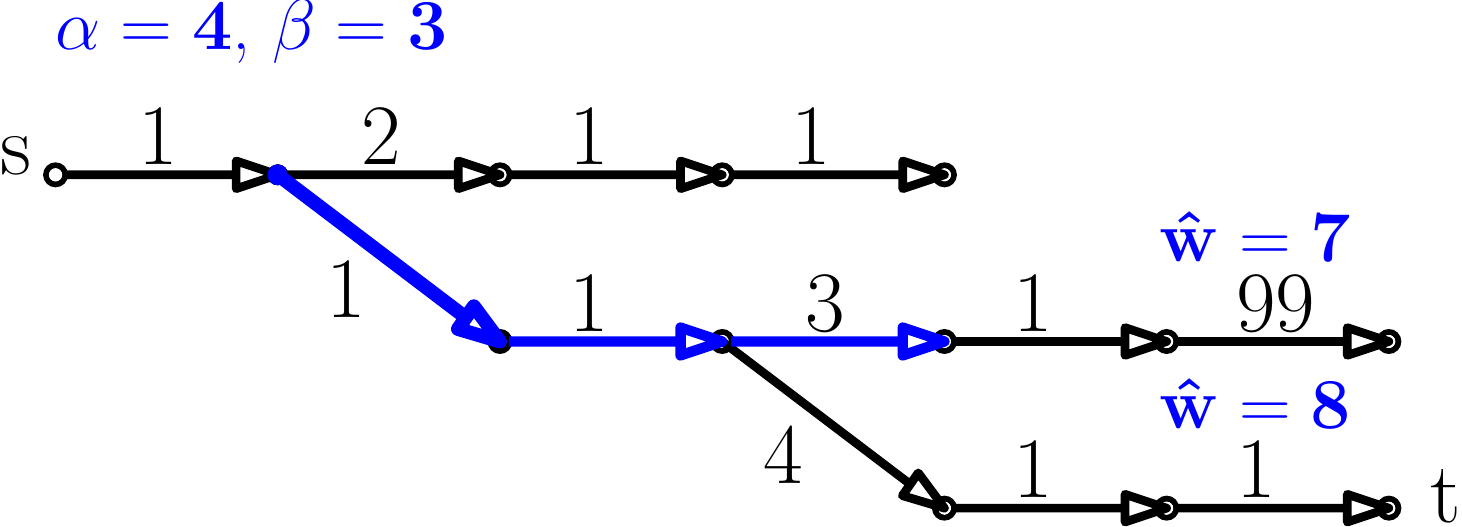}
		\caption{}\label{fig:1b}
	\end{subfigure}
	\begin{subfigure}[t]{0.5\columnwidth}
		\centering
		\includegraphics[width=\columnwidth]{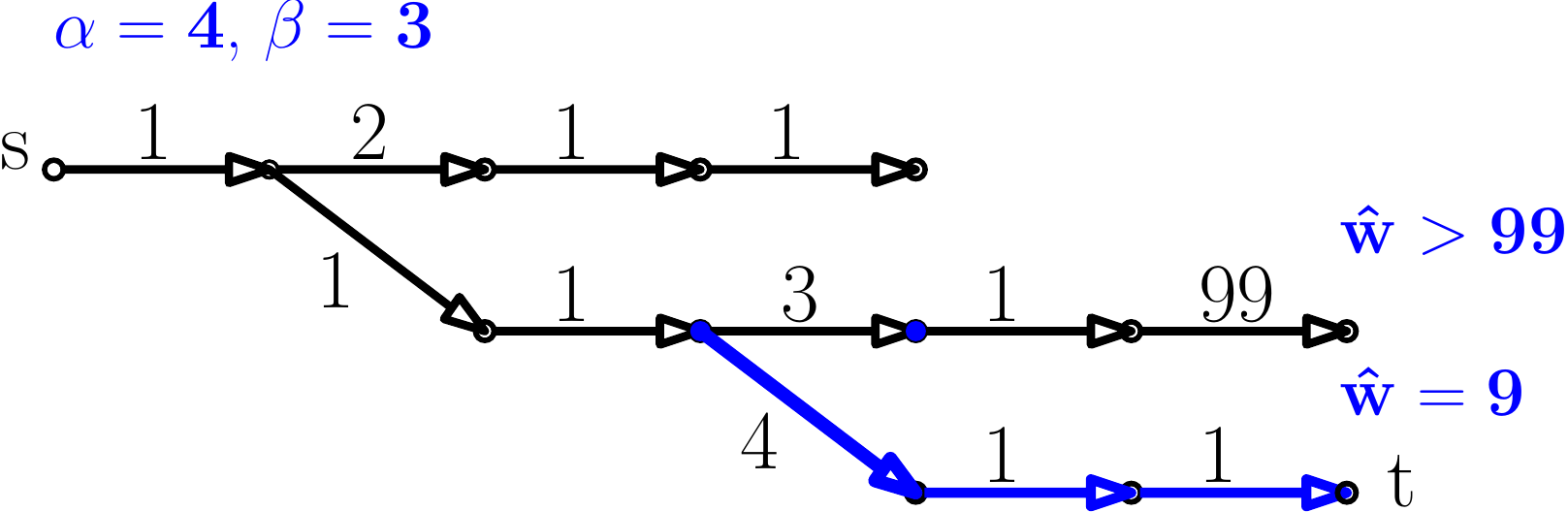}
		\caption{}\label{fig:1c}
	\end{subfigure}
	\begin{subfigure}[t]{0.5\columnwidth}
		\centering
		\includegraphics[width=\columnwidth]{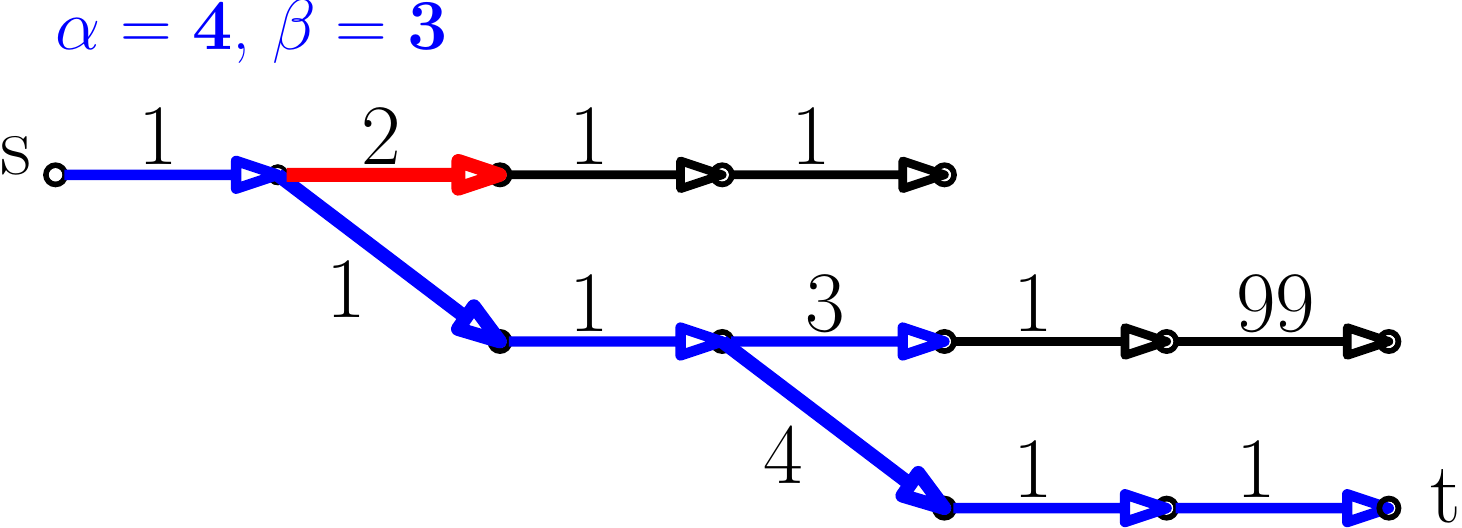}
		\caption{}\label{fig:1d}
	\end{subfigure}
	\begin{subfigure}[t]{0.5\columnwidth}
		\centering
		\includegraphics[width=\columnwidth]{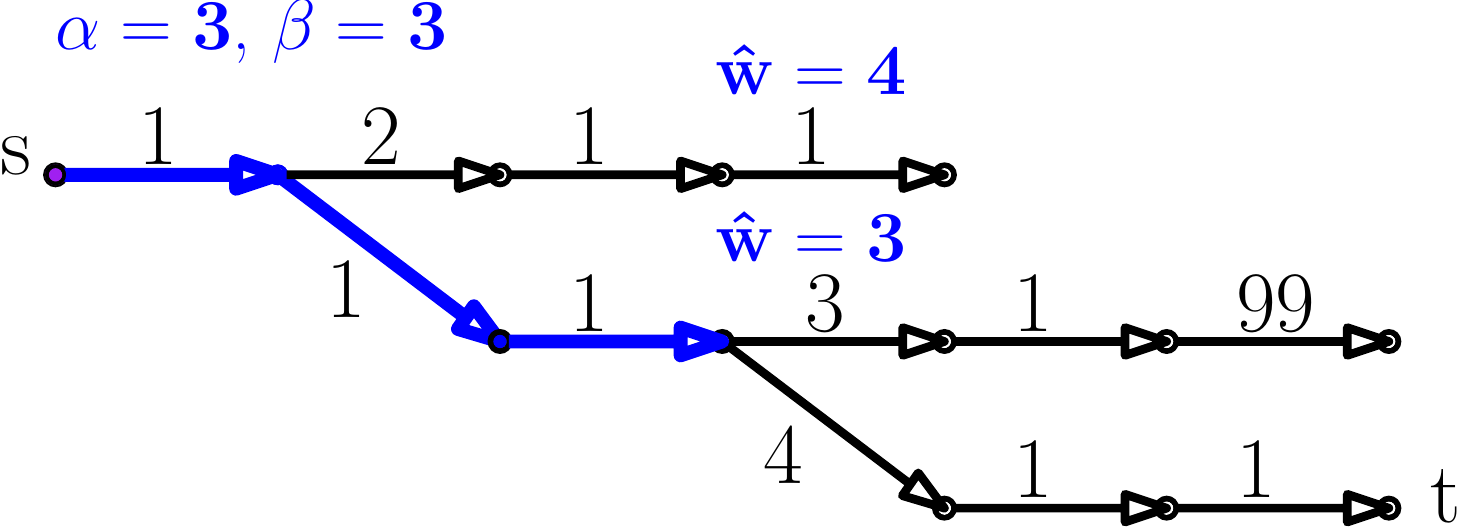}
		\caption{}\label{fig:2a}		
	\end{subfigure}
	\begin{subfigure}[t]{0.5\columnwidth}
		\centering
		\includegraphics[width=\columnwidth]{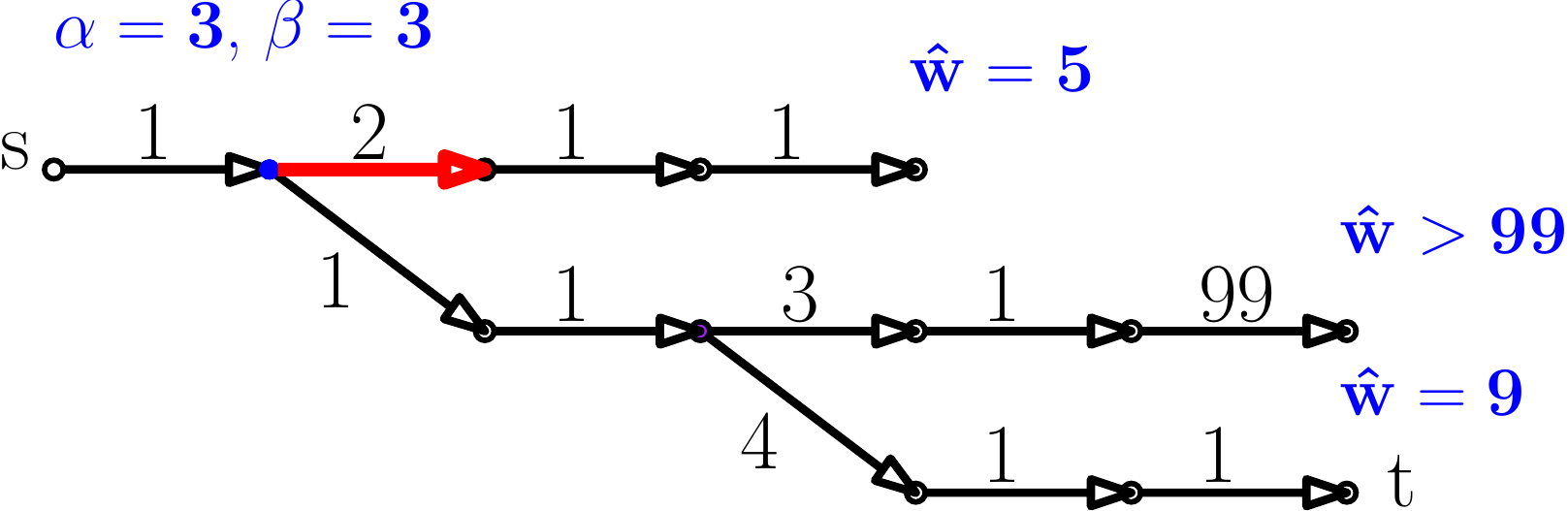}
		\caption{}\label{fig:2b}
	\end{subfigure}
	\begin{subfigure}[t]{0.5\columnwidth}
		\centering
		\includegraphics[width=\columnwidth]{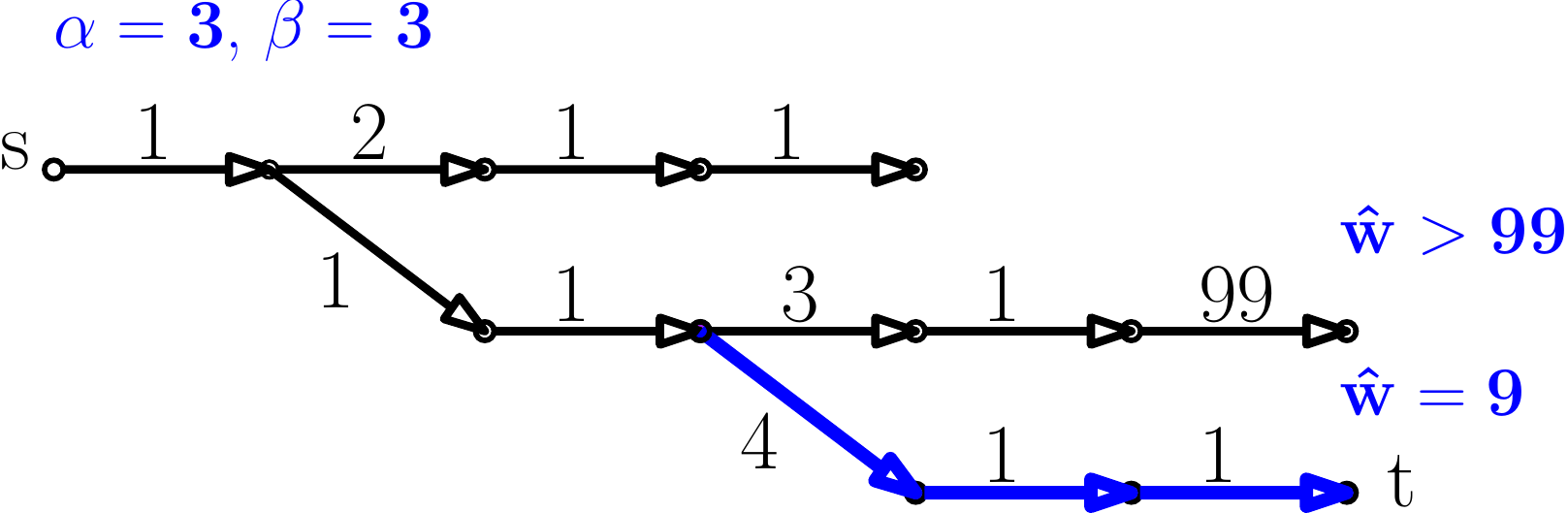}
		\caption{}\label{fig:2c}
	\end{subfigure}
	\begin{subfigure}[t]{0.5\columnwidth}
		\centering
		\includegraphics[width=\columnwidth]{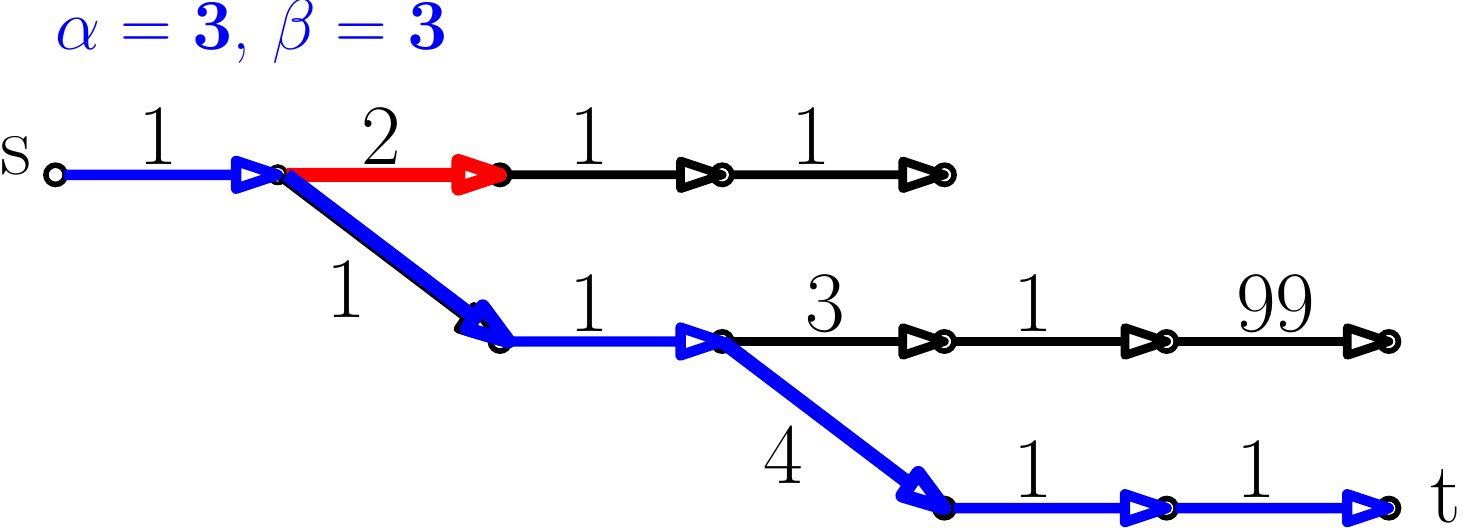}
		\caption{}\label{fig:2d}
	\end{subfigure}
	\caption{
	Example where given a larger lookahead results in \emph{more} edge evaluations.
	Top (\subref{fig:1a}-\subref{fig:1d}) 
	and
	bottom (\subref{fig:2a}-\subref{fig:2d}) 
	depict the flow of \ab for lazy lookahead value of $\alpha = 4$ and $\alpha = 3$, respectfully and a greediness value of 	$\beta = 3$.
	Each figure depicts one iteration, blue edges are found to be collision free while red edges are found to be in collision.
	Value of frontier nodes are shown at each iteration and all the edges evaluated by each algorithm are shown in (\subref{fig:1d}) and (\subref{fig:2d}).
	}\label{fig:counter_example}
\end{figure*}

\section{Is greediness beneficial?---Theorems}
\label{app:proofs2}
In this section we provide accompanying theorems and proofs to the discussion regarding greediness presented in Section~\ref{sec:greediness}.
We start by noting that Lemma~\ref{lem:correctness} and~\ref{lem:infinte_lookahead} can be easily extended to the case where greediness is employed.
We next continue by fixing one parameter 
(either lazy lookahead~$\alpha$ or greediness~$\beta$) and show under what conditions of the other parameter ($\beta$ or $\alpha$, respectively) $\ab$ performs less edge evaluations.
Recall (Lemma~\ref{lem:larger_lookahead_no_greediness}) that where the algorithm has no greediness, namely, $\beta = 1$.  then when considering edge evaluations, the larger the lookahead, the better the algorithm.
This can be seen as warm-up for Lemma~\ref{lem:larger_lookahead_with_greediness} which gives a general relationship between different lookaheads $\alpha$ for a fixed greediness $\beta$.
We then move on to fix $\alpha$ and see how varying~$\beta$ affects the algorithm.
In Lemma~\ref{lem:smaller_greediness} we show that for a fixed lookahead, no greediness ($\beta = 1$) is always better (in terms of edge evaluations) when compared to any other value of~$\beta$.
Finally, in Lemma~\ref{lem:smaller_greediness_generic} we show the somewhat counter-intuitive result that for larger greediness values ($\beta > 1$) and fixed lookahead, there is always an example where the greater the greediness, the better. 

We start by noting that if~$\beta > 1$ it may be the case that the larger the lookahead, the better (when considering edge evaluations).
In a nutshell, the greediness~$\beta$ may drive the algorithm to evaluate edges along paths that, at first glance, seem promising but as the algorithm evaluates edges, it becomes evident that other paths are more promising.
See Fig.~\ref{fig:counter_example} for an example.
In the next Lemma, we show under what conditions (for~$\beta > 1$) this natural behaviour does indeed hold.

\begin{restatable}{lem}{lemmaFour}
\label{lem:larger_lookahead_with_greediness}
For every graph $G$ and every  $\alpha_1 > \alpha_2 \geq \beta$, we have that $E_1 \subseteq E_2$ if $\alpha_1 \geq \alpha_2 + \beta - 1$.
Here, $E_i(\beta)$ denotes the set of edges evaluated by \ab with laziness $i$ and greediness~$\beta$.
\end{restatable}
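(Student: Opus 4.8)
The plan is to run the same minimal-counterexample scheme as in Lemma~\ref{lem:larger_lookahead_no_greediness}, but now to track the greedy batch of $\beta$ evaluations and to show that the hypothesis $\alpha_1 \ge \alpha_2 + \beta - 1$ is exactly what rescues the argument once $\beta > 1$. I first assume $E_1 \setminus E_2 \neq \emptyset$ and pick $(v_0,v_1) \in E_1 \setminus E_2$ with $w^*(v_0)$ minimal. Let $u$ be the parent of $v_0$ on $P^*_{v_0}$. Since the $\alpha_1$-run evaluates $(v_0,v_1)$, the vertex $v_0$ has already been reached through collision-free edges, so by the greedy form of Lemma~\ref{lem:correctness} we have $c[\tau_{v_0}] = w^*(v_0)$ and the last edge $(u,v_0)$ of its shortest path was evaluated, giving $(u,v_0) \in E_1$. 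Because $w^*(u) < w^*(v_0)$, minimality of $w^*(v_0)$ forces $(u,v_0) \in E_2$ as well, i.e. $(u,v_0) \in E_1 \cap E_2$.

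The heart of the proof is a key bound extracted from the greedy batch. When the $\alpha_1$-run evaluates $(v_0,v_1)$, that edge is the $k$-th edge of the tail of the frontier node $\tau^1$ whose popping triggered the batch, for some $1 \le k \le \beta$. Writing this tail as $b_0 \to b_1 \to \cdots \to b_{\alpha_1}$ with border $b_0$ and frontier $b_{\alpha_1}$, we have $v_0 = b_{k-1}$ and $v_1 = b_k$. Since $\tau^1$ is popped before $\vg$ and $c[\tau^1] = w^*(b_0)$ by correctness,
\[ c[\tau^1] + \ell[\tau^1] = w^*(b_0) + \sum_{i=1}^{\alpha_1}\hat{w}(b_{i-1},b_i) \le w^*. \]
I then set $v_{\alpha_2} := b_{k-1+\alpha_2}$, the vertex exactly $\alpha_2$ edges past $v_0$ along this tail. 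This is where the hypothesis enters: $\alpha_1 \ge \alpha_2 + \beta - 1 \ge \alpha_2 + (k-1)$ guarantees $k-1+\alpha_2 \le \alpha_1$, so $v_{\alpha_2}$ lies on the tail at or before $b_{\alpha_1}$. Since the prefix $b_0 \to \cdots \to v_0$ is collision-free, $w^*(v_0) \le w^*(b_0) + \sum_{i=1}^{k-1}\hat{w}(b_{i-1},b_i)$, and adding the lazy cost of the next $\alpha_2$ edges gives
\[ w^*(v_0) + \sum_{i=k}^{k-1+\alpha_2}\hat{w}(b_{i-1},b_i) \le w^*(b_0) + \sum_{i=1}^{\alpha_1}\hat{w}(b_{i-1},b_i) \le w^*. \]

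Finally I transfer this bound to the $\alpha_2$-run exactly as in Lemma~\ref{lem:larger_lookahead_no_greediness}. Because $(u,v_0) \in E_2$, at some iteration the $\alpha_2$-run reaches $v_0$ with $c[\tau_{v_0}] = w^*(v_0)$, and the lazy path $v_0 \to v_1 \to \cdots \to v_{\alpha_2}$ induces a frontier node in $\calT_2$ whose key is at most the left-hand side of the last display, hence strictly below $w^*$ by the no-ties assumption. This node is popped from $\Qfront$ before any node associated with $\vg$, forcing its first tail edge $(v_0,v_1)$ to be evaluated; thus $(v_0,v_1) \in E_2$, contradicting the choice of $(v_0,v_1)$. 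I expect the main obstacle to be the greedy bookkeeping around $v_0$: a batch of $\beta$ evaluations can advance the border \emph{past} $v_0$, so one must check that $v_0 \to v_1 \to \cdots \to v_{\alpha_2}$ is still carried by a frontier node of the stated key. The clean way to handle this is a short case split: either the batch processing $(u,v_0)$ continues along $v_0 \to v_1$, evaluating $(v_0,v_1)$ outright, or it diverges, in which case $v_0$ retains $v_1$ as a lazily-reached child, becomes a border node, and the $\alpha_2$-band is re-extended from $v_0$ to depth $\alpha_2$, recreating the required frontier node. Confirming that this re-extension precedes termination is the delicate step, and it is precisely $\alpha_1 \ge \alpha_2 + \beta - 1$ that keeps $v_{\alpha_2}$ inside the region the $\alpha_1$-run had already lazily explored, so that the key bound survives.
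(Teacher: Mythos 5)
Your proof is correct and follows essentially the same route as the paper's: a minimal counterexample $(v_0,v)\in E_1\setminus E_2$, locating that edge as the $k$-th edge ($k\le\beta$) of the greedy batch so the triggering frontier node lies $\alpha_1-(k-1)\ge\alpha_1-(\beta-1)\ge\alpha_2$ edges past $v_0$, and transferring the key bound $<w^*$ to the truncated $\alpha_2$-edge tail in the second run. Your explicit handling of the case where the batch's border sits up to $\beta-1$ edges before $v_0$ is in fact more careful than the paper's one-line ``Clearly, $\calT_2$ contains a border node associated with $v_0$,'' but it is the same argument.
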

\begin{proof}
Assume that $E_1 \setminus E_2 \neq \emptyset$ and let $(v_0,v) \in E_1 \setminus E_2 $ be an edge such that $v_0$'s cost-to-come is minimal. 
Note that this implies that both algorithms compute the shortest path to $v_0$. 
Furthermore, let $\calT_i$ denote the search tree 
of \ab with laziness $i$ and greediness~$\beta$.

Consider the iteration before 
\ab with laziness $\alpha_1$ and greediness~$\beta$
evaluates $(v_0,v)$
and let 
$(v_{\beta - 1},v_{\beta - 2},\ldots,v_1,v_0,v)$ be the sequence of vertices along the $\beta$ edges lying on the shortest path from $v_{\rm{start}}$ to~$v$. Since the edge $(v_0,v)$ is evaluated,
there exists a border node $\tau_i \in \calT_1$ associated with $v_i$ where $0 \leq i \leq \beta-1$.
Furthermore, the lazy path from~$\tau_i$ that passes through $v_i$ was considered, hence there is a node~$\tau \in \calT_1$ which is $\alpha_1$ edges from $\tau_i$ whose key is minimal. Namely $\bar{w}(P[\tau])< w^*$ with $w^* = w^*(\vg)$ the minimal cost to reach $\vg$. Note that this path $P[\tau]$ contains the edge $(v_0,v)$.

Clearly, $\calT_2$ contains a border node associated with $v_0$.
\ab with laziness $\alpha_2$ and greediness~$\beta$
does not expand any path from $\tau_{0}$ that contains the edge $(v_0,v)$, thus all paths $\alpha_2$ edges away from~$\tau_{v_0}$ passing through $v$ have lazy cost larger than $w^*$.
However, the node $\tau$ 
(which caused \ab with laziness $\alpha_1$ and greediness~$\beta$ to evaluate $(v_0,v)$)
is $\alpha_1-i \geq \alpha_1 - (\beta - 1) \geq \alpha_2$ edges from $v_0$.
We know that $\bar{w}(P[\tau])< w^*$  thus $(v_0,v)$ should have been evaluated by 
\ab with laziness $\alpha_2$ and greediness~$\beta$
which gives us a contradiction.
For a visualization, see Fig.~\ref{fig:larger_lookahead_with_greediness}.
\end{proof}

%
\begin{figure*}
  	\begin{subfigure}[h]{0.5\textwidth}
  	\includegraphics[height=4cm]{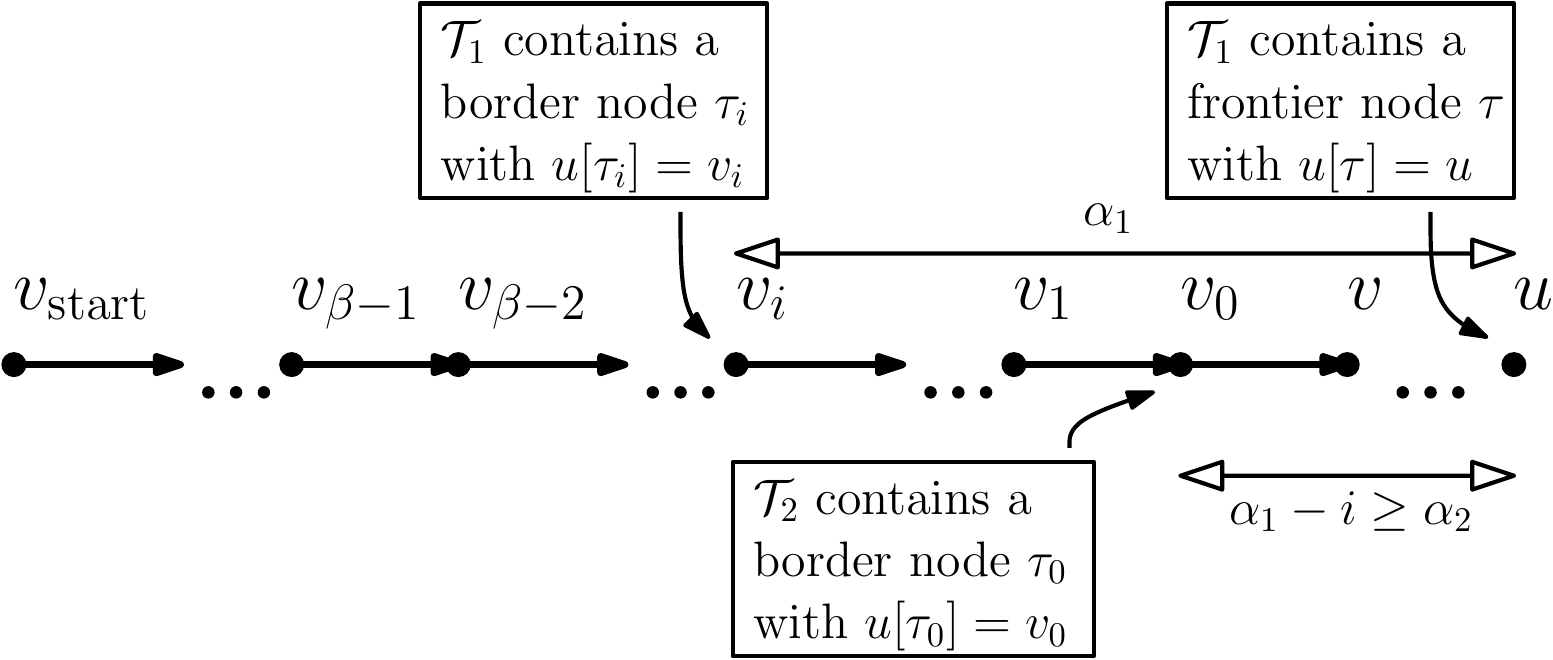}
  \caption{
  	}
   	\label{fig:larger_lookahead_with_greediness}
\end{subfigure} \hfill
\begin{subfigure}[h]{0.5\textwidth}
  \centering
  	\includegraphics[height=4cm]{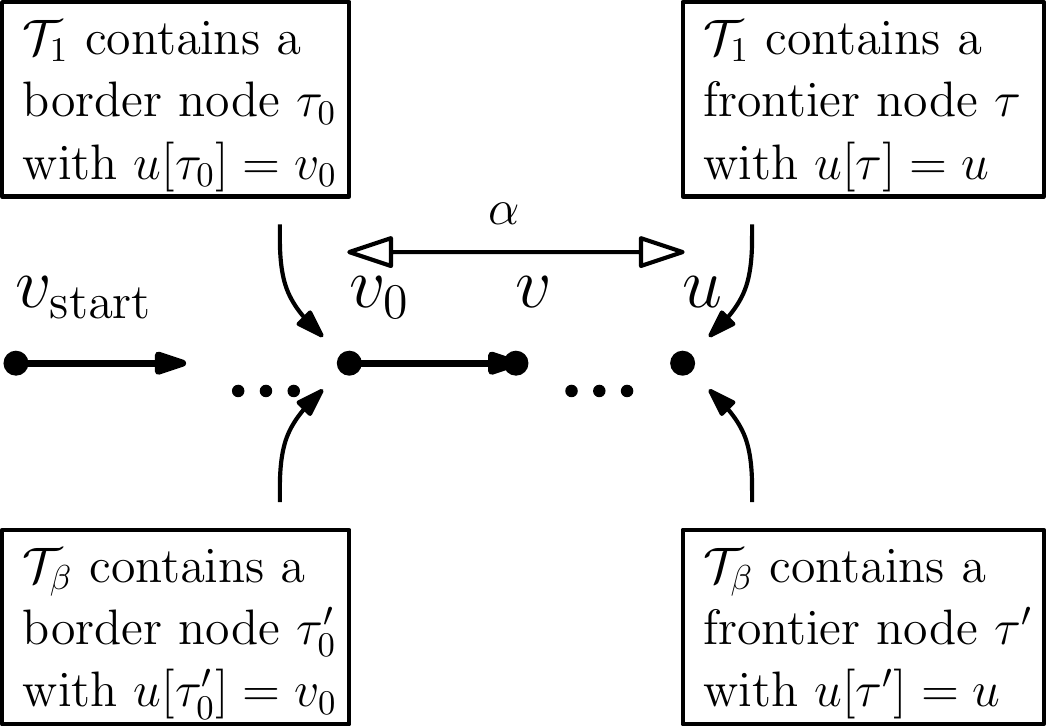}
  \caption{
  	}
   	\label{fig:smaller_greediness}
\end{subfigure}
\caption{
Constructions used in Lemma~\ref{lem:larger_lookahead_with_greediness} (Fig. \subref{fig:larger_lookahead_with_greediness}) and 
Lemma~\ref{lem:smaller_greediness} (Fig. \subref{fig:smaller_greediness})
}
\end{figure*}

We now move to the case where the lookahead $\alpha$ is fixed and we compare the edge evaluation of \ab for different values of $\beta$.
We start with the simple case where one algorithm has a greediness value of $\beta  =1$.

\begin{restatable}{lem}{lemmaFive}
\label{lem:smaller_greediness}
For every graph $\calG$ and every  $\alpha \geq \beta > 1$, we have that $E_1 \subseteq E_\beta$.
Here $E_x$ denotes the edges set of \ab with laziness $\alpha$ and greediness $x$.
\end{restatable}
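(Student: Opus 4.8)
The plan is to mirror the contradiction argument used for Lemma~\ref{lem:larger_lookahead_no_greediness}, now comparing two greediness values at a \emph{fixed} lookahead~$\alpha$. Write $E_1$ and $E_\beta$ for the edge sets evaluated by \ab with greediness $1$ and $\beta$, respectively. First I would assume for contradiction that $E_1 \setminus E_\beta \neq \emptyset$ and, among all such edges, pick $(v_0,v)$ whose source $v_0$ has minimal true cost-to-come $w^*(v_0)$. Let $u$ be the parent of $v_0$ on $P^*_{v_0}$; then $(u,v_0)\in E_1$, and since $w^*(u) < w^*(v_0)$, minimality forces $(u,v_0)\in E_\beta$ as well. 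Invoking Lemma~\ref{lem:correctness} (which, as the text notes, extends to the greedy setting) for the collision-free edge $(u,v_0)$, both runs have therefore established the true shortest path to $v_0$, so $v_0$ is a border node with $c[\tau_{v_0}] = w^*(v_0)$ in both search trees.

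Next I would characterize why the $\beta=1$ run evaluates $(v_0,v)$. At that iteration a frontier node $\tau$ sits at the head of $\Qfront$, lying $\alpha$ edges beyond the border node $v_0$ along a lazy path whose first tail edge is exactly $(v_0,v)$. Writing $v_0,v=v_1,\dots,v_\alpha$ for the tail vertices, its key is minimal and, since the algorithm has not terminated, bounded by the optimum: $\bar{w}(P[\tau]) = w^*(v_0) + \sum_{i=1}^{\alpha}\hat{w}(v_{i-1},v_i) \le w^*$.

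The crux is to exhibit the matching evaluation in the $\beta$ run. Immediately after $(u,v_0)$ is evaluated and $v_0$ becomes a border node, the $\alpha$-band is extended from $v_0$; this produces a frontier node $\tau'$ lying $\alpha$ edges from $v_0$ along the same tail, with $\bar{w}(P[\tau']) = w^*(v_0) + \sum_{i=1}^{\alpha}\hat{w}(v_{i-1},v_i) = \bar{w}(P[\tau]) \le w^*$, since the true cost-to-come to $v_0$ and the lazy estimates coincide in both runs. Hence $\tau'$ must be popped from $\Qfront$ before any node associated with $\vg$. Because greediness only controls \emph{how many} edges of the chosen tail are evaluated per iteration and always evaluates the first, popping $\tau'$ forces the evaluation of its first tail edge $(v_0,v)$; thus $(v_0,v)\in E_\beta$, contradicting $(v_0,v)\in E_1\setminus E_\beta$. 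See Fig.~\ref{fig:smaller_greediness}.

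The main obstacle is making the ``same tail survives'' step rigorous: with $\beta>1$ the greedy run may, through its batch evaluations, discover and remove a collision on some later tail edge $(v_i,v_{i+1})$ with $i\ge 1$ before the band is extended from $v_0$, which could in principle reshape the cheapest lazy path leaving $v_0$ and invalidate the key equality above. The argument must rule this out, and this is exactly where the minimality of $w^*(v_0)$ together with the \emph{forward} edge selector are needed: under the forward selector any edge further along a tail can be evaluated only after all of its predecessors on that path, so a premature removal of a tail edge would itself have required the earlier evaluation of $(v_0,v)$ (or of an edge whose source has strictly smaller cost-to-come, contradicting the choice of $(v_0,v)$). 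I would spell out this bookkeeping carefully, as it is the only place where the proof departs substantively from Lemma~\ref{lem:larger_lookahead_no_greediness}.
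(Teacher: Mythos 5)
Your argument is essentially the paper's own proof: take a counterexample $(v_0,v)\in E_1\setminus E_\beta$ with $v_0$'s cost-to-come minimal, observe that both runs therefore contain a border node at $v_0$ with the true cost-to-come, and exhibit in the greedy run a frontier node $\alpha$ edges beyond $v_0$ whose path contains $(v_0,v)$ and whose key is below $w^*$, so it must be popped before any goal node and $(v_0,v)$ must be evaluated --- a contradiction. The only substantive difference is that you explicitly flag and argue the ``same tail survives in the $\beta$ run'' step (and the existence of the border node via the parent edge $(u,v_0)$), which the paper's proof simply asserts; your added care here is a refinement of, not a departure from, the paper's route.
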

\begin{proof}
Assume that $E_1 \setminus E_\beta \neq \emptyset$ and let $(v_0,v) \in E_1 \setminus E_\beta $ be an edge such that $v_0$'s cost-to-come is minimal. 
Note that this implies both algorithms compute the shortest path to $v_0$. 
Furthermore, let $\calT_x$ denote the search tree 
of \ab with greediness $x$.

Consider the iteration before \ab with greediness $1$ (no greediness) evaluates $(v_0,v)$. Since the edge $(v_0,v)$ is evaluated, the node $\tau_0\in \calT_1$ associated with $v_0$ was the border node and there exists a node $\tau\in \calT_1$ which is $\alpha$ edges from $\tau_0$ whose key is minimal. Namely, $\bar{w}(P[\tau])< w^*$ with $w^* = w^*(\vg)$ the minimal cost to reach~$\vg$. Note that the path $P[\tau]$ contains the edge $(v_0,v)$. 

Now, consider the search tree $\calT_\beta$ of \ab with greediness~$\beta$. Clearly, $\calT_\beta$ contains a border node $\tau_{0}'$ with $u[\tau_{0}'] = v_0$. 
There exists a node $\tau'$ with $u[\tau'] = u[\tau]$.
Namely, the node $\tau'$ which is exactly $\alpha$ edges away from $\tau_0'$ has $\bar{w}(P[\tau'])< w^*$ where $P[\tau']$ contains the edge $(v_0,v)$. Hence the node $\tau'$ would be popped from $\Qfront$ before any node associated with $\vg$ implying the edge $(v_0,v)$ would be evaluated giving us a contradiction.
For a visualization, see Fig.~\ref{fig:smaller_greediness}.
\end{proof}


We continue to examine the general case where 
the lookahead $\alpha$ is fixed and we compare the edge evaluation of \ab for different values of $\beta$ for $\beta >1$.
Intuitively, we would expect some result stating that the smaller the greediness the better (in terms of maximal number of edge evaluations).
Indeed, in Lemma~\ref{lem:smaller_greediness} we showed that this is the case when $\beta  =1$.
However, the following Lemma states that for general values of $\beta$, there exists cases where an algorithm with large greedines may outperform an algorithm with smaller greedines.

\begin{restatable}{lem}{lemmaSix}
\label{lem:smaller_greediness_generic}
For every lookahead $\alpha < \infty$
and
every greediness $\alpha \geq \beta_2 > \beta_1 > 1$,
there exists a graph $\calG$ where 
$E_{\beta_1} \setminus E_{\beta_2} \neq \emptyset$.
Here, $E_\beta$ denotes the set of edges evaluated by \ab with laziness $\alpha$ and greediness $\beta$.
\end{restatable}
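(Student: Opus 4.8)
The statement is an existence (counterexample) result, so the plan is to exhibit, for arbitrary finite $\alpha$ and arbitrary $\alpha \geq \beta_2 > \beta_1 > 1$, a single parametric family of graphs on which \ab with greediness $\beta_1$ evaluates an edge that \ab with greediness $\beta_2$ never touches. The guiding intuition is the dual of the behaviour exploited in Lemma~\ref{lem:larger_lookahead_with_greediness}: a larger greediness forces the algorithm to \emph{commit} to several edges of the currently most promising (lazy) path before it is allowed to re-select, and this commitment lets it uncover an obstruction within a single burst; a smaller greediness re-selects sooner and, in the intervening iteration, is diverted onto a side path whose first unevaluated edge it evaluates \emph{before} the obstruction is ever discovered.

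Concretely, I would build a \emph{bait} path $\vs = a_0, a_1, \dots$ whose first $\alpha$ lazy edge weights are small enough that its frontier node is the unique minimum of $\Qfront$ at the start, so that both variants select it first, and I would place a single edge in collision at depth $\beta_1 + 1$ (possible since $\beta_2 \geq \beta_1 + 1$). With greediness $\beta_2$ this collision is revealed inside the first burst, triggering removal of the edge and a rewiring of the $\alpha$-band; with greediness $\beta_1$ the first burst reveals only the $\beta_1$ collision-free edges and the obstruction survives to (at least) the next iteration. I would then make the bait's lazy weights \emph{increase} with depth, so that after the first $\beta_1$ reveals the sliding $\alpha$-window of the bait frontier moves into the expensive region and its key $\bar{w} = c + \ell + h$ inflates; at that moment a side vertex $y$, attached through an edge $e$, becomes the head of a minimal lazy path, so the $\beta_1$ variant selects it and evaluates $e$. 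The side structure must be attached so that in the $\beta_2$ run the rewiring caused by the early collision removal reconfigures the parents of the band nodes before any frontier whose tail begins with $e$ can reach the top of $\Qfront$, so that $e$ is never the first unevaluated edge of a selected path.

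The verification has three parts: (i) check the key inequalities forcing both variants to select the bait first and forcing the post-burst inflation to push the bait key above $y$'s key in the $\beta_1$ run; (ii) trace the $\beta_2$ run through the collision removal, the \texttt{rewire\_$\alpha$\_band} step, and the subsequent re-extension, and argue that the reconfigured tree routes the relevant band nodes so that $e$ never surfaces as a tail edge; and (iii) confirm that the true shortest path, which both runs must return by Lemma~\ref{lem:correctness}, uses neither $e$ nor the bait, so the difference on $e$ is genuine. I expect (ii) to be the main obstacle. Since a frontier's key is largely \emph{intrinsic} to its path, a naive side branch that is cheap in the $\beta_1$ run would be equally cheap in the $\beta_2$ run and would be evaluated by both; the construction must therefore \emph{couple} the side structure to the bait---routing $y$'s relevant lazy path through band nodes whose parentage is altered by the collision removal---so that only the rewiring triggered by the greedier burst can suppress $e$. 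Calibrating the strictly increasing lazy weights and this coupling so that all the required key inequalities hold simultaneously and uniformly over every admissible triple $(\alpha,\beta_1,\beta_2)$, rather than for one hand-picked instance, is the delicate bookkeeping the full proof must carry out; a simplified analogue already underlies the construction behind Lemma~\ref{lem:larger_lookahead_with_greediness} and Fig.~\ref{fig:counter_example}.
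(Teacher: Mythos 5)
Your plan is genuinely different from the paper's, and the step you defer as ``delicate bookkeeping'' is in fact a dead end rather than a technicality. In your construction the witness edge $e=(x,y)$ is evaluated by the $\beta_1$ run as the \emph{first} unevaluated edge of a selected path: at that moment $x$ is already evaluated and a frontier node $\tau$ with $P[\tau]_{\text{tail}}$ beginning with $e$ sits at the top of $\Qfront$, so its key $K_e$ is smaller than every other key in the queue, in particular smaller than the key of the frontier lying along the true shortest path; hence $K_e<w^*$. Now look at the $\beta_2$ run after its first burst. Both runs have evaluated exactly the same vertices ($\vs$ and the first $\beta_1$ bait vertices), so $x$ and its cost-to-come are identical; the only asymmetry is that the $\beta_2$ run has additionally \emph{removed} the in-collision bait edge at depth $\beta_1+1$. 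Removing an edge can only increase lazy costs-to-come; it cannot re-parent $y$ away from $x$ (the path $\vs\rightarrow x\rightarrow y$ does not use the removed edge), and the cheapest lazy $(\alpha-1)$-edge continuation below $y$ cannot pass through the removed edge either, since that edge's source is an already-evaluated vertex that will never be re-parented through $y$. So the $\beta_2$ run contains a frontier with tail beginning with $e$ and the \emph{same} key $K_e<w^*$; it is untouched by the rewiring (which only affects the subtree below the removed edge) and must therefore be popped before the terminating frontier of key $w^*$. Thus $e\in E_{\beta_2}$ no matter how you calibrate the weights: early collision discovery only deletes options, and deleting options can only push the $\beta_2$ run \emph{toward} the side edges that the $\beta_1$ run finds attractive, never away from them.

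The paper escapes this trap by making the witness edge be evaluated \emph{mid-burst}, i.e., strictly beyond the last evaluated vertex at the moment of selection, which is precisely where the above argument breaks (the key that triggers the evaluation is then anchored at a border vertex that differs between the two runs). Concretely: a stem of $\beta_2$ edges from $\vs$ to a branch vertex, followed by two branches of $\alpha$ edges each; the worse branch has its first edge in collision and its extra cost hidden in its final edges. When $\beta_1\nmid\beta_2$, the $\beta_1$ run's border lands $r$ edges short of the branch vertex for some $0<r<\beta_1$, its $\alpha$-window sees only $\alpha-r$ edges into each branch and cannot distinguish them, so it may select the worse branch, and the burst of $\beta_1$ evaluations then crosses the branch vertex and evaluates that branch's in-collision first edge. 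The $\beta_2$ run's border lands exactly on the branch vertex, its full $\alpha$-window exposes the cost gap, and it never touches that edge; a one-edge offset reduces the case $\beta_1\mid\beta_2$ to the previous one. If you want to repair your proof, the witness edge must be buried inside the $\beta_1$ burst in this way---at which point you have essentially reconstructed the paper's divisibility argument.
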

\begin{proof}
We construct the graph $\calG$ explicitly. (Fig.~\ref{fig:lemma6a} and~\ref{fig:lemma6b}.

We consider two following two cases
(i)~$\beta_2 \text{mod} \beta_1 \neq 0$ 
and (ii)~$\beta_2 \text{mod} \beta_1 = 0$.
For each case we provide a different graph $\calG$ and show that $E_{\beta_1} \setminus E_{\beta_2} \neq \emptyset$.
See Fig.~\ref{fig:lemma6a} and~\ref{fig:lemma6b} for depictions of each case described.

For case (i) where~$\beta_2 \text{mod} \beta_1 \neq 0$, we have a path of length $\beta_2$ followed by two paths of length $\alpha$.
For \ab with greediness $\beta_1$, (Fig.~\ref{fig:3a}-\ref{fig:3d}), the algorithm starts by evaluating edges along the path of length $\beta_2$ (Fig.~\ref{fig:3a}).
Since $\beta_2 \text{mod} \beta_1 \neq 0$, at some point it will evaluate the first edge along the upper path
(which is in collision)
(Fig.~\ref{fig:3b}).
This path is longer than the lower one, but to see this, the algorithm requires a lookahead of $\alpha$ edges from the end of the first path.
The algorithm continues to evaluate edges along the lower path until the target is reached 
(Fig.~\ref{fig:3c}).

For \ab with greediness $\beta_2$  (Fig.~\ref{fig:4a}-\ref{fig:4c}), the algorithm starts by evaluating all edges along the path of length $\beta_2$ (Fig.~\ref{fig:3a}).
Since it can see all edges  along the upper path
(which is collision free) it continues to evaluate the lower path until the target is reached 
(Fig.~\ref{fig:4b}).
The final edges evaluated by each algorithm are depicted in Fig.~\ref{fig:3d} and~\ref{fig:4c}.

For case (ii) where~$\beta_2 \text{mod} \beta_1 = 0$, we have a path of length~$\beta_2$ which after one edge has a shorter path of $\alpha$ edges.
The rest of the construction is similar to case (i).
Essentially, 
\ab with greediness $\beta_1$
and
\ab with greediness $\beta_2$
behave similarly to case (i) except that both algorithms will evaluate the first edge along the path of length $\beta_2$ followed by the first (in-collision edge) of the shorter path of $\alpha$ edges.
After this first iteration for both algorithms (Fig.~\ref{fig:5a} and~\ref{fig:6a})
the behaviour reduces to that of case (i).
\end{proof}

\begin{figure*}	
	\centering
	\begin{subfigure}[t]{0.5\columnwidth}
		\centering
		\includegraphics[width=\columnwidth]{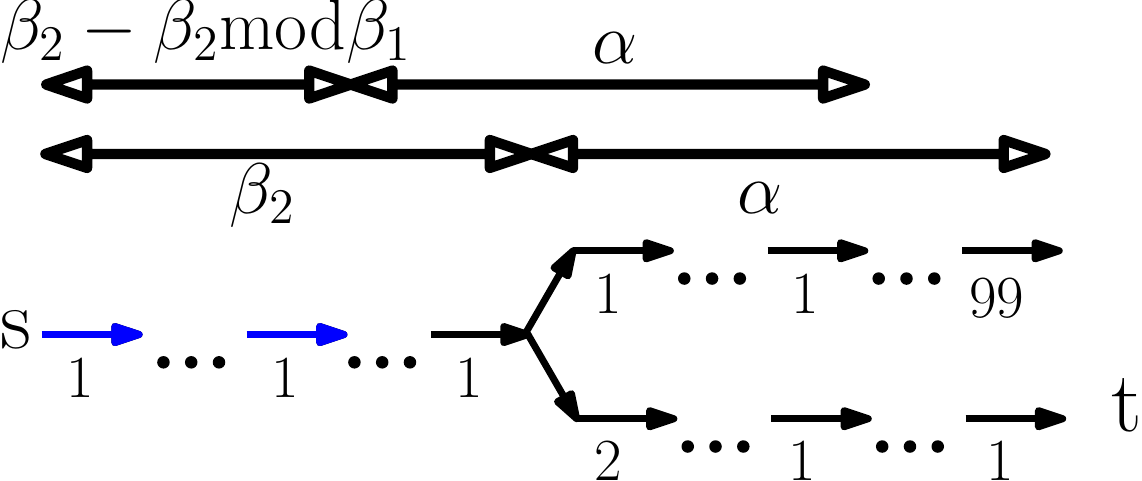}
		\caption{}\label{fig:3a}		
	\end{subfigure}
	\begin{subfigure}[t]{0.5\columnwidth}
		\centering
		\includegraphics[width=\columnwidth]{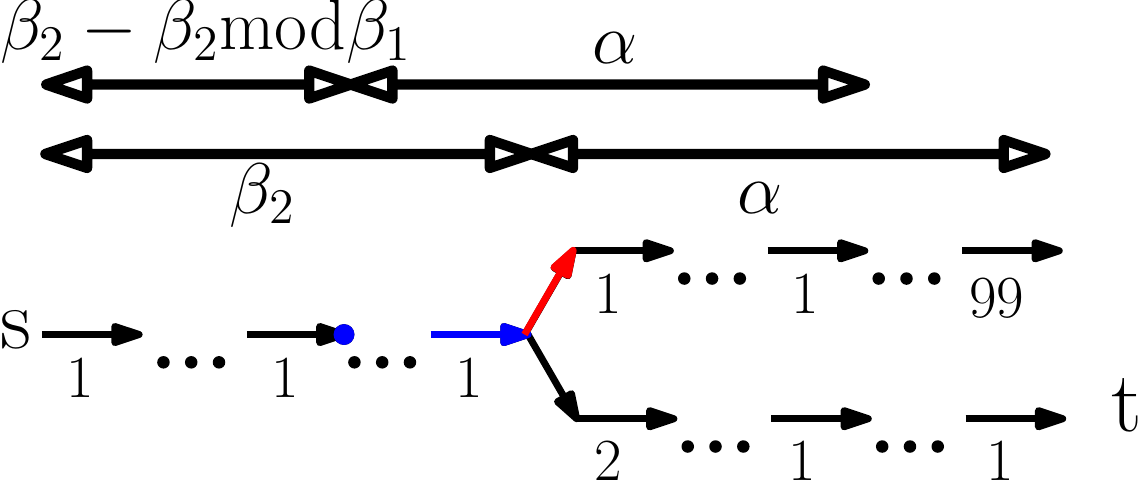}
		\caption{}\label{fig:3b}
	\end{subfigure}
	\begin{subfigure}[t]{0.5\columnwidth}
		\centering
		\includegraphics[width=\columnwidth]{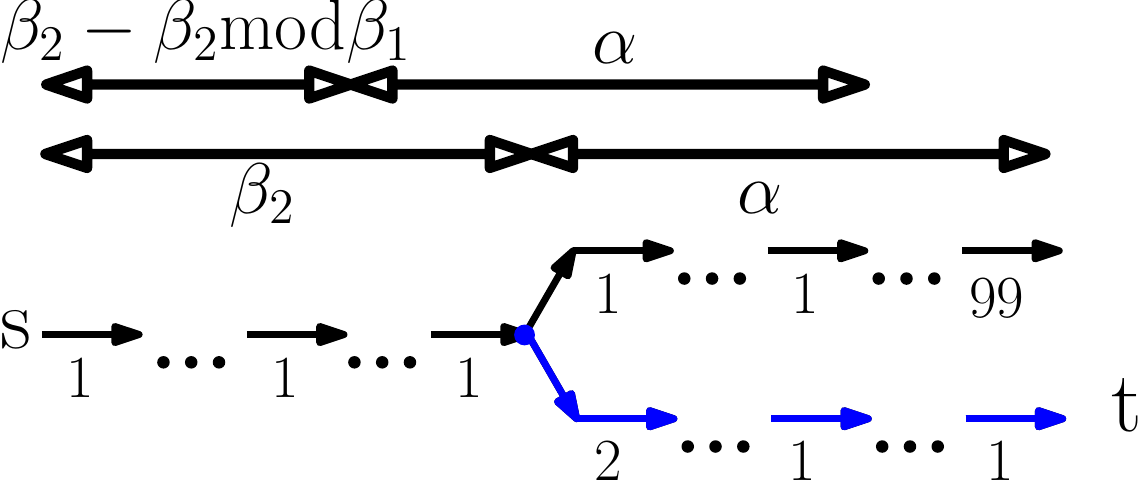}
		\caption{}\label{fig:3c}
	\end{subfigure}
	\begin{subfigure}[t]{0.5\columnwidth}
		\centering
		\includegraphics[width=\columnwidth]{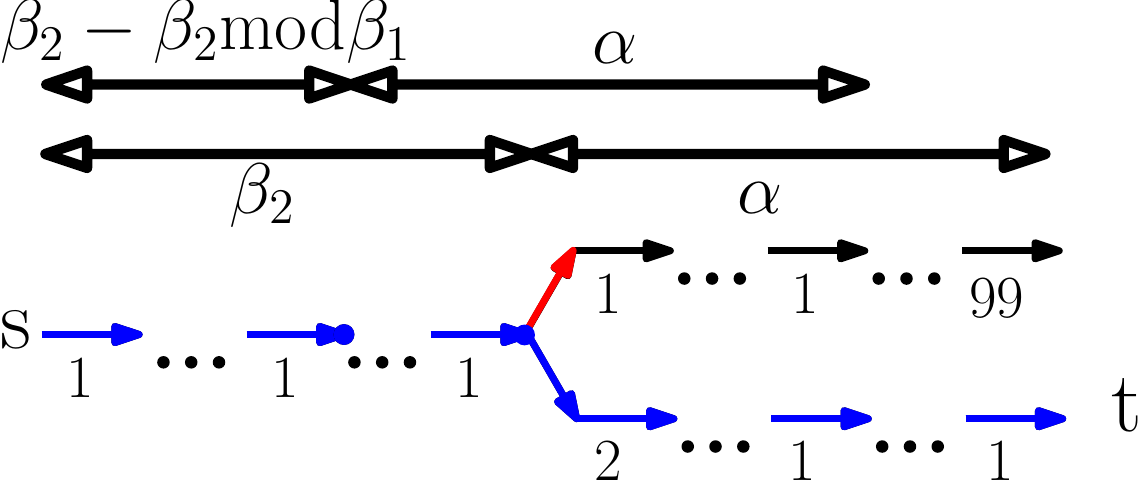}
		\caption{}\label{fig:3d}
	\end{subfigure}
	\begin{subfigure}[t]{0.5\columnwidth}
		\centering
		\includegraphics[width=\columnwidth]{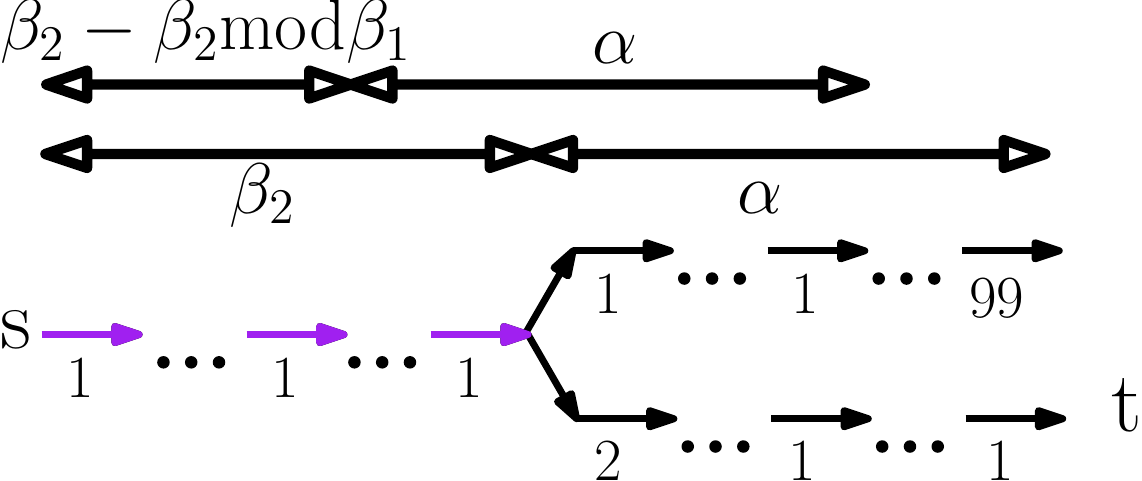}
		\caption{}\label{fig:4a}		
	\end{subfigure}
	\begin{subfigure}[t]{0.5\columnwidth}
		\centering
		\includegraphics[width=\columnwidth]{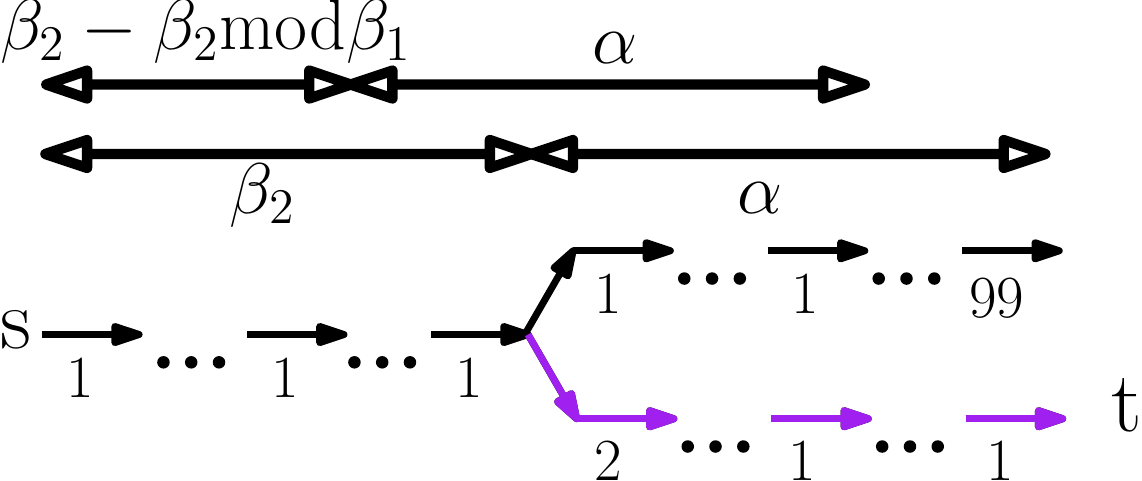}
		\caption{}\label{fig:4b}
	\end{subfigure}
	\begin{subfigure}[t]{0.5\columnwidth}
		\centering
		\includegraphics[width=\columnwidth]{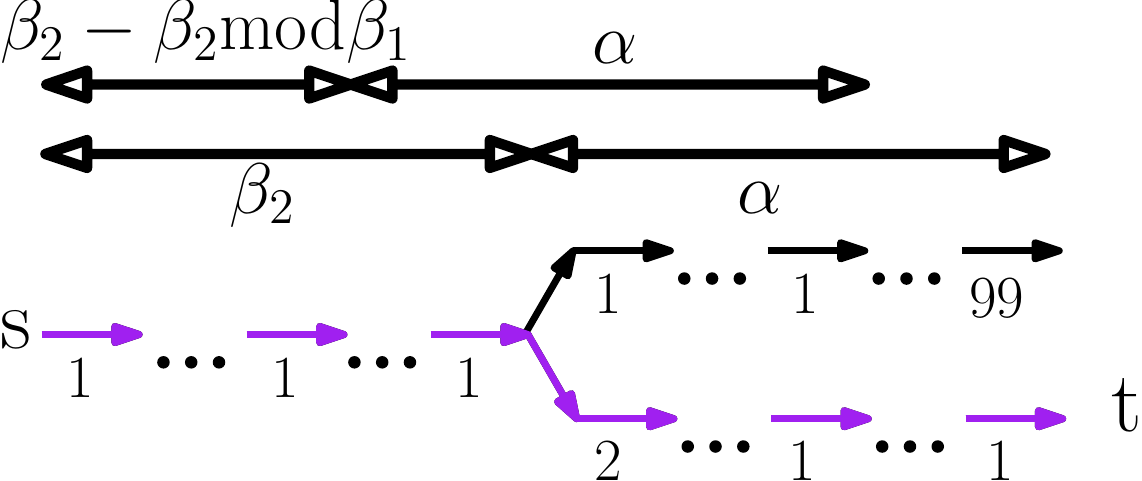}
		\caption{}\label{fig:4c}
	\end{subfigure}
	\vspace{-1.5mm}
	\caption{Construction used in Lemma~\ref{lem:smaller_greediness_generic}, when $\beta_2 \text{mod} \beta_1 \neq 0$.
	The flow of \ab with greediness $\beta_1$ is depicted in (\subref{fig:3a}-\subref{fig:3d}).
	Edges found to be collision free and in collision are depicted in blue and red, respectively.
	The flow of \ab with greediness~$\beta_2$ is depicted in (\subref{fig:4a}-\subref{fig:4c}).
	Edges found to be collision free and in collision are depicted in purple and red, respectively.
	All the edges evaluated by each algorithm are shown in~(\subref{fig:3d}) and~(\subref{fig:4c}).
	}\label{fig:lemma6a}
\end{figure*}

\begin{figure*}[tbh]
	\centering
	\begin{subfigure}[t]{0.5\columnwidth}
		\centering
		\includegraphics[width=\columnwidth]{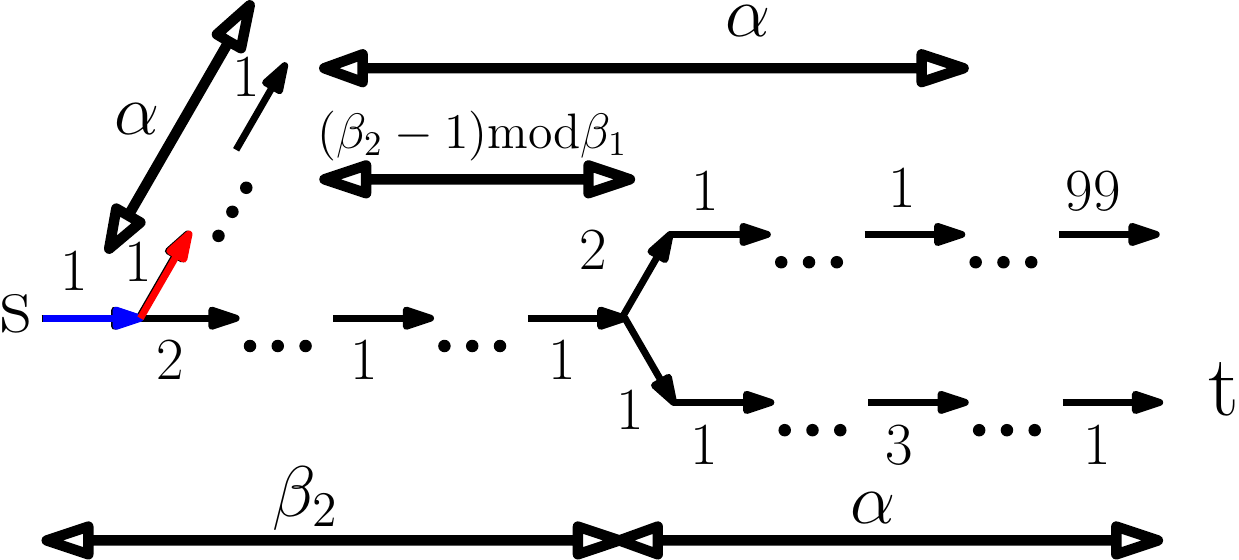}
		\caption{}\label{fig:5a}		
	\end{subfigure}
	\begin{subfigure}[t]{0.5\columnwidth}
		\centering
		\includegraphics[width=\columnwidth]{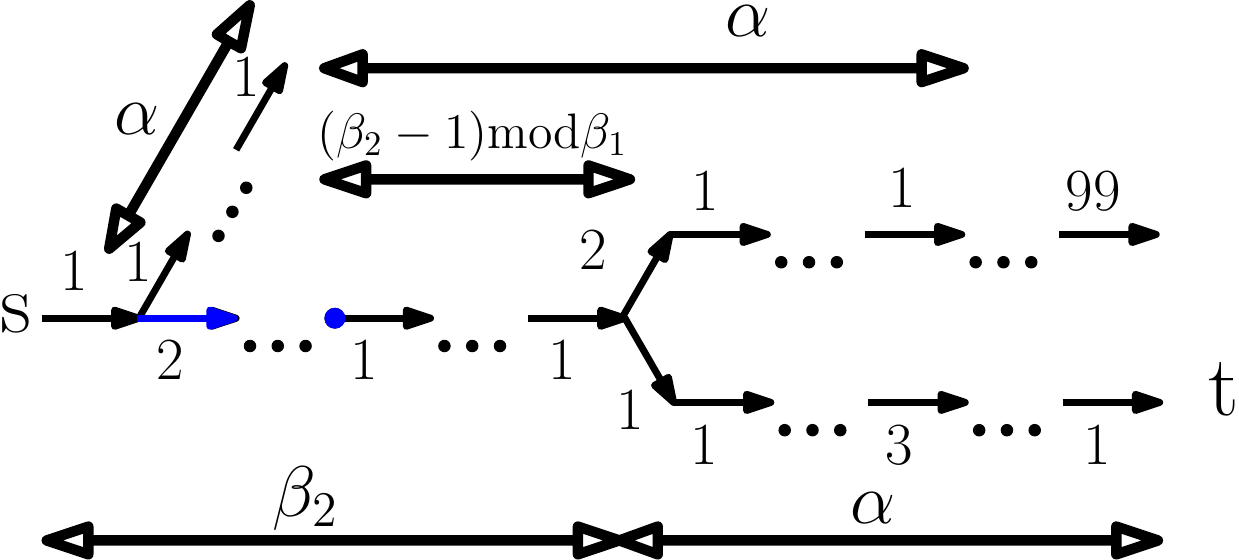}
		\caption{}\label{fig:5b}
	\end{subfigure}
	\begin{subfigure}[t]{0.5\columnwidth}
		\centering
		\includegraphics[width=\columnwidth]{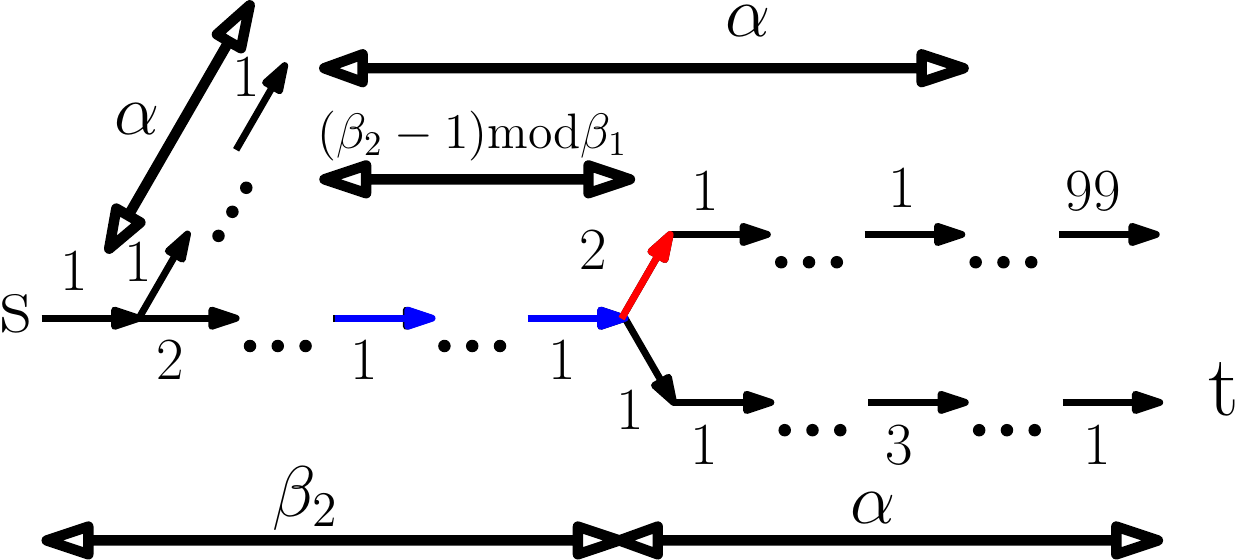}
		\caption{}\label{fig:5c}
	\end{subfigure}
	\begin{subfigure}[t]{0.5\columnwidth}
		\centering
		\includegraphics[width=\columnwidth]{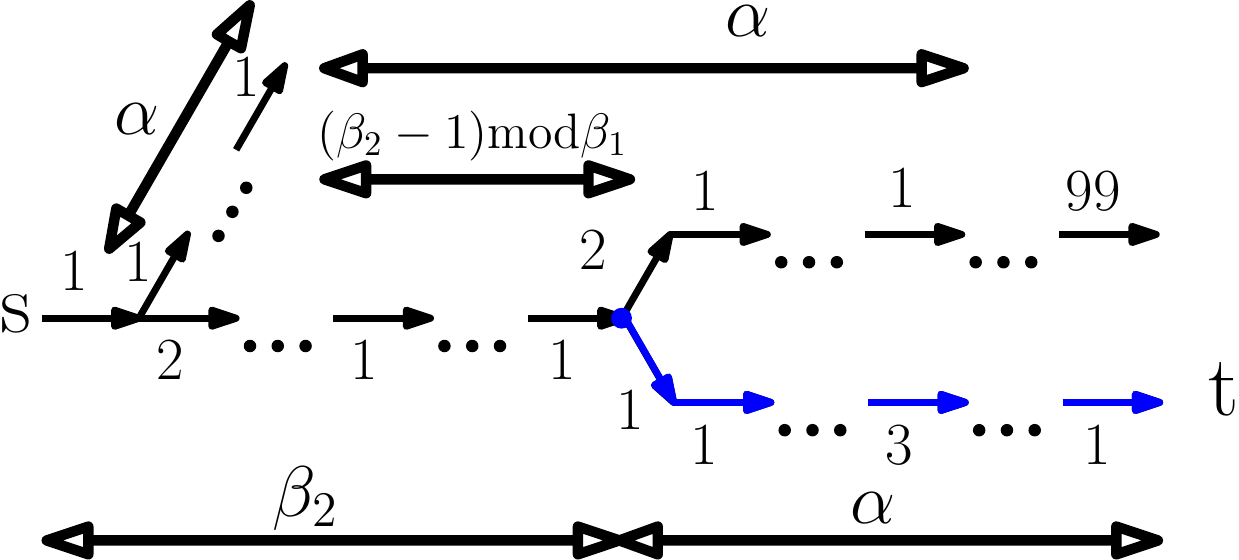}
		\caption{}\label{fig:5d}
	\end{subfigure}
	\begin{subfigure}[t]{0.5\columnwidth}
		\centering
		\includegraphics[width=\columnwidth]{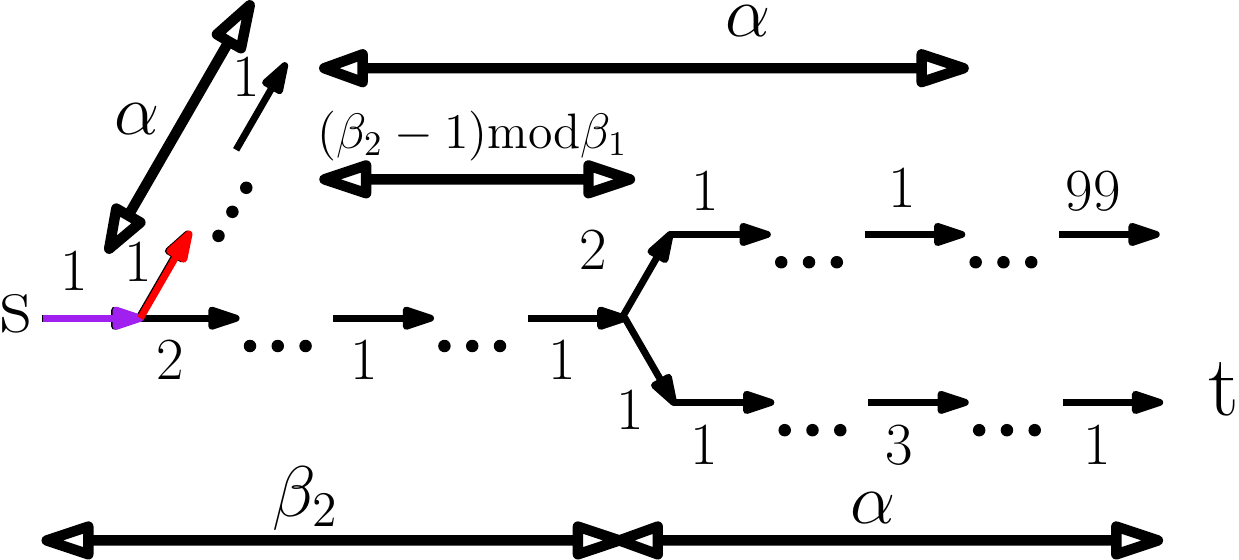}
		\caption{}\label{fig:6a}		
	\end{subfigure}
	\begin{subfigure}[t]{0.5\columnwidth}
		\centering
		\includegraphics[width=\columnwidth]{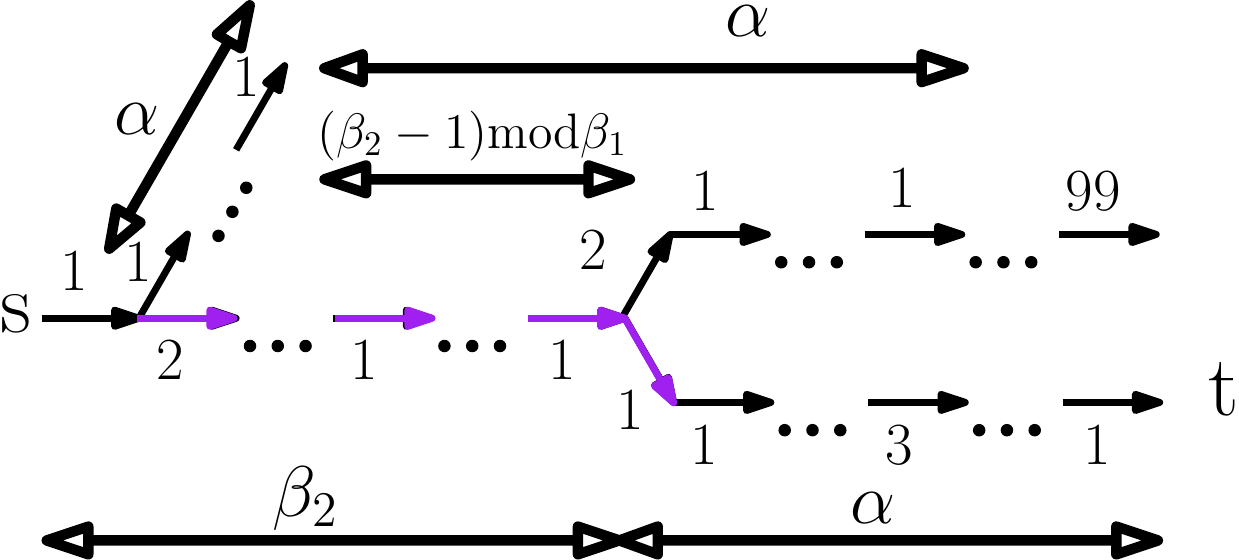}
		\caption{}\label{fig:6b}
	\end{subfigure}
	\begin{subfigure}[t]{0.5\columnwidth}
		\centering
		\includegraphics[width=\columnwidth]{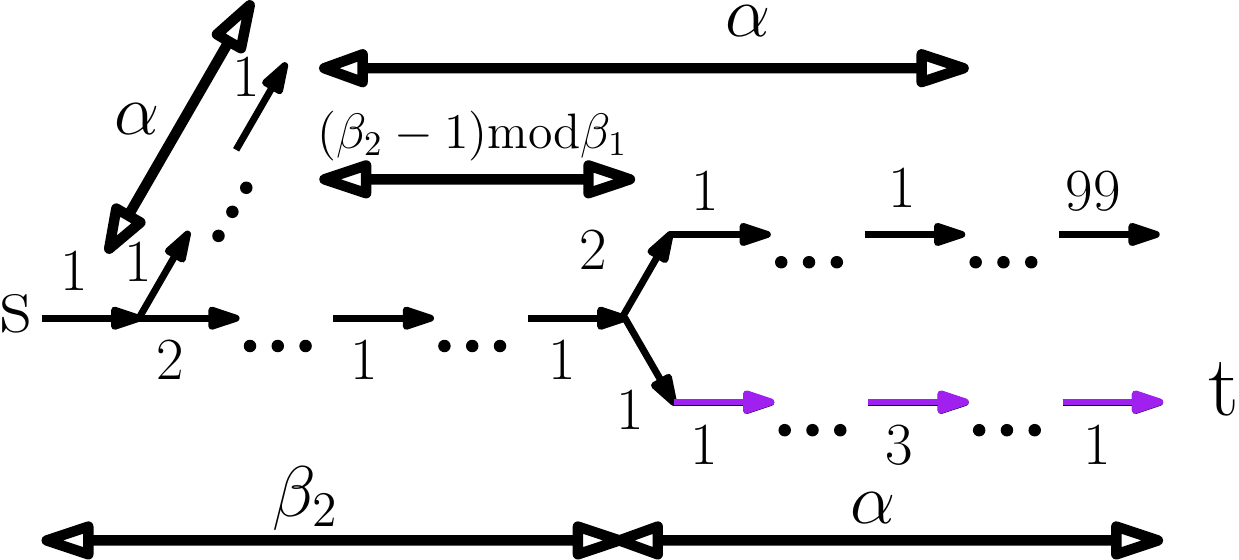}
		\caption{}\label{fig:6c}
	\end{subfigure}
	\caption{Construction used in Lemma~\ref{lem:smaller_greediness_generic}, when $\beta_2 \text{mod} \beta_1 = 0$.
	The flow of \ab with greediness $\beta_1$ is depicted in (\subref{fig:5a}-\subref{fig:5d}).
	Edges found to be collision free and in collision are depicted in blue and red, respectively.
	The flow of \ab with greediness~$\beta_2$ is depicted in (\subref{fig:6a}-\subref{fig:6c}).
	Edges found to be collision free and in collision are depicted in purple and red, respectively.
	All the edges evaluated by each algorithm are shown in~(\subref{fig:5d}) and~(\subref{fig:6c}).	
	}\label{fig:lemma6b}
	\vspace{-4mm}
\end{figure*}

\end{appendices}
}

\bibliographystyle{aaai}
\bibliography{bibliography}

\end{document}